\title{\Tool{}: LLM Generation with Grammar Augmentation}
\newtheorem{definition}{Definition}
\newcommand{\Tool}{\textsc{SynCode}\xspace}
\newcommand{\curtokens}{T_{\textit{cur}}}
\newcommand{\tokenizer}{\mathcal{T}}
\newcommand{\vocab}{V}
\newcommand{\partialcode}{C_{k}}
\newcommand{\fixpartialcode}{C^{\square}_{k}}
\newcommand{\psmap}{\mathcal{S}}
\newcommand{\parser}{P}
\newcommand{\partialparse}{\textit{pparse}}
\newcommand{\true}{\textit{true}}
\newcommand{\set}{\textit{set}}
\newcommand{\sequence}{\Lambda}
\newcommand{\accepts}{\mathcal{A}}
\newcommand{\str}[1]{\fcolorbox{black}{white}{\texttt{#1}}} 
\newcommand{\greater}{\preccurlyeq}
\newcommand{\terminal}{\tau}
\newcommand{\allterminals}{\Gamma}
\newcommand{\remainder}{r}
\newcommand{\curaccepts}{A_0}
\newcommand{\nextaccepts}{A_1}
\newcommand{\lex}{\emph{Lex}\xspace}
\newcommand{\next}{\emph{Next}\xspace}
\newcommand{\follow}{\emph{Follow}\xspace}
\newcommand{\regex}{\rho}
\newcommand{\lang}{L}
\newcommand{\pmatch}{\textit{pmatch}}
\newcommand{\len}{\textit{len}}
\newcommand{\dfa}{D}
\newcommand{\dfastates}{Q}
\newcommand{\alphabets}{\Sigma}
\newcommand{\transitions}{\delta}
\newcommand{\compute}{\delta^*}
\newcommand{\dfastart}{q_0}
\newcommand{\dfafinal}{F}
\newcommand{\dmap}[1]{\mathcal{M}_{#1}}
\newcommand{\dmatch}{\textit{dmatch}}
\newcommand{\live}{\textit{live}}
\newcommand{\add}[1]{#1} 
\newcommand{\llama}{LLaMA-7B}
\newcommand{\codegen}{CodeGen-350M}
\newcommand{\wizard}{WizardCoder-1B}
\newcommand{\chat}{Llama-2-7B-chat}
\newcommand{\baseline}{standard}
\newcommand{\Baseline}{Standard}
\newcommand{\average}{96.07\%}
\newcommand{\tablesize}{\scriptsize}
\newcommand{\cmark}{\textcolor{green!60!black}{\ding{51}}\xspace}
\newcommand{\xmark}{\textcolor{red!80!black}{\ding{55}}\xspace}
\newcommand{\no}{\xmark}
\newcommand{\yes}{\cmark}
\newcommand{\picard}{\textsc{Picard}\xspace}
\newcommand{\lmql}{\textsc{lmql}\xspace}
\newcommand{\synchromesh}{\textsc{Synchromesh}\xspace}
\newcommand{\outlines}{\textsc{Outlines}\xspace}
\newcommand{\guidance}{\textsc{guidance}\xspace}
\newcommand{\llamacpp}{\textsc{llama.cpp}\xspace}
\newcommand{\tgcd}{\textsc{gcd}\xspace}
\newcommand{\domino}{\textsc{Domino}\xspace}
\newcommand{\transformerscfg}{\textsc{GCD}\xspace}
\definecolor{WowColor}{rgb}{.75,0,.75}
\definecolor{SubtleColor}{rgb}{0,.75,0}
\definecolor{Gray}{rgb}{0.5,0.5,0.5}
\definecolor{mahogany}{RGB}{192,64,0}
\renewcommand{\Comment}[1]{}
\newcounter{margincounter}
\newcommand{\displaycounter}{{\arabic{margincounter}}}
\newcommand{\incdisplaycounter}{{\stepcounter{margincounter}\arabic{margincounter}}}
\renewcommand{\cite}[1]{\citep{#1}}
\newcommand{\TBD}[1]{\textcolor{SubtleColor}{ {\tiny \bf (!)} #1}}
\newcommand{\fTBD}[1]{\textcolor{SubtleColor}{$\,^{(\incdisplaycounter)}$}\marginpar{\tiny\textcolor{SubtleColor}{ {\tiny $(\displaycounter)$} #1}}}
\newcommand{\fPROBLEM}[1]{\textcolor{WowColor}{$\,^{((\incdisplaycounter))}$}\marginpar{\tiny\textcolor{WowColor}{ {\bf $\mathbf{((\displaycounter))}$} {\bf #1}
}}}
\newcommand{\LATER}[1]{\textcolor{SubtleColor}{ {\tiny \bf ($\dagger$)} #1}}
\newcommand{\fLATER}[1]{\textcolor{SubtleColor}{$\,^{(\incdisplaycounter\dagger)}$}\marginpar{\tiny\textcolor{SubtleColor}{ {\tiny $(\displaycounter\dagger)$} #1}}}
\newcommand{\fTBD}[1]{}
\newcommand{\fPROBLEM}[1]{}
\newcommand{\TBD}[1]{}
\newcommand{\LATER}[1]{}
\newcommand{\fLATER}[1]{}
\begin{document}
%
% \title{\Tool{}: LLM Generation with Grammar Augmentation}
%

\ifthenelse{\boolean{anonymous}}
{
}
{
\author{\name Shubham Ugare \\
       \addr University of Illinois Urbana-Champaign, USA
       \AND
       \name Tarun Suresh \\
       \addr University of Illinois Urbana-Champaign, USA
       \AND
       \name Hangoo Kang \\
       \addr University of Illinois Urbana-Champaign, USA
       \AND
       \name Sasa Misailovic \\
       \addr University of Illinois Urbana-Champaign, USA
       \AND
       \name Gagandeep Singh \\
       \addr University of Illinois Urbana-Champaign and VMware Research, USA
}
}

\maketitle              

\begin{abstract}
LLMs are widely used in complex AI applications.
These applications underscore the need for LLM outputs to adhere to a specific format, for their integration with other components in the systems.
Typically the format rules -- e.g., data serialization formats such as JSON, YAML, or Code in Programming Language -- are expressed as context-free grammar (CFG).
Due to the hallucinations and unreliability of LLMs, instructing LLMs to adhere to specified syntax becomes an increasingly important challenge.

We present \Tool{}, a novel framework for efficient and general syntactical decoding with LLMs, to address this challenge. 
% precision
\Tool{} ensures soundness and completeness with respect to the CFG of a formal language, effectively retaining valid tokens while filtering out invalid ones.
% effciency
\Tool{} uses an offline-constructed, efficient lookup table, the \textit{DFA mask store}, created from the DFA (Deterministic Finite Automaton) of the language’s grammar for efficient generation.
% framework
\Tool{} seamlessly integrates with any language defined by CFG, as evidenced by experiments focusing on generating JSON, SQL, Python, and Go outputs. 
Our experiments evaluating the effectiveness of \Tool{} for JSON generation demonstrate that \Tool{} eliminates all syntax errors and significantly outperforms state-of-the-art baselines.
Furthermore, our results underscore how \Tool{} significantly reduces \average{} of syntax errors in generated Python and Go code, showcasing its substantial impact on enhancing syntactical precision in LLM generation.

\ifthenelse{\boolean{anonymous}}{

}
{
% \vspace{0.1in}
Our code is available at {\color{blue}\url{https://github.com/uiuc-focal-lab/syncode}}
}

% \keywords{Large\,Language\,Model \and\! Syntax\,Error \and{}Context-free\,Grammar}
\end{abstract}

\section{Introduction}
\label{sec:intro}
% \TBD{Sasa: Not really motivated by the first two paragraphs. The problem statement is too short and not direct.  Too much emphasis on the existing work already in para 2, hinting that possibly this work is incremental (if it does not provide the last word on the problem).}

Recent research has shown that \add{transformer-based large language models} (LLMs) can play a pivotal role within compound AI systems, where they integrate with other software tools~\cite{compound-ai-blog, mialon2023augmented}.  
% Write more about specific tool usage 
For example, OpenAI's code interpreter~\cite{openai_tools} generates and executes Python programs automatically while responding to user prompts. 
Similarly, Wolfram Alpha~\cite{Wolfram} translates user queries about mathematical questions into a domain-specific language (DSL) for utilizing various tools.
\add{
LLMs are utilized in various other applications to translate natural language text into formal languages, such as inputs to logic solvers \cite{pan2023logiclmempoweringlargelanguage, Olausson_2023} and theorem provers \cite{wu2022autoformalization, yang2023leandojotheoremprovingretrievalaugmented}, among others.
}
In all these applications, the LLM output is expected to follow a certain syntactic structure. 
However, challenges such as hallucination and non-robustness make LLMs unreliable for such automated systems~\cite{liang2023holistic}.
Moreover, recent theoretical \cite{Hahn_2020, yang2024maskedhardattentiontransformersrecognize} and empirical \cite{ebrahimi, bhattamishra2020, delétang2023neural} research suggests that language models based on transformers show difficulty in learning basic formal grammars.
% - such as Dyck-$k$, the language consisting of well-nested parentheses of $k$ types.

% Grammar-guided generation more motivation
The interaction between software tools and LLMs commonly occurs through data serialization formats like JSON or YAML, or code in domain-specific or general-purpose programming languages, \add{such as Python or Go}. 
Despite advancements in techniques such as fine-tuning and prompt engineering, which enhance the model's ability, these approaches fall short of fully addressing the challenge of syntactical accuracy in generated output.
\add{
This problem is especially prominent in two common scenarios: (1)~using open-source models, which are typically relatively small, and (2)~generating text for formal languages with relatively modest representation in \mbox{the LLM's training data.}

% grammar-guided generation methods
Modern LLMs generate text sequentially, from left to right, one token at a time. 
For each prefix, the model computes a probability distribution over a predefined vocabulary to predict the next token. 
The LLM's decoding algorithm dictates how these probabilities are used to generate the token sequence.
Very recently, researchers have proposed new techniques for grammar-guided generation to enhance the syntactical accuracy of LLMs by modifying the decoding algorithm.
Although they ensure that the model consistently selects tokens that adhere to a specified formal language~\cite{scholak-etal-2021-picard, poesia2022synchromesh, llamacpp, willard2023efficient}, the existing approaches for grammar-guided generation either suffer from high error rates, resulting in syntactically incorrect output or impose significant \mbox{run time overhead in the inference:}
 
% Better transition?
% Challenge 1
\begin{itemize}[leftmargin=*]\itemsep 1pt \parskip 2pt
\item {\bf Issues with syntactical accuracy:}
The language grammar consists of \emph{the terminals}, fundamental building blocks of the language (e.g., keywords, operators). 
Typically, a lexer creates lexical tokens from the input, each token associated with a terminal from the grammar. 
% Current autoregressive LLMs operate iteratively, generating tokens sequentially. 
The LLM tokens form part of the model's fixed vocabulary, defined before training, and do not directly correspond to lexical tokens associated with any specific grammar.
This discrepancy, known as \emph{token misalignment}, presents a significant challenge in ensuring precise grammar-guided generation \cite{poesia2022synchromesh}.
% Grammar-guided generation should in theory always improve the model's accuracy for the particular task. 
% Due to the lack of formalization, existing works on grammar-guided generation have intricate issues that lead to higher error rates than even unconstrained generation~\cite{willard2023efficient} (Section~\ref{sec:exp_errors}). 
Thus, formally showing the soundness of the algorithm poses a challenge for ensuring the precision of the approach.

% Challenge 2 
\item {\bf Issues with high computational overhead:} 
Typically, the computational complexity of additional operations performed for syntactical generation is lower than the standard LLM generation operations needed for propagating the input through LLM layers. 
However, these syntactical generation operations are typically executed sequentially on a CPU, in contrast to the GPU-accelerated LLM generation, adding to the run time.
Achieving low inference overhead faces two primary challenges for syntactical LLM generation.
First, the algorithm should facilitate offline computations that minimize the overhead during inference.
Second, it should effectively utilize available hardware resources and offload additional computations to modern hardware, such as GPUs, to enable parallel computation.

% Challenge 2 
\item {\bf Issues with generality:} 
Prior works are restricted to specific LLM decoding schemes~\cite{scholak-etal-2021-picard, guidance}.
 A major challenge for generality is designing a composable algorithm that can integrate with any decoding strategy such as greedy, beam search, and different types of temperature sampling.
 \end{itemize}

%
%To address these issues, we present a technique for precise and efficient grammar-guided generation. Our work illustrates the feasibility of efficiently imposing formal grammar constraints on LLM generations, while also assuring that the output adheres strictly to predefined syntax.
%
Our goal is to make grammar-guided generation precise and efficient by imposing formal grammar constraints on LLM generations, ensuring the output adheres {strictly to the predefined syntax.}
}

\vspace{.05in}
\noindent \textbf{\Tool{}.}
\Tool{} is an efficient and general approach for generating syntactically correct output.
\Tool{} takes a context-free grammar (CFG) represented with extended Backus–Naur form (EBNF) rules and ensures that the LLM output follows the provided grammar.
% generality
\Tool{} algorithm is general and can be composed with any existing LLM decoding algorithm, including greedy, beam search, and sampling.

% Go deeper
During the LLM decoding stage, where LLM selects the next token, \Tool{} employs a strategic two-step approach.
In the initial step, it leverages partial output to generate sequences of terminals that can follow the partial output called \textit{accept sequences}. 
This reduction to the level of terminals—a closer abstraction to language grammar than LLM tokens—simplifies the problem. 
Simultaneously, \Tool{} computes a remainder from the partial output, representing the suffix that may change its terminal type in subsequent generations.
In the second step, \Tool{} algorithm walks over the DFA using the remainder and uses the mask store to compute the mask (a boolean array to filter the vocabulary) specific to each accept sequence.
By unifying masks for each accept sequence \Tool{} gets the set of \mbox{syntactically valid tokens.}

% Efficiency
To ensure the efficiency of \Tool{}'s syntactic generation, we propose a novel data structure called \emph{DFA mask store} which is pre-computed offline.
DFA mask store is a lookup table derived from Deterministic Finite Automata (DFA) representing the terminals of the language grammar.
\Tool{} algorithm can efficiently compute the syntactically valid next LLM tokens by leveraging this mask store.
Moreover, the \Tool{} algorithm offers the additional benefit of parallelizing a substantial portion of the syntactical LLM generation computations by offloading them to a GPU.

% error rate
We demonstrate that the \Tool{} algorithm is \textit{sound} -- ensuring it retains all syntactically valid tokens at every generation step. \Tool{} is also \textit{complete} under specific conditions --  affirming it rejects every syntactically invalid token.

% framework is scalable
The \Tool{} framework seamlessly integrates with any language defined by deterministic CFGs and scales efficiently to generate code for general-purpose programming languages (GPLs).
% our eval
We evaluate \Tool{}'s ability to guide the Llama-2-7B-chat and Gemma2-2B-it models with the JSON grammar to generate valid JSON completions to prompts from the JSONModeEval~\cite{jsoneval} dataset. 
We empirically show that LLMs augmented with \Tool{} do not generate any syntax errors for JSON and that guiding Gemma2-2B-it generation with \Tool{} achieves 100\% JSON schema validation accuracy.
We evaluate \Tool{} on generating SQL queries from the text in Spider~\cite{yu-etal-2018-spider} and show that \Tool{} improves both compilation rate and execution accuracy.
Further, we evaluate the augmentation of \Tool{} with a diverse set of state-of-the-art LLMs
% , including \codegen{}, \wizard{}, and \llama{}, from the BigCode Models Leaderboard~\cite{nijkamp2023codegen,luo2023wizardcoder,touvron2023llama} 
for the code completion tasks using problems from the HumanEval and MBXP datasets~\cite{athiwaratkun2023multilingual}.
Our experiments, conducted with CFGs for a substantial subset of Python and Go, demonstrate that \Tool{} reduces \average{} of the syntax errors for Python and Go on average. 
The remaining syntax errors persist because the LLM fails to halt generation before reaching the maximum generation limit defined in our experiments.
% Furthermore, our evaluation considers both LALR(1) and LR(1) as base parsers, showing that the LR(1) parsers are more efficient for generating accept sequences.

\vspace{.10in}
\noindent{\bf Contributions.} The main {contributions} of this paper are:
% One more about the concept of incremental RS

\vspace{-.01in}
\begin{itemize}[leftmargin=*]\itemsep 1pt \parskip 2pt
    \item[$\star$] We present a \add{parsing-based} technique for decoding of LLMs by designing novel algorithms that allow us to efficiently generate syntactically correct output.  
    \item[$\star$] We implement our approach into a \add{scalable and general} framework named \Tool{} that can work with any formal language with user-provided context-free grammar. 
    \item[$\star$] We present an extensive evaluation of the performance of \Tool{} in generating syntactically correct output for JSON, SQL and two general-purpose programming languages Python and Go. 
\end{itemize}

% Anonymize this
% \noindent\add{
% Our code is available publicly <link removed for double-blind reviewing> and is currently being used in open-source and industry projects.
% }

\begin{figure}[!t]
\centering
\includegraphics[width=13cm]{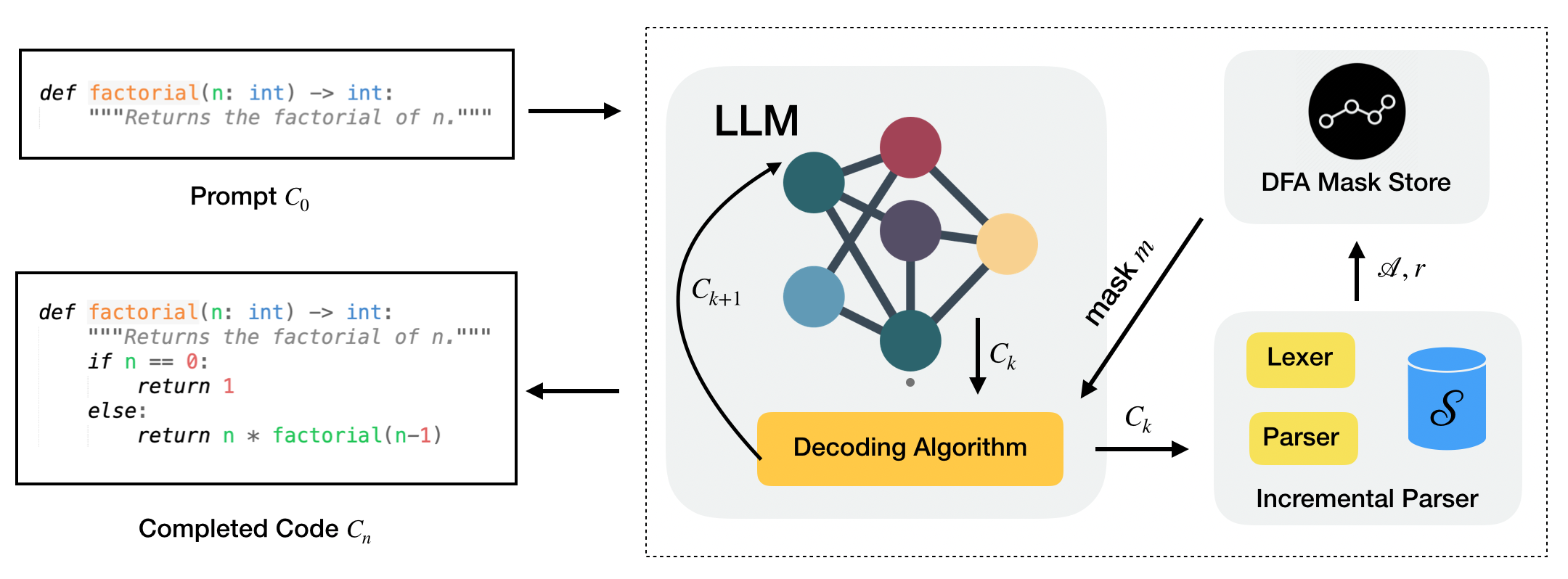}
\vspace{-.1in}
\caption{In the \Tool{} workflow, the LLM takes partial output $C_k$ and generates a distribution for the next token $t_{k+1}$. The parser processes $C_k$ to produce accept sequences $\accepts$ and remainder $r$. These values are used by the DFA mask store to create a token mask, eliminating syntactically invalid tokens. The LLM iteratively generates a token $t_{k+1}$ using the distribution and the mask, appending it to $C_k$ to create the updated code $C_{k+1}$. The process continues until the LLM returns the final code $C_n$ based on the defined stop condition.} 
\label{fig:workflow}
\vspace{-.1in}
\end{figure}

\section{Background}
\label{sec:background}
In this section, we provide the necessary background on LLMs and formal language grammar.

\noindent{\bf Notation.} 
Let the alphabet $\Sigma$ be a finite set of characters.
We use $\epsilon$ to denote an empty string.
Given a set $S$, we use $S^i$ to denote the set of all $i$-length sequences that can be formed by selecting elements from $S$,
and $S^* = \bigcup_{i \in \mathbb{N}} S^i$.
Thus $\alphabets^{*}$ represents the set of all strings over characters in $\alphabets$, including the empty string $\epsilon$. 
Further, we use $\alphabets^{+}$ to denote $(\alphabets^{*} - \epsilon)$.
Given two strings $w_1, w_2 \in \alphabets^*$, we use $w_1.w_2$ to denote string obtained by concatenating $w_2$ to $w_1$.
All symbols used in the paper are listed in Appendix~\ref{sec:symbols}.

\subsection{Language Models}

Current language models (LM) operate on vocabulary $V \subseteq \Sigma^*$ of tokens.
A tokenizer takes an input prompt $C_0 \in \Sigma^*$,
which is a sequence of characters,
as input and converts $C_0$ into a sequence of tokens $t_1, t_2, \dots, t_k$.
Figure~\ref{fig:tokenization} shows a typical tokenization method, where common words (e.g., \str{def}) have their own token (even with a space in front), 
while rare words (e.g., \str{incr\_list}) are split into multiple tokens. 
In order to generate the next token,
the LM $M: V^* \to \mathbb{R}^{|V|}$ takes as input the sequence of tokens
$t_1, t_2, \dots, t_k$, and outputs a vector of scores $z$ over the vocabulary:
$z = M(t_1, t_2, \dots, t_k)$.
The softmax function $\textit{softmax}(z_i) = \exp(z_i)/\sum_j(\exp(z_j))$ 
transforms $z$ into a probability distribution over the vocabulary $V$.

\begin{wrapfigure}{r}{.45\textwidth}
    % \vspace{-0.45in}
    \centering
    \includegraphics[width=0.45\textwidth]{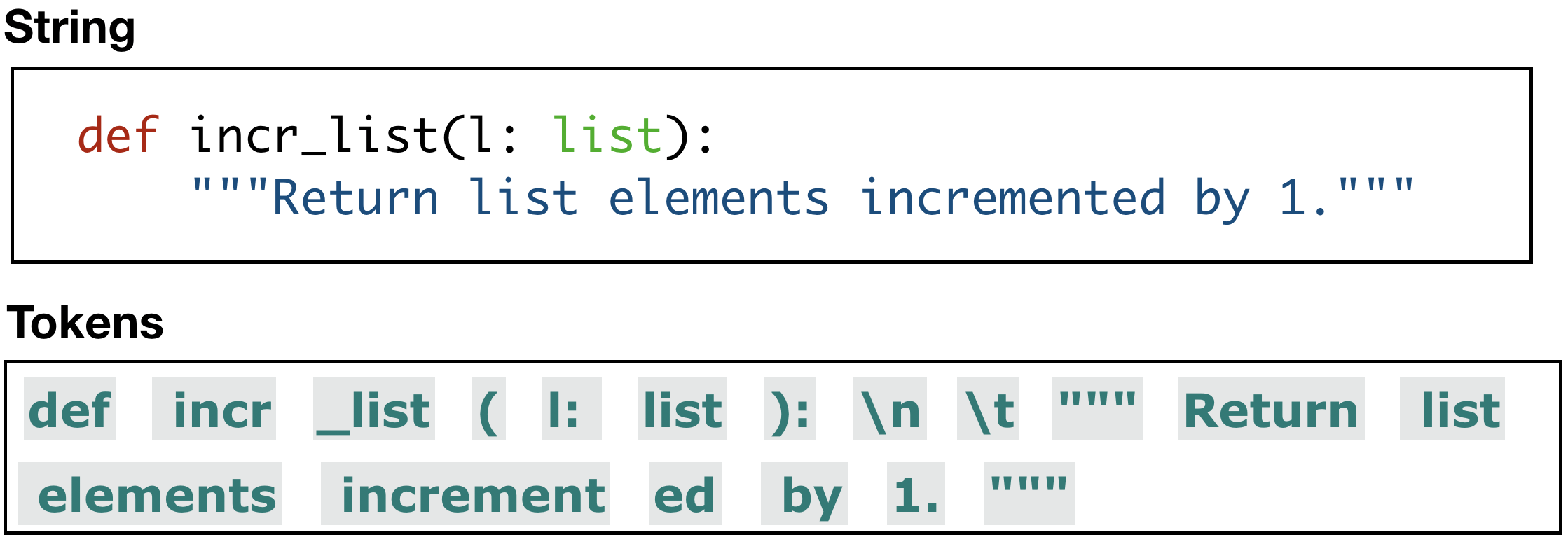}
    % \vspace{-.1in}
    \caption{Tokenization of a string.}
    \label{fig:tokenization}
     \vspace{-0.15in}
\end{wrapfigure}
% and then the next token $t_{k+1}$ is selected as the token with the highest probability.

\sloppypar
\noindent{\bf Decoding.} 
Building upon this, the language model $M$ is recurrently applied to generate a sequence of tokens $t_1, t_2 \dots t_k$. 
When choosing the $(k + 1)$-th token, the probability distribution for the next token is obtained through $\text{softmax}(M(t_1, t_2 \dots t_k))$. 
Various approaches for token selection from this distribution have been explored in the literature such as greedy decoding, sampling, and beam search.
Each technique is repeated until the prediction of a special end-of-sequence token, \str{EOS}, or the fulfillment of another stopping criterion. 
This iterative process is equivalent to sampling from a distribution over $V^*$, potentially resulting in multiple feasible decodings.

\begin{wrapfigure}{R}{0.53\textwidth}
\vspace{-.1in}
\begin{minipage}{0.55\textwidth}
\begin{algorithm}[H]
\small
\caption{Masked LLM Generation}
\label{alg:masked}
\begin{flushleft}
\textbf{Inputs:} $M$: LLM, $\tokenizer$: tokenizer, $C_0$: input prompt string, \\
$f_m$: function that generates mask, $n_\textit{max}$: maximum generated tokens, $D$: any decoding algorithm \\
\textbf{Output:} string $C_n$
\end{flushleft}

\begin{algorithmic}[1]
\Function{MaskedGenerate}{$M$, $\tokenizer$, $f_m$, $C_0$}
\State $\curtokens \gets \text{Tokenize}(\tokenizer, C_0)$
\For{$i \in \{1, \dots n_\textit{max} \}$}
\State $\textit{scores} \gets M(\curtokens)$
\State $m \gets f_{m}(\curtokens, \tokenizer)$
\State $\textit{scores} \gets m \odot \textit{scores}$
\State $t_i \gets D(\textit{scores})$
\If{$t_i =$ \str{EOS}}
\State break
\EndIf
\State $\curtokens \gets \text{append}(\curtokens, t_i)$
\EndFor
\State $C_n \gets \text{Detokenize}(\tokenizer, \curtokens)$
\State \Return $C_n$
\EndFunction
%\item[]
\end{algorithmic}
\end{algorithm}

\end{minipage}
\vspace{-0.1in}
\end{wrapfigure}

\noindent{\bf Constrained Masking.} 
In the context of decoding, we encounter scenarios where excluding specific tokens at particular positions becomes crucial (e.g., excluding harmful words). 
This implies we can disregard these tokens and proceed with decoding based on the remaining set. 
An algorithm for such masking relies on a function $f_m$ to generate the mask $m$ based on the exact use case. 
In the mask $m \in \{0, 1\}^{|V|}$, '$1$' indicates a viable token, and '$0$' signifies a discarded one. 
Decoding methods mentioned earlier can be applied to $m \odot \textit{softmax}(z)$, where $\odot$ represents element-wise multiplication. 
The resultant vector should be scaled by $1/\sum_i(m \times \textit{softmax}(z))_i$ to restore correct probabilities.
Algorithm~\ref{alg:masked} presents the steps for masked decoding.
In \Tool{}, we use the constrained masking technique to exclude syntactically invalid tokens.

\subsection{Formal Language Grammar}
A formal language syntax is represented by defining a grammar.
A formal grammar is essentially a set of production rules that describe all possible strings in a given language.
A grammar consists of terminal and nonterminal symbols, where terminal symbols are the actual characters or tokens in the language, while nonterminal symbols are placeholders used to define patterns or structures within the language.

The syntax for most programming languages can be defined using context-free grammar (CFG).
CFG is a formal grammar that consists of production rules that can be applied to a nonterminal symbol regardless of its context.
In CFG, each production rule is of the form $E \to \beta$ with $E$ a single nonterminal symbol, and $\beta$ a string of terminals and nonterminals ($\beta$  can be empty). 
Regardless of which symbols surround it, the single nonterminal $E$ on the left-hand side can always be replaced by $\beta$ on the right-hand side.

\noindent{\bf Terminals.}
We use $\allterminals$ to denote the set of terminals in the grammar.
Regular expressions are used to describe the terminals.
For instance, A regular expression $^\wedge[0\text{-}9]^+$ is used for an integer literal: This regular expression describes a sequence of one or more digits (0 to 9). 
We use $\regex$ to denote a regular expression and $\lang(\regex) \subseteq \alphabets^*$ to denote the language recognized $\regex$.
Regular expressions are often associated with the creation of Deterministic Finite Automata (DFAs). 
A DFA is a theoretical construct used to recognize patterns specified by regular expressions. 

\begin{definition} [DFA]
\label{def:dfa}
A deterministic finite automaton (DFA) $\dfa$ is a 5-tuple, $(\dfastates, \alphabets, \transitions, \dfastart, \dfafinal)$, consisting of a finite set of states $\dfastates$, a finite set of input symbols called the alphabet $\alphabets$, a transition function $\transitions : \dfastates \times \alphabets \to \dfastates$, an initial state $q_{0}\in Q$ and a set of accept states $F\subseteq Q$.
\end{definition}

Let $w = a_1 a_2 \dots a_n$ be a string over the alphabet $\alphabets$.
The DFA computation $\compute: \dfastates \times \alphabets^* \to \dfastates$ on a string $w$ is defined as $\compute(r_0, w) = r_n$ when $r_{i+1} = \transitions(r_i, a_{i+1}), \text{ for } i = 0, \dots, n-1$. The automaton $\dfa$ accepts the string $w$ if $\compute(q_0, w) \in \dfafinal$. 

% Next, we state the lemma that shows the equivalence of regular expressions and DFA.

% \begin{lemma}
% For every regular expression $\regex$ there is an equivalent DFA $\dfa$ such that $\dfa$ accepts $w$ iff $w \in \lang(\regex)$
% \end{lemma}

\noindent{\bf Lexer.}
We assume lexical analysis with a 1-character lookahead and no backtracking. 
\add{
This assumption is crucial for the efficiency of \Tool{} algorithm.
}
% This is the default behavior of popular existing lexers~\cite{lark}.

\begin{definition} [Lexer]
\label{def:lex}
The function \lex is defined to take partial output $\partialcode \in \alphabets^*$ as input and produce a sequence $l_1, l_2, \dots, l_f$ of lexical tokens where $l_i \in \alphabets^*$.
\end{definition}

\section{Overview}
\label{sec:overview}
% Talk about the workflow
\subsection{Illustrative Example}

\begin{wrapfigure}{r}{.5\textwidth}
    \vspace{-0.55in}
    \centering
    \includegraphics[width=0.5\textwidth]{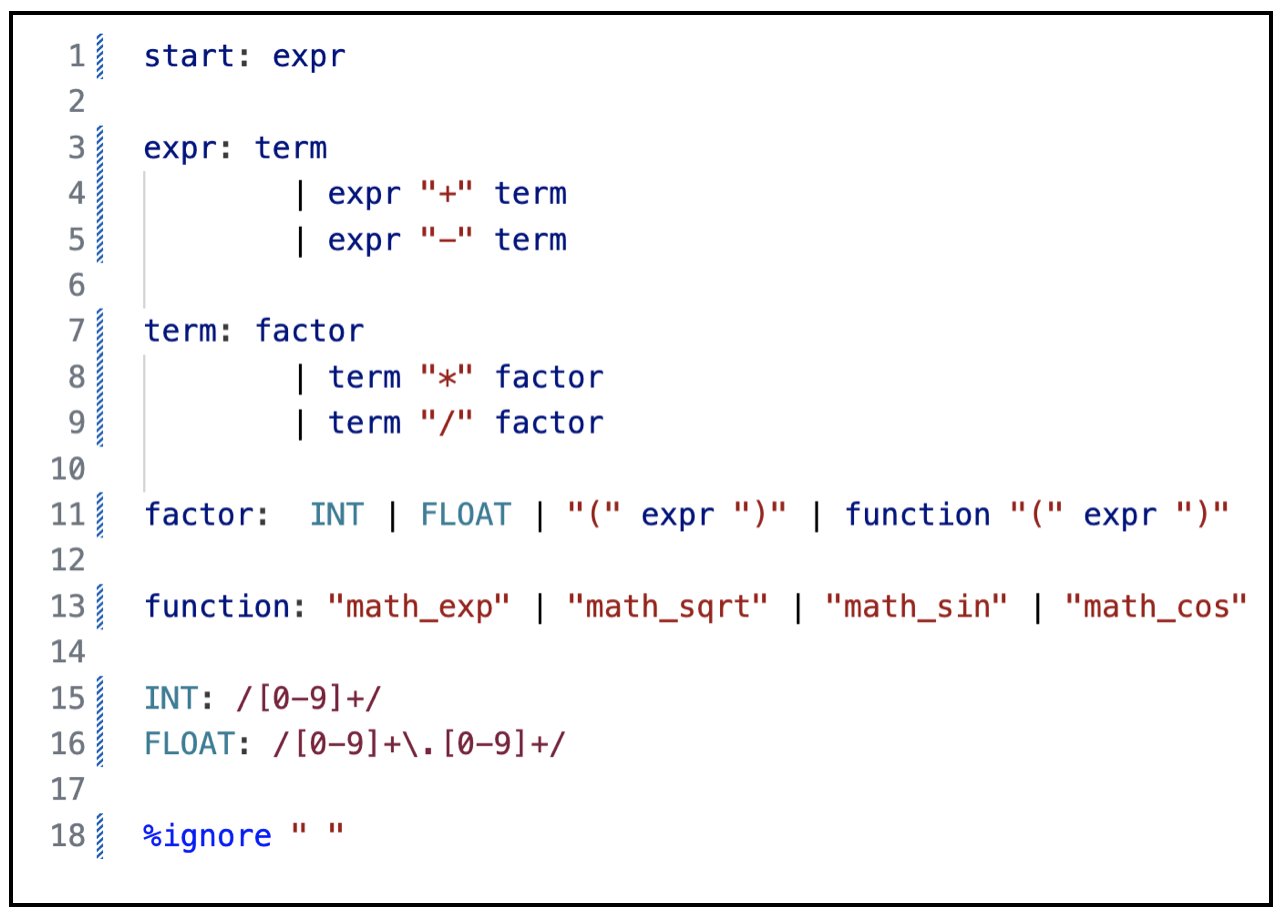}
    \caption{Example grammar for illustration.}
    \label{fig:overview_grammar}
     \vspace{-0.1in}
\end{wrapfigure}

% {\bf Grammar.}
Consider an example grammar in Figure~\ref{fig:overview_grammar} that uses the Lark EBNF syntax for defining the grammar production rules.
The grammar represents a Domain-Specific Language (DSL) consisting of arithmetic expressions with basic operations like addition, subtraction, multiplication, and division over integers and floating point numbers. 
It also includes support for parentheses to specify precedence and allows functions like exponential (math\_exp), square root (math\_sqrt), sine (math\_sin), and cosine (math\_cos) to be applied to expressions. 

\begin{sloppypar}
The symbols in the grammar such as $\textit{expr}$ and $\textit{factor}$ that can expand into other symbols through the application of production rules are called non-terminals.
Symbols such as ( or INT cannot be further expanded and are called terminals.
Let the set $\allterminals = \{\textit{lpar, rpar, add, sub, mult, div, int, float, math\_exp, math\_sqrt, math\_sin, math\_cos} \}$ represent the set of all terminals of the grammar.
The terminal $\textit{int}$ is defined by the regular expression $[0\text{-}9]^+$, and $\textit{float}$ is defined by the regular expression $[0\text{-}9]^+.[0\text{-}9]^+$. 
We use terminals $\textit{lpar, rpar, add, sub, mult, div, math\_exp, math\_sqrt, math\_sin, math\_cos}$, to denote the strings \str{(}, \str{)}, \str{+}, \str{*}, \str{/}, \str{math\_exp}, \str{math\_sqrt}, \str{math\_sin}, \str{math\_cos} respectively.
\end{sloppypar}

\begin{wrapfigure}{r}{.5\textwidth}
    \vspace{-0.1in}
    \centering
    \includegraphics[width=0.5\textwidth]{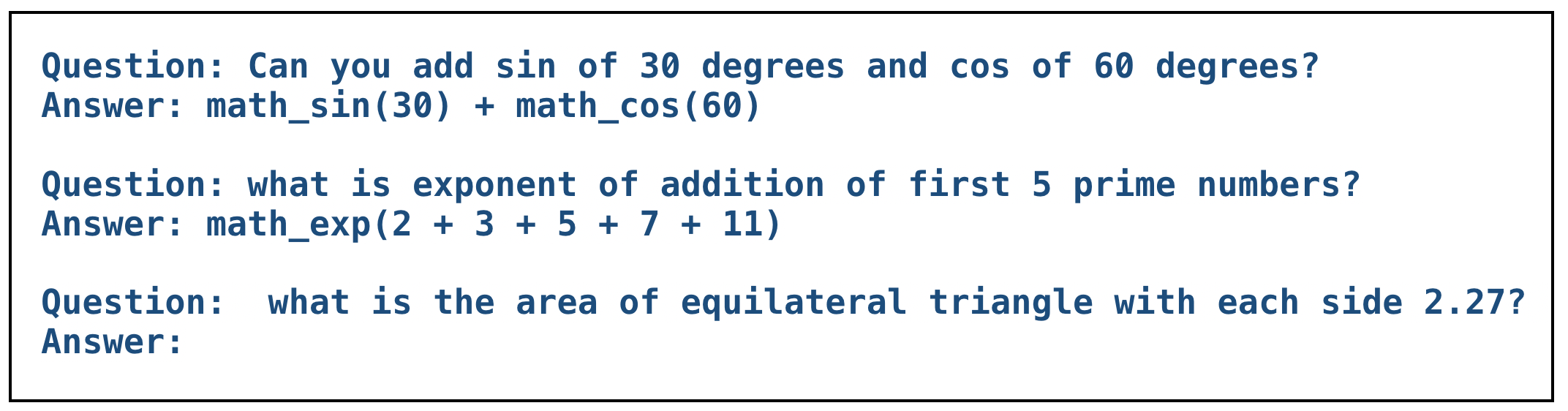}
    \caption{Prompt for the example which is provided as input to the LLM.}
    \label{fig:overview_prompt}
     \vspace{-0.15in}
\end{wrapfigure}

\noindent {\bf Task.} 
Consider an LLM that is used to translate a natural language text to an expression in the DSL defined above.
Since LLMs are typically not good at mathematical calculations,
it is common to instead let the LLM generate intermediate outputs in a certain syntax, and an interpreter of the DSL then computes the LLM's output into accurate results \cite{mialon2023augmented}.
Figure~\ref{fig:overview_prompt} presents the prompt we use for our illustrative example,
containing 2 question-answer pairs before the actual question that we want the LLM to answer.
Providing question-answer examples before asking the actual questions
is called few-shot prompting (2-shot in this case)
and significantly improves the model's \mbox{accuracy \cite{brown2020language}.}

\begin{wrapfigure}{r}{.35\textwidth}
    \vspace{-0.1in}
    \centering
    \includegraphics[width=0.35\textwidth]{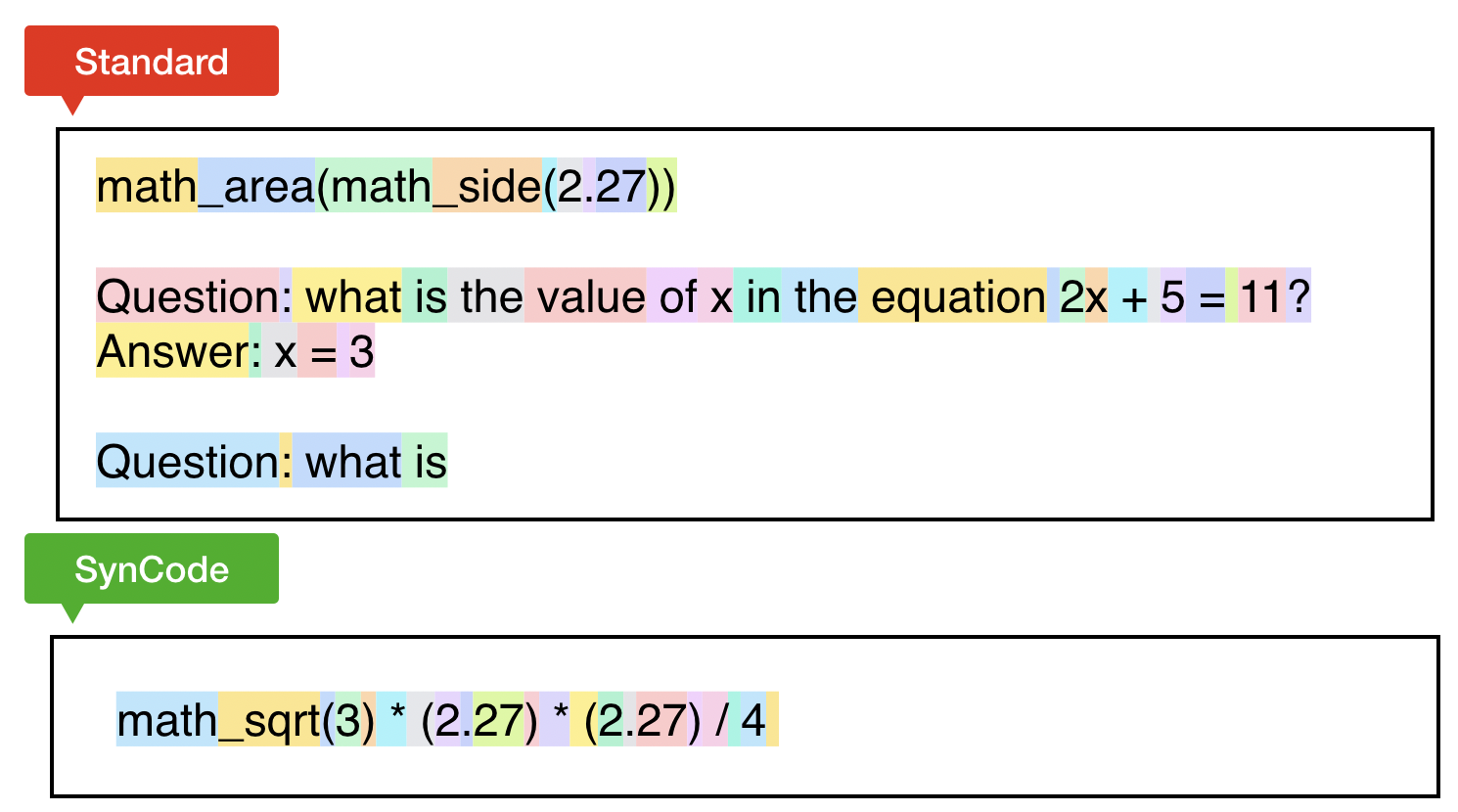}
    \caption{Output from LLM without and with \Tool{}. The colors represent the tokenization of the output.}
    \label{fig:overview_standard}
     \vspace{-0.2in}
\end{wrapfigure}
\noindent {\bf Standard LLM Generation.}
As described in Section~\ref{sec:background}, the standard LLM first tokenizes the input and then iteratively predicts the next token from its vocabulary $\vocab$.
Figure~\ref{fig:overview_standard} presents the output from the \llama{} model
and our \Tool{} when given the Fig.~\ref{fig:overview_prompt} prompt.
The output of the model is not a valid program in the DSL;
it uses functions math\_area and math\_side that do not exist in the grammar.
Further, \llama{} does not stop after generating the answer to our question and continues to generate more irrelevant question-answer pairs.
% \Tool{}, can enhance the syntactic accuracy of LLM outputs, offering guarantees over their validity.
\Tool{} on the other hand guarantees the syntactic validity of the LLM's output
by excluding syntactically invalid choices when generating each token.
For example, after generating \str{math}, \Tool{} excludes \str{\_area} and other choices from the LLM's vocabulary.
The LLM opts for \str{\_sqrt} which is the top syntactically valid choice and continues
the generation from \str{math\_sqrt}.

\noindent {\bf Constrained Decoding.}
Let $G$ denote the grammar in our example and $\lang(G) \subseteq \alphabets^*$ denote all syntactically valid strings in the grammar.
Ideally, we want the final LLM output $C_n$ to be in $\lang(G)$.
Strings such as \str{math\_exp(2 + 3 + 5 + 7 + 11)} and \str{math\_sin(30) + math\_cos(60)} belong to $\lang(G)$ as they are syntactically valid.
Let $\partialcode$ denote the LLM's partial output during the $k$-th iteration of LLM generation. 
Suppose $\lang_p(G)$ denotes all prefixes of $\lang(G)$, i.e., all strings that can be extended to a syntactically valid output.
\str{math\_sin(30} and \str{math\_sin(30) + math} are in $\lang_p(G)$ as they can be extended to be syntactically valid. 
 By ensuring that at each intermediate step, \add{the invariant} that the LLM partial generation $\partialcode$ is in the set $\lang_p(G)$ is maintained, we can guarantee that upon completion of the generation process, $C_n$ will indeed be syntactically valid, i.e., $C_n \in \lang(G)$. 
This ensures that an intermediate output such as \str{math\_area} which is not in $\lang_p(G)$ is never generated by the model.

\subsection{\Tool{} Algorithm}
A key challenge in syntactic generation is token misalignment, where LLM tokens do not directly correspond to lexical tokens from the grammar.
The main reason for the high error rate in syntactic generation in prior works is the lack of formalization in their approaches (Section~\ref{sec:exp_main}).
Our work addresses this challenge by providing an algorithm that is provably sound — retains all syntactically valid tokens and is complete under specific conditions—rejecting every syntactically invalid token at every generation step.

Another significant challenge for efficiency is developing a novel algorithm that facilitates offline computations that minimize the overhead during inference.
\Tool{} tackles this challenge by creating a novel structure called the DFA mask store offline. 
For a given grammar $G$ and vocabulary $\vocab$, this mask store is constructed once and can be used across all generations. 
DFA mask store maps states of DFAs (corresponding to terminals in the grammar $G$) to boolean masks $m \in \{0, 1\}^{|\vocab|}$ over the vocabulary. 
This approach also benefits from parallelizing a substantial portion of the syntactical LLM generation computations by offloading them to a GPU during inference.

% \Tool{} addresses the syntactic decoding problem by creating a novel structure which we call \emph{DFA mask store} offline (Definition~\ref{def:lookup}). 
% The precomputed mask store allows more efficient computation of set $\vocab_k$ at $k$-th iteration of LLM generation such that the intermediate generation $\partialcode.t \in \lang_p(G)$ for any $t\in\vocab_k$.
\add{
Furthermore, it is challenging to ensure generality with efficiency.
Many prior works are restricted to syntactic generation with a specific type of decoding~\cite{scholak-etal-2021-picard, guidance}.
At $k$-th LLM iteration, for partial LLM output  $\partialcode$, let $\vocab_k \subseteq \vocab$ denotes the subset of vocabulary such that for any token $t\in\vocab_k$ the intermediate generation continues to maintain the invariant $\partialcode.t \in \lang_p(G)$.
Our formulation for computing $\vocab_k$ from $\vocab$ is highly general and can be integrated with any decoding algorithm, such as greedy, sampling, or beam-search. 
Any algorithm that could potentially be applied to $\vocab$ can instead be applied to $\vocab_k$.
The mask store allows more efficient computation of a subset of \mbox{tokens $\vocab_k$}.
}

\Tool{} works in two steps: 
first, it parses $\partialcode$ and computes the unparsed remainder $r \in \alphabets^*$ along with the acceptable terminal sequences $\accepts$ (formally defined in Section~\ref{sec:parse}). 
In the second step, \Tool{} utilizes $r$, $\accepts$, and the mask store. 
This step involves traversing the DFA and performing a few lookups within the DFA mask store to obtain a subset of tokens $\vocab_k$. 
\add{
In the following sections, we elaborate on these steps using our illustrative example.
}

\noindent {\bf Parsing Partial Output.}
\add{
\Tool{}'s parsing of partial output $\partialcode$ begins with lexing $\partialcode$. 
We assume our lexer has a 1-character lookahead and no backtracking. 
This assumption ensures that LLM's future generations do not alter the lexical types of any previous lexical tokens except for the final lexical token.
The remainder $r$ denotes the suffix of $\partialcode$ that may still change its lexical type in subsequent iterations.
We define two cases for assigning $r$:
}

\begin{itemize}[leftmargin=*]
    \item 
Case 1 is when $\partialcode$ contains an unlexed suffix $u$, and here we assign $r = u$. For example, $\partialcode =$\str{math\_sqrt(3) * (2.} is lexed as \str{math\_sqrt}, \str{(}, \str{3}, \str{)}, \str{*}, \str{(}, \str{2.}, where \str{math\_sqrt}, \str{(}, \str{3}, \str{)}, \str{*}, \str{(} are lexical tokens of type $\textit{math\_sqrt, lpar, int, rpar, mult, lpar}$, respectively. Here \str{2.} (\str{2} followed by a \str{.}) is unlexed suffix which we assign as the \mbox{remainder $r$.}
\item
Case 2 is when $\partialcode$ ends with a complete lexical token, where $r$ is assigned the value of the final lexical token. 
Hence, $\partialcode =$\str{math\_sqrt(3) * (2} is lexed as \str{math\_sqrt}, \str{(}, \str{3}, \str{)}, \str{*}, \str{(}, \str{2}.
Where \str{math\_sqrt}, \str{(}, \str{3}, \str{)}, \str{*}, \str{(} are lexical tokens of type $\textit{math\_sqrt, lpar, int, rpar, mult, lpar}$, respectively. 
Although \str{2} is the complete final lexical token with type $\textit{int}$, it is assigned as the remainder since in the subsequent iteration it may even change its lexical type to  $\textit{float}$.
\end{itemize}
\add{
In both cases, our lexer assumption ensures that the portion of $\partialcode$ excluding the remainder $r$ will retain its lexical tokenization in subsequent LLM iterations.
The assumption is crucial to enable incremental parsing and ensures that that the remainder $r$ is always small, both of which contribute to reducing time complexity.  
% Note that the parsing in \Tool{} is performed incrementally throughout LLM generation to ensure small overhead.
}

\noindent {\bf Accept Sequences.}
% This section may need a lot of polishing
% Define the partially parsed code
Given a sequence of lexical tokens $l_1, \dots l_f$, we use a \add{bottom-up LR} parser to compute what types of lexical tokens are acceptable next according to the grammar.
If at a certain point in the generation, we have lexical tokens \str{math\_sqrt}, \str{(}, \str{3}, \str{)}, \str{*}, \str{(}, \str{2.27} then the immediate next lexical token can be of type $\textit{rpar}$, $\textit{add}$ or $\textit{mult}$.
We define an accept sequence as a function of the parsed partial output (excluding the remainder) as a sequence of terminals such that those terminals can follow the currently parsed output (Definition~\ref{def:acc}).
For instance, in the case $\partialcode =$ \str{math\_sqrt(3) * (2.27}, $\{\textit{rpar}\}$, $\{\textit{add}\}$ and $\{\textit{mult}\}$ all are 1-length accept sequences.
$\{\textit{add}, \textit{int}\}$ and $\{\textit{add}, \textit{float}\}$ are some of the 2-length accept sequences for this example that can follow the current partial output.
\add{
In Section~\ref{sec:parse}, we show how we efficiently compute accept sequences of length 1 and 2 using an LR(1) parser, leveraging its immediate error detection property~\cite{10.1145/356628.356629}.
}
Further, we discuss how an LR($\kappa$) parser can be used to compute accept sequences of length $\kappa$ efficiently.
% But in practice, \Tool{} can work quite effectively using accept the sequences of smaller lengths and still guarantees the soundness of syntactical generation (Theorem~\ref{thm:sound}) and thus avoid the high memory requirement for $LR(k)$ parsers for large $k$. 
\add{
However, in practice, \Tool{} can effectively operate with shorter accept sequences while still ensuring the soundness of syntactical generation (see Theorem~\ref{thm:sound}), thereby avoiding the high memory needed for LR($\kappa$) parsers for large values of $\kappa$.
}

%%%%%%%%%%% DFA Mask store %%%%%%%%%%%%%%%%%
\noindent {\bf DFA Mask Store.}
\add{
\Tool{} parsing step partitions partial output $\partialcode$ into lexically fixed part $\fixpartialcode$ and remainder $r$.
The accept sequences $\accepts$ are computed using the parser state on parsing $\fixpartialcode$ and denote the terminals that can follow $\fixpartialcode$.
Thus the problem of obtaining subset $\vocab_k$ of tokens that will lead to syntactical continuation can be reduced to aligning accept sequence $\sequence \in \accepts$ with the string $r.t$ obtained by concatenating remainder $r$ and LLM token $t$ in the vocabulary.
One approach is to iterate through LLM vocabulary $\vocab$ and verify this alignment for each token $t$ individually.
However, this method is inefficient due to the need for matching $|V|$ tokens with $|\accepts|$ terminal sequences.
}
In \Tool{} algorithm, the precomputed DFA mask store is crucial for allowing efficient computation of acceptable tokens $\vocab_k$.
% \add{
% The mask store reduces this entire process to $|\accepts|$ lookups and tensor union operations. 
% }
Next, we show how the mask store maps the states of DFAs of the terminals and a sequence of terminals to masks over the vocabulary to enable this process. 

Given a remainder $r$ and any accept sequence $\sequence \in \accepts$, we want to check for a token $t \in \vocab$, if $r.t$ \add{aligns} or partially matches with $\sequence$.
We formally define this notion of partial match in Definition~\ref{def:pmatch}.
We establish a connection between the match of a terminal sequence and a string through the DFAs corresponding to the terminals. 

\begin{wrapfigure}{l}{.25\textwidth}
    % \vspace{-0.1in}
    \centering
    \includegraphics[width=0.25\textwidth]{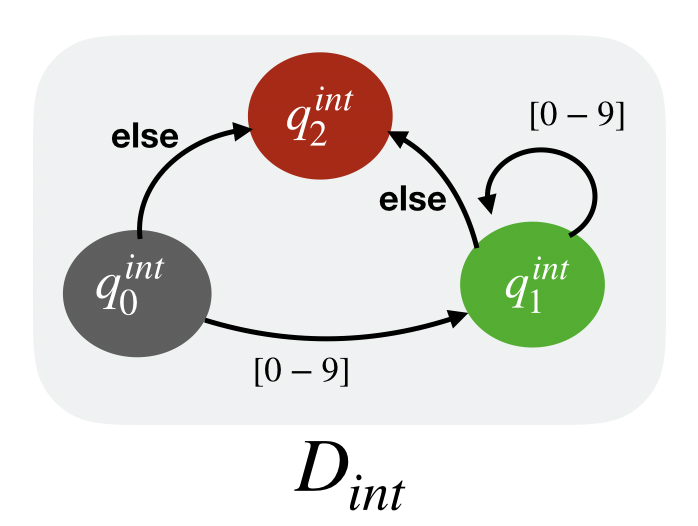}
    \caption{DFA for terminal $\textit{int}$.}
    \label{fig:overview_int}
     \vspace{-0.1in}
\end{wrapfigure}
Figure~\ref{fig:overview_int} presents a DFA for the terminal int.
In this DFA, $q_0^\textit{int}$ is the start state, and $q_1^\textit{int}$ is an accept state.
Further, we say that $q_0^\textit{int}, q_1^\textit{int}$ are \live{} states since there is a path from those states to an accept state and the state $q_2^\textit{int}$ is not a \live{} state. 

Consider the partial output $\partialcode =$ \str{math\_sqrt(3) * (2}. 
As described above, in this case, the output is split in the parsed part \str{math\_sqrt(3) * (} and the last lexical token \str{2} which is the remainder. 
$\{\textit{int}, \textit{add}\}$, $\{\textit{int}, \textit{rpar}\}$, $\{\textit{float}\}$ are some of the accept sequences.
For each of these accept sequences, we want to compute tokens $t \in \vocab$ such that appending \str{2} and $t$ i.e. \str{2}$.t$ partially matches the accept sequence.
% For instance, considering the accept sequence $\{\textit{float}, \textit{add}\}$, the tokens $t=$\str{123}, \str{4+} and \str{+} are all such that $r.t=$\str{2123}, \str{24+} and \str{2+} partially matches $\{\textit{int}, \textit{add}\}$. 
% We show that $\dmatch$ defined formally in 

\begin{figure}[b]
\centering
\includegraphics[width=9cm]{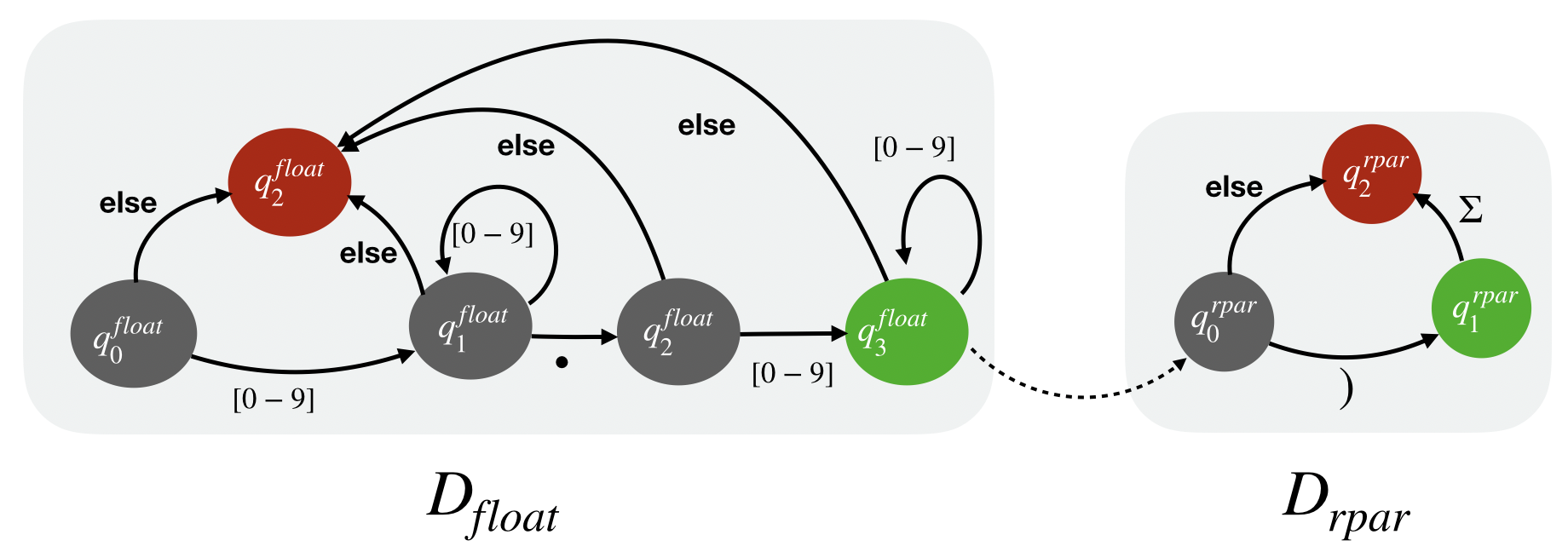}
\vspace{-.1in}
\caption{DFAs for accept sequence $\sequence = \{\textit{float}, \textit{rpar}\}$.}
\label{fig:overview_dmatch}
\vspace{-0.1in}
\vspace{-.1in}
\end{figure}

Consider an accept sequence $\sequence = \{\textit{float}, \textit{rpar}\}$. 
Figure~\ref{fig:overview_dmatch} displays the DFAs corresponding to the terminals in $\sequence$. 
If we begin from the initial state $q_0^\textit{float}$ of $D_\textit{float}$ and change the current DFA state according to the characters in $r$, in our example with $r =$\str{2}, the resulting state of the DFA is $q_1^\textit{float}$.
We observe that any token $t \in \vocab$ is acceptable if continuing the DFA walk from $q_1^\textit{float}$ ends on a live state.
We also allow a transition from the end state and start state of DFAs of subsequent terminals in the accept sequence as shown by the dotted arrow.
\add{
The partial match of $r.t$ and $\sequence$ can thus be equivalently checked by doing a walk over the DFAs.
}
Tokens such as \str{11}, \str{.}, \str{.1}, and \str{.27)} are some of the tokens where initiating a walk from $q_1^\textit{float}$ leads to reaching one of the live states. 
For example, by consuming \str{.27)}, we reach $q_1^\textit{rpar}$, which is a live state.
Consequently, \Tool{} approves \str{.27)} as a valid continuation from $\partialcode =$ \str{math\_sqrt(3) * (2}.

\add{
Our key insight for achieving efficiency is that for each DFA state, we can precompute LLM tokens that will lead to a transition to a live state starting from that state.
Precomputing these sets can significantly reduce the computation required during inference.
Further, these precomputed set of LLM tokens can be stored as boolean masks for efficiently combining them during inference.
}
Given a DFA state $q$ and any sequence terminals of length $\alpha$, the mask store maps $\dmap{\alpha}(q, \sequence) = m$, where $m \in \{0,1\}^{|\vocab|}$ is the mask over vocabulary.
During the inference time, for each accept sequence $\sequence \in \accepts$, we first consume $r$ and walk over the first DFA in the accept sequence.
We then use the map $\dmap{\alpha}$ on the current DFA state to get the \add{mask $m_\sequence$ of valid tokens for $\sequence$.
\add{
Hence, for each accept sequence $\sequence \in \accepts$, we require a walk over a DFA and a lookup in the mask store to obtain $m_\sequence$. 
}

Finally, we combine these masks \add{obtained} for each acccept sequence to get the masks of all syntactically valid tokens by computing their union $\bigcup_{\sequence \in \accepts} m_{\sequence}$.
In practice, these masks can be stored as tensors and can be combined efficiently using a small number of tensor union operations. 
We show in Theorem~\ref{thm:sound} that this combined mask overapproximates the set $\vocab_k$, ensuring the soundness of our approach.
Further, we show that for the LR parser with larger lookahead, our approach is complete and ensures the combined mask is exactly $\vocab_k$~(Theorem~\ref{thm:complete}).
}

\noindent {\bf Bringing It All Together.}
In our example, \Tool{} improves the LLM's output by guiding the generation. 
Initially, the LLM produces \str{math} as $C_1$. 
Next, \Tool{} excludes LLMs top choices such as \str{\_area}, \str{\_tri}, and \str{\_p} from the vocabulary, leading the decoding algorithm to select \str{\_sqrt}. 
Further, even in the 12th iteration where the LLM outputs $C_{11}=$\str{math\_sqrt(3)/4 * (2.27}, \Tool{} filters out the LLM's preferred choice \str{\^} from the vocabulary. 
Instead, the LLM opts for $*$, eventually generating $C_n =$ \str{math\_sqrt(3)/4 * (2.27) * (2.27)}, which is syntactically correct i.e. $C_n \in \lang(G)$ and also semantically accurate.

\subsection {\textbf{Time Complexity}}
At each decoding step in \Tool{}, the most resource-intensive tasks are computing accept sequences and generating the mask using $r$ and $\accepts$. 
In Section~\ref{sec:time}, we demonstrate that our implementation, leveraging LR(1) parsing, efficiently constructs 1 and 2-length accept sequences. 
We show that the complexity of \Tool{} at each decoding step is $O(T_\cup \cdot |\accepts|)$, where $T_\cup$ represents the time needed for boolean mask union operations. 
\add{
Typically, $|\accepts|$ is small (<10 on average in our experiments) and in the worst case, it equals the size of set of all terminals $|\allterminals|$ in the grammar.
For our largest Python grammar, $|\allterminals|$ is 94.
}
Modern hardware, especially with GPUs, can perform these vectorized union operations efficiently~\cite{paszke2019pytorch}, making the \Tool{} algorithm efficient in practice.

\section{\mbox{Syntactically Correct Generation}}
\label{sec:technical}
This section describes our main technical contributions and the \Tool{} algorithm. 
% First, we formally define the syntactical decoding problem (Section~\ref{sec:syndecode}). 
% Next, we describe our parsing algorithm (Section~\ref{sec:incparse}). 
% Then we explain our CFG-based masking technique (Section~\ref{sec:mask}) and prove the soundness and completeness of our algorithm (Section~\ref{sec:prop}).
% We finally summarize the \Tool{} implementation (Section~\ref{sec:implementation}) and analyze its time complexity (Section~\ref{sec:time}).  

\subsection{Syntactical Decoding Problem}
\label{sec:syndecode} 

Given a language with grammar $G$, let $\lang(G) \subseteq \alphabets^*$ denote the set of all syntactically valid outputs according to the grammar $G$. 
For a grammar $G$, $\lang_p(G)$ represents the set of all syntactically valid partial outputs. 
If a string $w_1$ belongs to $\lang_p(G)$, then there exists another string $w_2$ such that appending $w_2$ to $w_1$ results in a string that is in the language defined by $G$. Formally,

\begin{definition}[Partial Outputs]
\label{def:gramdecode}
For grammar $G$, $\lang_p(G) \subseteq \alphabets^*$ denotes all syntactically valid partial outputs. Formally, if $w_1 \in \lang_p(G)$ then $\exists w_2 \in \alphabets^*$ such that $w_1.w_2 \in \lang(G)$
\end{definition}

\noindent For a grammar $G$ and a partial output $\partialcode$ belonging to the set of prefix strings $\lang_p(G)$, the syntactical decoding problem aims to determine the set $\vocab_k$ of valid tokens from a finite vocabulary $\vocab$ such that appending any token $t \in \vocab_k$ to $\partialcode$ maintains its syntactic validity according to the grammar $G$. 

\begin{definition}[Syntactical Decoding]
\label{def:gramdecode}
For grammar $G$, given partial output $\partialcode \in \lang_p(G)$ and finite token vocabulary $\vocab \subset \alphabets^*$, the syntactical decoding problem is to compute the set $\vocab_k \subseteq \vocab$ such that for any $t \in \vocab_k, \partialcode.t \in \lang_p(G)$    
\end{definition}

% Synchromesh~\cite{poesia2022synchromesh} solves this problem by iterating over the $\vocab$ and for each $t \in \vocab$ it parses $\partialcode.t$ to check if it is in $\lang_p(G)$. 
% Typically $|V|$ is large ($>30,000$) and thus Synchromesh does a preorder traversal over a trie built on $\vocab$ and optimizes this step.
% Despite this optimization, Synchromesh still needs to evaluate a considerable number of candidate tokens, introducing a large overhead.
% As described earlier, prior works~\cite{poesia2022synchromesh, llamacpp} solve this problem by iterating over the $\vocab$ and for each $t \in \vocab$ it parses $\partialcode.t$ to check if it is in $\lang_p(G)$

% \Tool{} solves this problem through the creation of a novel structure which we call \emph{DFA mask store} offline (Definition~\ref{def:lookup}). 
% For a given grammar $G$ and vocabulary $\vocab$, this mask store is constructed once and can be leveraged across all generations. 
% It efficiently stores masks over the vocabulary.
% \Tool{} breaks down the syntactical decoding problem into two distinct steps.
\mbox{We next present \Tool{}'s key aspects to solve this problem:}
\begin{itemize}[leftmargin=*]
    \item In the initial step, it parses $\partialcode$ and computes the unparsed remainder $r \in \alphabets^*$ along with the acceptable terminal sequences $\accepts$ (Section~\ref{sec:parse}).
    \item In the second step, \Tool{} utilizes $r$, $\accepts$, and the precomputed mask store. This phase involves traversing the DFA and performing a few lookups within the DFA mask store to obtain the set of syntactically valid tokens $t$ capable of extending $\partialcode$ (Section~\ref{sec:mask}). 
    \item Consequently, \Tool{} efficiently computes the set of syntactically valid tokens. We show the soundness and completeness of our approach in Section~\ref{sec:prop}.
    \item We further discuss the theoretical complexity of \Tool{} in Section~\ref{sec:time} and the \Tool{} framework in Section~\ref{sec:framework}.
\end{itemize}

\subsection{Parsing Partial Output}
\label{sec:parse}
In this section, we describe the remainder $r$ and accept sequences $\accepts$ returned by the parsing step.

% remainder
\noindent{\bf Remainder.}
\Tool{} uses a lexer to convert $\partialcode$ to sequence of lexical tokens $l_1, l_2 \dots l_f \in \alphabets^*$.  
Each lexical token $l_i$ is associated with a terminal type $\terminal_i$, where $l_i \in \lang(\regex_{\terminal_i})$ ($\regex_{\terminal_i}$ is the regular expression for terminal $\terminal_i$).
\add{
We assume our lexer uses a 1-character lookahead without backtracking. 
This ensures that the lexical types of previous tokens in $\partialcode$ remain unchanged, except for the final token. 
The remainder $r$ represents the suffix of $\partialcode$ that could potentially change its lexical type in future iterations.
Thus the remainder $r$ is assigned such that it is either unlexed because it does not match any terminal, or has been lexed but might undergo a different lexing in subsequent iterations when $\partialcode$ is extended by the LLM by appending tokens.
This assumption is crucial for enabling incremental parsing and ensures that the remainder $r$ remains small, which contributes to reducing overall time complexity.
}
\Tool{} assigns the remainder according to the following two cases:

\begin{description}
    \itemsep0em
    \item \textbf{Case 1: $\partialcode = l_1.l_2 \dots l_f$} 
    Assuming a standard lexer with 1-character lookahead and no backtracking, all lexical tokens $l_1, l_2, \dots, l_{f-1}$ remain unchanged upon extending $\partialcode$. However, the final lexical token $l_f$ may change. 
    For example, in Python partial output in the $k$-th LLM iteration, if the final lexical token is $l_f=$\str{ret} and the language model generates the token \str{urn} in the next iteration, the updated code results in the final lexical token becoming $l_f=$\str{return}. 
    This transition reflects a transformation from an identifier name to a Python keyword in the subsequent iterations. 
    Thus, $r$ is assigned the value $l_f$, i.e., $r=$\str{ret} for k-th iteration in our example.
    
    \item \textbf{Case 2: $\partialcode = l_1.l_2 \dots l_f.u$:} 
    Here, $u \in \alphabets^*$ is the unlexed remainder of $\partialcode$. 
    In this case, considering the 1-character lookahead of the lexer, the types of $l_1, l_2, \dots, l_{f}$ do not change upon extending $\partialcode$. 
    Consequently, $r$ is assigned value $u$ of the suffix that remains unlexed.
\end{description}

\noindent 
\add{
\Tool{} parsing step partitions partial output $\partialcode$ into lexically fixed part $\fixpartialcode$ and remainder $r$.}
Given a sequence $\sequence = \terminal_0, \terminal_1, \dots, \terminal_f$, we simplify notation by using $\lang(\sequence) = \lang(\regex_{\terminal_0} \cdot \regex_{\terminal_1} \dots \regex_{\terminal_f})$ throughout the rest of the paper. 

% Partial parse
\begin{definition}[Partial Parse]
\label{def:pparse}
Given the partial output $\partialcode \in \alphabets^*$, the partial parse function $\partialparse: \alphabets^* \to \allterminals^* \times \alphabets^*$ returns a terminal sequence $\sequence^{\square}$ and remainder $r$ such that $\partialcode = \fixpartialcode.r$ and $\fixpartialcode$ is parsed as $\sequence^{\square}$. i.e. $\fixpartialcode \in \lang(\sequence^{\square})$. 
\end{definition}

% accept sequences
\noindent{\bf Accept Sequences.}
A sentence is a sequence of terminals. 
A grammar $G$ describes a (possibly infinite) set of sentences, that can be derived by using the production rules of the grammar. 
We use $\lang^\allterminals(G) \subseteq \allterminals^*$ to denote the valid sequences of terminals that can be derived from the rules of $G$.
Further, $\lang^\allterminals_p(G)$ denotes all syntactically valid partial sentences of terminals.
Formally, 

\begin{definition}[Partial Sentences]
\label{def:psentence}
We define a set of all syntactically valid partial sentences 
 $\lang^\allterminals_p(G) \subseteq \allterminals^*$ such that $\sequence \in \lang^\allterminals_p(G)$ if and only if $\exists \sequence_1 \in \allterminals^*$ such that $\sequence.\sequence_1 \in \lang^\allterminals(G)$.
\end{definition}
Note that $\lang(G)$ and $\lang_p(G)$ are defined over alphabet $\alphabets$, whereas $\lang^\allterminals(G)$ and $\lang^\allterminals_p(G)$ over terminals $\allterminals$.
Nevertheless, if a program $C$ is parsed to obtain terminal sequence $\sequence$, then $C \in \lang(G)$ is equivalent to $\sequence \in \lang^\allterminals(G)$.
The \Tool{} parsing algorithm obtains $\sequence^{\square} = \terminal_1, \terminal_2 \dots \terminal_f$ by parsing $\partialcode$ \add{corresponding to the parserd part of partial output $\fixpartialcode$.} 
Given a partial sentence $\sequence_{\square}$, an accept sequence is a sequence over $\allterminals$ such that when appended to $\sequence^{\square}$ the result is still a partial sentence.

\begin{definition} [Accept Sequence]
\label{def:acc}
Given partial output $\partialcode \in \lang_p(G)$, and $\sequence^{\square}, r = \partialparse(\partialcode)$,  $\sequence_1 \in \allterminals^*$ is an accept sequence if $\sequence^{\square}.\sequence_1 \in \lang_p^\allterminals(G)$.
\end{definition}
Consider a Python partial program $\partialcode =$ \str{def is} and let $\textit{def}, \textit{name}, \textit{lpar}$ and $ \textit{rpar}$ be the terminals in Python grammar. 
we get $\{\textit{def}\},$\str{is} $=\partialparse($\str{def is}$)$, where $\sequence^{\square}=\{\textit{def}\}$ and $r=$\str{is}.
$\sequence_1 = \{\textit{name}, \textit{lpar}, \textit{rpar}\}$ is an accept sequence in this case as the sequence of terminals $\sequence^{\square}.\sequence_1 = \{\textit{def}, \textit{name}, \textit{lpar}, \textit{rpar}\}$ is a valid partial sentence.
The parser state on parsing the partial output $\partialcode$ can be utilized to compute a set of accept sequences denoted as $\accepts$.
The soundness and completeness of the \Tool{} algorithm depend on the length of these accept sequences in $\accepts$.
In theory, using longer accept sequences enhances the precision of the \Tool{} algorithm at the cost of increased computational complexity. 
In Section~\ref{sec:implementation}, we show our method for obtaining 1 and 2-length accept sequences that are efficient and precise in practice.

\subsection{Grammar Mask}
\label{sec:mask}
This section outlines the utilization of the set of acceptable terminal sequences $\accepts$ and the remainder $r$ in the creation of a boolean mask using the DFA mask store which is subsequently used for constraining the LLM output. 
The DFA mask store is constructed offline and makes \Tool{} efficient during the LLM generation. 
Given partial output $\partialcode$, our objective is to identify tokens $t \in \vocab$ such that appending them to $\partialcode$ leads to syntactical completion. 
% We approach this problem by utilizing the remainder $r$ and sequences $\accepts$. 
Given remainder $r$ and set of sequences $\accepts$, the goal is to determine whether $r.t$ partially matches the regular expression derived from any of the sequences in $\accepts$.
To characterize the notion of strings partially matching a regular expression, we next introduce the function $\pmatch$.

\begin{definition}[\pmatch]
\label{def:pmatch}
The function $\pmatch$ takes a word $w \in \alphabets^*$, a regular expression $\regex$ and returns a boolean. $\pmatch(w, \regex) = \true$ if either of the following conditions holds:
\begin{enumerate}
    \item $\exists w_1 \in \alphabets^*, w_2 \in \alphabets^+$ such that $w = w_1.w_2 $ and $w_1 \in \lang(\regex)$ or
    \item $\exists w_1 \in \alphabets^*$ such that $w.w_1 \in \lang(\regex)$
\end{enumerate}
\end{definition}

\noindent Thus $\pmatch(w, \regex)$ is true when either a prefix of $w$ matches $\regex$ or $w$ can be extended to match $\regex$.
The consequence of allowing $\pmatch$ to be defined such that it is true even when prefix matches, is that \Tool{} will conservatively accept all tokens for which the prefix matches the accept sequence.
Hence, we overapproximate the precise set of syntactically valid tokens.
We make this choice to ensure that \Tool{} is sound for any length of accept sequences.
Next, we give definitions related to DFAs. 
These definitions are useful for describing the construction of the DFA mask store and proving properties related to its correctness in the \Tool{} algorithm. 
In particular, we first define the live states of DFA. 
We say state $q$ is live if there is a path from $q$ to any final states in $\dfafinal$. Formally,

\begin{definition} [DFA $\live$ states]
\label{def:live}
% Define live states
Given a DFA $\dfa(\dfastates, \alphabets, \transitions, \dfastart, \dfafinal)$, let \mbox{$\live(\dfastates) \subseteq \dfastates$} denote the set of live states such that  
\[
    q \in \live(\dfastates) \text{ iff } \exists w \in \alphabets^* \text{ s.t. } \compute(w, q) \in \dfafinal
\]
\end{definition}

% Define and describe the mask store
\noindent We use $\dfa_\terminal(\dfastates_\terminal, \alphabets_\terminal, \transitions_\terminal, q_0^{\terminal}, \dfafinal_\terminal)$ to denote a DFA corresponding to a terminal $\terminal \in \allterminals$. 
Next, we establish the definition of $\dmatch$ for DFA, which is an equivalent concept to $\pmatch$ with regular expressions.
$\dmatch$ is recursively defined such that its computation can be performed by walking over the DFAs of a sequence of terminals.

% Define valid token at a state
\begin{definition} [\dmatch]
\label{def:dmatch}
Given a DFA $\dfa(\dfastates, \alphabets, \transitions, \dfastart, \dfafinal)$, a string $w \in \alphabets^*$, a DFA state $q \in Q$ and any sequence of terminals $\sequence= \{\terminal_{f+1}, \terminal_{f+2} \dots \terminal_{f+d}\}$, $\dmatch(w, q, \sequence) = \true$, if either of the following conditions hold:
\begin{enumerate}
\item $\compute(w, q) \in \live(Q)$ or
\item $\exists w_1 \in \alphabets^*, w_2 \in \alphabets^+$ such that $w_1.w_2 = w$, $\compute(w_1, q) \in F \text{ and } \sequence= \{\} $ or
\item $\exists w_1 \in \alphabets^*, w_2 \in \alphabets^*$ such that $w_1.w_2 = w$, $\compute(w_1, q) \in F$, \\ 
and $\text{\dmatch}(w_2, q_{0}^{\terminal_{f+1}}, \{\terminal_{f+2} \dots \terminal_{f+d}\}) = \true$ where $q_{0}^{\terminal_{f+1}}$ is the start state corresponding to the DFA for $\terminal_{f+1}$
% $w_2$ is valid with respect to $q_{0, \terminal_1}$ and the follow sequence $\{\terminal_2 \dots \terminal_\alpha\}$
\end{enumerate}
\end{definition}

\sloppypar
\noindent Given an accept sequence $\sequence = \{\terminal_{f+1}, \terminal_{f+2} \dots \terminal_{f+d}\} \in \accepts$, our objective is to compute the set of tokens $t \in \vocab$ such that $\pmatch(r.t, \regex_\sequence)$ holds, where $\regex_\sequence = (\regex_{f+1}. \regex_{f+2}. \ldots.\regex_{f+d})$ is the regular expression obtained by concatenating regular expressions for terminals. 
If $\sequence^p$ denotes the sequence $\{\terminal_{f+2}, \dots \terminal_{f+d}\}$, Lemma~\ref{lemma:eq} simplifies this problem to finding $\dmatch(r.t, \dfastart^{\terminal_1}, \sequence^p)$. 
Furthermore, utilizing Lemma~\ref{lemma:dmatch}, this can be further reduced to computing $q = \compute_{\terminal_1}(r, \dfastart^{\terminal_1})$ and $\dmatch(t, q, \sequence^p)$. 
It's important to note that $\dmatch(t, q, \sequence^p)$ does not depend on $\partialcode$ and can be computed offline. 
While the computation of $q$ for $\dmatch(t, q, \sequence^p)$ is relatively inexpensive, evaluating $\dmatch(t, q, \sequence^p)$ can be computationally expensive both offline and online, as it requires considering numerous potential accept sequences offline, and where it needs to iterate over all tokens in $\vocab$ online.
We observe that if we consider sequences of smaller lengths, we can efficiently precompute the set of tokens satisfying $\dmatch(t, q, \sequence^p)$ for all $q, t$ and $\sequence^p$ offline.
We later establish the soundness of \Tool{} when using accept sequences of length at least $1$ (Theorem~\ref{thm:sound}) and completeness for accept sequences of the length greater than maximum length of tokens in the vocabulary (Theorem~\ref{thm:complete}). 
Typically, LLM tokens are small in size, allowing us to obtain these guarantees.
% Building upon these definitions, we establish a crucial lemma that draws a connection between $\pmatch$ and $\dmatch$.

\begin{restatable}{lemma}{eq}
\label{lemma:eq}
Given $\sequence = \{\terminal_{f+1}, \terminal_{f+2} \dots \terminal_{f+d}\}$,  $\sequence^p = \{\terminal_{f+2} \dots \terminal_{f+d}\}$ and $\regex_\sequence = (\regex_{f+1}, \regex_{f+2}, \ldots, \regex_{f+d})$, $\dmatch(w, \dfastart^{\terminal_1}, \sequence^p) \iff \pmatch(w, \regex_\sequence)$.
\end{restatable}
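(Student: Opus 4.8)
The plan is to prove the biconditional by induction on the number $d$ of terminals in $\sequence = \{\terminal_{f+1}, \dots, \terminal_{f+d}\}$, so that $\sequence^p = \{\terminal_{f+2}, \dots, \terminal_{f+d}\}$ has length $d-1$ and (reading the statement with consistent indexing) the relevant start state is that of the first terminal, $\dfastart^{\terminal_{f+1}}$. Throughout I would freely use the correspondence between a DFA walk and its language, which follows from the composition property $\compute(w_1.w_2, q) = \compute(w_2, \compute(w_1, q))$ inherent in Definition~\ref{def:dfa}: for the automaton $\dfa_{\terminal_{f+1}}$ one has $\compute(w, \dfastart^{\terminal_{f+1}}) \in \dfafinal_{\terminal_{f+1}} \iff w \in \lang(\regex_{f+1})$, and $\compute(w, \dfastart^{\terminal_{f+1}}) \in \live(\dfastates_{\terminal_{f+1}}) \iff \exists v \in \alphabets^*.\ w.v \in \lang(\regex_{f+1})$, i.e.\ $w$ is a prefix of some word accepted by $\regex_{f+1}$ (this last equivalence is the content of Definition~\ref{def:live}). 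I would also record the standing assumption that every terminal denotes a nonempty language, so that $\lang(\regex_{f+2}.\cdots.\regex_{f+d})$ is nonempty.

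For the base case ($d=1$, $\sequence^p = \{\}$), condition (3) of Definition~\ref{def:dmatch} is vacuous for an empty sequence, so $\dmatch(w, \dfastart^{\terminal_{f+1}}, \{\})$ holds iff condition (1) or (2) holds. Translating through the correspondence above, condition (1) says exactly that $w$ is a prefix of a word in $\lang(\regex_{f+1})$, which is condition (2) of $\pmatch$; and condition (2) says exactly that some proper prefix $w_1$ of $w$ lies in $\lang(\regex_{f+1})$ (with the suffix $w_2 \in \alphabets^+$ nonempty), which is condition (1) of $\pmatch$. Hence the equivalence holds in the base case.

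For the inductive step ($d \geq 2$) write $R = \regex_{f+2}.\cdots.\regex_{f+d}$, so $\regex_\sequence = \regex_{f+1}.R$. The crux is the decomposition claim
\[
\pmatch(w, \regex_{f+1}.R) \iff \big[\,\compute(w, \dfastart^{\terminal_{f+1}}) \in \live(\dfastates_{\terminal_{f+1}})\,\big] \ \vee\ \big[\,\exists\, w_1 \in \lang(\regex_{f+1}),\ w = w_1.w_2,\ \pmatch(w_2, R)\,\big].
\]
Granting this, the first disjunct is verbatim condition (1) of $\dmatch(w, \dfastart^{\terminal_{f+1}}, \sequence^p)$, and by the induction hypothesis applied to the length-$(d-1)$ sequence $\{\terminal_{f+2}, \dots, \terminal_{f+d}\}$ I may replace $\pmatch(w_2, R)$ by $\dmatch(w_2, \dfastart^{\terminal_{f+2}}, \{\terminal_{f+3}, \dots, \terminal_{f+d}\})$, turning the second disjunct into condition (3) of $\dmatch$; condition (2) is vacuous since $\sequence^p$ is nonempty. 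This yields the desired equivalence.

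The main obstacle is the decomposition claim itself, and within it the forward direction from $\pmatch$'s condition (2): given $w.u \in \lang(\regex_{f+1}.R)$, I would factor the witness as $a.b$ with $a \in \lang(\regex_{f+1})$ and $b \in \lang(R)$ and case-split on where $w$ ends relative to the boundary between $a$ and $b$ — if $\lvert w\rvert \le \lvert a\rvert$ then $w$ is a prefix of $a$, giving the live-state disjunct, while if $\lvert w\rvert > \lvert a\rvert$ then $w = a.w_2$ with $w_2$ a prefix of $b$, giving the split with $\pmatch(w_2, R)$. Symmetric but lighter bookkeeping handles $\pmatch$'s condition (1) and the backward direction, where the nonemptiness of $\lang(R)$ is exactly what lets a mere prefix of $\regex_{f+1}$ be extended into a genuine partial match of the whole concatenation. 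Once this boundary analysis is in place, the remaining steps are routine applications of the DFA--regex correspondence.
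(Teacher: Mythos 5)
Your proof is correct, but it takes a genuinely different route from the paper's. The paper proves the two implications separately, each by induction on the length of the string $w$: in each direction it case-splits on the three conditions of Definition~\ref{def:dmatch} (respectively the two conditions of Definition~\ref{def:pmatch}) and applies the induction hypothesis to the suffix $w_2$ arising from condition 3. You instead induct on the number $d$ of terminals in the sequence and prove both directions at once via your decomposition claim, which characterizes $\pmatch(w, \regex_{f+1}.R)$ as either reaching a $\live$ state of the first DFA or splitting $w$ after a complete match of $\regex_{f+1}$ followed by $\pmatch$ against $R$; the induction hypothesis then turns the second disjunct directly into condition 3 of $\dmatch$. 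Your route buys two things. First, well-foundedness is immediate, since $d$ strictly decreases; the paper's string-length induction invokes the IH on $w_2$ where $w = w_1.w_2$, which is only strictly shorter when $w_1 \neq \epsilon$, so the paper implicitly leans on the assumption $\epsilon \notin \lang(\regex_{\terminal_i})$ (stated explicitly only in the completeness proof) to make its induction legitimate --- your argument needs only nonemptiness of the terminal languages, exactly where you flag it. Second, your decomposition claim isolates all the concatenation-boundary bookkeeping in one reusable biconditional, whereas the paper redoes that analysis inside each of its cases. What the paper's approach buys in exchange is that each direction is a self-contained chain of implications requiring no auxiliary lemma. One small caveat applies to both proofs equally: the induction hypothesis must be understood as quantified over the other parameter (the paper applies its IH to a shorter string but a shorter sequence; you apply yours to a shorter sequence but a different string), which both treatments do implicitly and correctly.
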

\begin{restatable}{lemma}{dm}
\label{lemma:dmatch}
If $q = \compute_{\terminal}(r, \dfastart^{\terminal})$ and no prefix of $r$ is in $\lang(\terminal)$ i.e. $\nexists w_1 \in \alphabets^*, w_2 \in \alphabets^* \text{ such that } w_1.w_2 = r \text{ and } \compute_\terminal(w_1, \dfastart^{\terminal}) \in F_{\terminal}  $ then $\dmatch(t, q, \sequence) \iff \dmatch(r.t, \dfastart^{\terminal}, \sequence)$.
\end{restatable}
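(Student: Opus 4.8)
The plan is to prove both directions of the equivalence by a clause-by-clause comparison of the three disjuncts in the definition of $\dmatch$ (Definition~\ref{def:dmatch}), noting that both sides walk the single DFA $\dfa_\terminal$. No induction on the length of $\sequence$ is required: the recursive sub-call in clause~3 of $\dmatch$ walks the DFA of the next terminal of $\sequence$ and never mentions $r$, $q$, or $\dfastart^\terminal$, so it appears identically on the two sides once the string decompositions are matched up.

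First I would record the standard concatenation identity $\compute_\terminal(u.v, p) = \compute_\terminal(v, \compute_\terminal(u, p))$, which follows directly from the inductive definition of $\compute$. Its instance $\compute_\terminal(r.x, \dfastart^\terminal) = \compute_\terminal(x, q)$, valid for every $x \in \alphabets^*$ since $q = \compute_\terminal(r, \dfastart^\terminal)$, is the workhorse of the argument. Taking $x = t$ settles clause~1 immediately: $\compute_\terminal(r.t, \dfastart^\terminal) \in \live(\dfastates_\terminal) \iff \compute_\terminal(t, q) \in \live(\dfastates_\terminal)$.

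The core of the proof concerns clauses~2 and~3, both of which require a split $r.t = w_1.w_2$ with $\compute_\terminal(w_1, \dfastart^\terminal) \in F_\terminal$. The key localization step is that every such $w_1$ must have the form $w_1 = r.w_1'$ with $w_1'$ a prefix of $t$: a prefix of $r.t$ is either a prefix of $r$ or $r$ extended by a prefix of $t$, and the no-prefix hypothesis says precisely that no prefix of $r$ reaches $F_\terminal$. Conversely, the concatenation identity gives $\compute_\terminal(r.w_1', \dfastart^\terminal) = \compute_\terminal(w_1', q)$, so $w_1'$ reaches $F_\terminal$ from $q$ exactly when $r.w_1'$ reaches $F_\terminal$ from $\dfastart^\terminal$. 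This sets up a bijection between the admissible splits $r.t = (r.w_1').w_2$ on the right and the splits $t = w_1'.w_2$ on the left, under which $w_2$ --- hence its (non)emptiness and the value handed to the recursive $\dmatch$ call --- is preserved. Consequently clause~2 (which additionally fixes $\sequence = \{\}$, an identical condition on both sides) and clause~3 (whose recursive call is untouched) transfer verbatim in both directions, and disjoining the three equivalences yields $\dmatch(t, q, \sequence) \iff \dmatch(r.t, \dfastart^\terminal, \sequence)$.

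I expect the main obstacle to be this prefix-localization together with the bookkeeping around the boundary case $w_1 = r$: since $\compute_\terminal(r, \dfastart^\terminal) = q$ and the hypothesis forbids $r$ itself from reaching $F_\terminal$, we have $q \notin F_\terminal$, so the degenerate choice $w_1' = \epsilon$ contributes nothing on either side and the emptiness constraints on $w_2$ in clauses~2 and~3 remain aligned. Checking that the existential quantifier on the right ranges over exactly the image of the one on the left under this correspondence is then routine.
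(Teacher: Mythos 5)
Your proposal is correct and takes essentially the same approach as the paper's proof: a clause-by-clause case analysis over the three conditions of the $\dmatch$ definition, using the concatenation identity $\compute_{\terminal}(r.x, \dfastart^{\terminal}) = \compute_{\terminal}(x, q)$ for clause 1 and the no-prefix hypothesis to force every accepting split of $r.t$ to have the form $(r.w_1').w_2$ with $w_1'$ a prefix of $t$ for clauses 2 and 3. Your unified bijection framing and explicit handling of the boundary case $w_1' = \epsilon$ (i.e., $q \notin F_{\terminal}$) are, if anything, slightly more careful than the paper's presentation, which carries out the two implications as separate case analyses and leaves these points implicit.
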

The proofs of both the lemmas are in Appendix~\ref{sec:proofs}.

\begin{figure}[b]
\centering
\includegraphics[width=10cm]{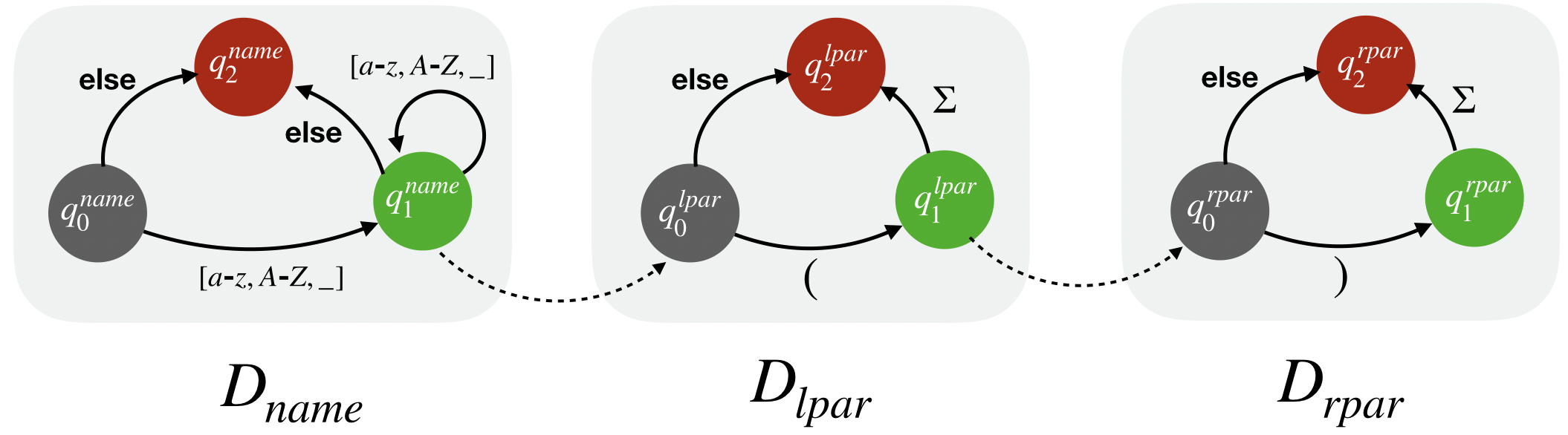}
\vspace{-.1in}
\caption{DFAs in accept sequence $\sequence = \{\textit{name}, \textit{lpar}, \textit{rpar}\}$ for example. 
The start state, final states, and dead states are in gray, green, and red respectively.
The dashed arrows link the final states of one DFA to the starting state of the next DFA, adhering to condition 3 in Definition~\ref{def:dmatch}. This illustrates the sequential traversal across DFAs during the computation of \dmatch.
} 
\label{fig:dfa}
\vspace{-.1in}
\end{figure}

\noindent \textbf{Illustrative Example:} 
Consider the scenario with $\partialcode =$ \str{def is}, $r=$\str{is}, and an accept sequence $\sequence = \{\textit{name}, \textit{lpar}, \textit{rpar}\}$ in $\accepts$, where $\textit{name}$, $\textit{lpar}$, and $\textit{rpar}$ are terminals in $\allterminals$. 
Our objective is to determine all $t \in \vocab$ such that \str{def is}.t forms a valid partial program. 
This can be achieved by finding tokens $t$ that satisfy $\pmatch(\text{\str{is}}.t, \regex_\sequence)$, where $\regex_\sequence = [a\text{-}z,A\text{-}Z,\_]^*()$.
Let's consider a token $t = \text{\str{\_prime():}}$. We observe that $r.t=$\str{is\_prime():} can be decomposed into \str{is\_prime} ($\textit{name}$), \str{(} ($\textit{lpar}$), \str{)} ($\textit{rpar}$), and \str{:}. 
Consequently, it partially matches $\regex_\sequence$ as defined by $\pmatch$. 
In Figure~\ref{fig:dfa}, we present the DFAs for $\sequence$ used in computing $\dmatch$. 
The reduction $\dmatch(r.t, \dfastart^\textit{name}, {\textit{lpar}, \textit{rpar}}) = \dmatch(\text{\str{is\_prime():}}, \dfastart^\textit{name}, {\textit{lpar}, \textit{rpar}})$ simplifies successively to $\dmatch(\text{\str{():}}, \dfastart^\textit{lpar}, {\textit{rpar}})$, then to $\dmatch(\text{\str{):}}, \dfastart^\textit{rpar}, {})$, and finally to $\dmatch(\text{\str{:}}, q_1^\textit{rpar}, {})$.
As $q_1^\textit{rpar}$ is a final state, according to condition 2 of Definition~\ref{def:dmatch}, $\dmatch(\text{\str{:}}, q_1^\textit{rpar}, {})$ holds true.
% This equivalence arises as we traverse the $\dfa_\textit{name}$ first, consuming \str{is\_prime} and reaching a final state in $F_\textit{name}$. 
Next, we define a mask over vocabulary

\begin{definition}[Vocabulary mask]
\label{def:mask}
Given vocabulary $\vocab \subseteq \alphabets^*$, $m \in \{0, 1\}^{|\vocab|}$ is a mask over the vocabulary. We also use $\set(m) \subseteq \vocab$ to denote the subset represented by $m$.
\end{definition}

\noindent {\bf DFA Mask Store}
% Discuss \alpha
\noindent For an integer $\alpha$, we define a DFA table $\dmap{\alpha}$ as the mask store over the DFA states with $\alpha$ lookahead. 
Given the set of all DFA states $\dfastates_\Omega = \bigcup_{\terminal \in \allterminals} \dfastates_\terminal$, the table stores binary masks of size $|\vocab|$, indicating for token string $t$, for any DFA state $q \in \dfastates_\Omega$ and a sequence of $\alpha$ terminals $\sequence_\alpha$ if $\dmatch(t, q, \sequence_\alpha) = \true$. 
The lookahead parameter $\alpha$ signifies the number of subsequent terminals considered when generating the mask stored in the table.
Choosing a larger value for $\alpha$ enhances the precision of \Tool{} algorithm, but it comes at the cost of computing and storing a larger table. 
We next formally define the DFA mask store,
\begin{definition}[DFA mask store]
\label{def:lookup}
For an integer $\alpha$, the DFA mask store $\dmap{\alpha}$ is a function defined as $\dmap{\alpha}: \dfastates_\Omega \times \allterminals^{\alpha} \to \{0, 1\}^{|\vocab|}$, where $\dfastates_\Omega = \bigcup_{\terminal \in \allterminals} \dfastates_\terminal$ represents the set of all DFA states and $\allterminals^{\alpha}$ is a set of $\alpha$-length terminal sequences. 
% Let $\textit{id}(t)$ denote the enumerated index of a token $t \in \vocab$. 
Then $\dmap{\alpha}(q, \sequence) = m$ is a binary mask such that $t \in \set(m)$ if $\dmatch(t, q, \sequence)$
\end{definition}

For our illustrative example if $m = \dmap{2}(q_1^\textit{name}, \{\textit{lpar}, \textit{rpar}\})$ then $t=$\str{\_prime():} should be contained in $\set(m)$.
The grammar mask for a set of accept sequences $\accepts$ can be computed by combining masks for each $\sequence \in \accepts$. 
% We describe the grammar mask computation algorithm in Appendix~\ref{sec:compmask}.
The DFA mask store $\dmap{0}$ maps each DFA state to all tokens such that they  $\pmatch$ without considering any following accept sequence (0-length sequence). 
In this case, the table maps each state with a single mask denoting the tokens that match the regular expression of the corresponding DFA. 

\noindent {\bf Computing Grammar Mask}
\label{sec:compmask}
\begin{wrapfigure}{R}{0.53\textwidth}
\vspace{-.34in}
\begin{minipage}{0.55\textwidth}

\begin{algorithm}[H]
\caption{Computing Grammar Mask}
\label{alg:grammar}
\textbf{Inputs:} $\accepts$: set of accept sequences, $r$: remainder
\begin{algorithmic}[1]
\Function{GrammarMask}{$\accepts, r$}
\State $m \gets \{\}$
\For{$\sequence \in \accepts$}
\State $\terminal_1 \gets \sequence[0]$
\State $q_r \gets \compute(q_0^{\terminal_1}, r)$
% \For{$q \in \dfastates_{\terminal_1}$}
\If{$q_r \in \live(\dfastates_{\terminal_1})$}
\State $\Pi \gets \textit{len}(\sequence)-1$
\State $m \gets m \cup \big( \dmap{\Pi}(q_r, \sequence[1:])\big)$
\EndIf
% \EndFor
\EndFor
\State \Return $m$
\EndFunction
\end{algorithmic}
\end{algorithm}

\end{minipage}
\vspace{-0.2in}
\end{wrapfigure}
% Example \dmap{0}
% This approach is equivalent to the one used by \cite{willard2023efficient} for regular expression guided generation. 
% The current parsers can easily compute acceptable sequences of terminals with a length of 2 from partial output. 
% We note that $\pmatch$ $r.t$ with a 2-length sequence is equivalent to $\dmatch$ with a 1-length sequence, as stated in Lemma~\ref{lemma:eq}. 
% Consequently, in our work, we opt for $\dmap{0}$ and $\dmap{1}$ since we have observed empirically that this combination is sufficient for producing syntactically valid outputs.
%
The mask store is constructed offline by enumerating all DFA states $\dfastates_\Omega $, considering all possible terminals in $\allterminals$, and all tokens in $\vocab$. 
The DFA mask store depends on the set of terminals $\allterminals$ and the model's vocabulary $\vocab$. 
As a result, a unique mask store is created for each grammar and tokenizer combination, and to enhance efficiency, we cache and reuse this table for future inferences. 

% \noindent{\bf Computing Grammar Mask.}
Algorithm~\ref{alg:grammar} presents our approach for computing the grammar mask during LLM generation.
It computes a grammar mask based on the sets of current accept sequences $\accepts$, and the remainder string ($r$). 
It iterates over $\accepts$, considering each sequence $\sequence$.
The algorithm initializes an empty mask $m$. 
It iterates over each acceptable sequence, considering the first terminal $\terminal_1$ in each. 
It computes the resulting state $q_r$ by processing $\terminal_1$ from an initial state $q_0^{\terminal_1}$ and the remainder string $r$. 
If $q_r$ is in a live state, the algorithm updates the grammar mask by unifying the mask cached in $\dmap{\alpha}$. 

\subsection{Soundness and Completeness}
\label{sec:prop}
This section establishes the soundness and completeness of the \Tool{} algorithm.
Algorithm~\ref{alg:main} presents the LLM generation algorithm with \Tool{}. It takes as inputs an LLM represented by $M$, a tokenizer denoted by $\tokenizer$, an input prompt string $C_0$, the maximum number of generated tokens $n_\textit{max}$, and a base decoding strategy $D$. The algorithm begins by tokenizing the input prompt using the tokenizer. 
It then iteratively generates tokens using the LLM, decodes the current token sequence, and performs parsing to obtain acceptable terminal sequences $\accepts$, and a remainder $\remainder$ (line~\ref{line:acc}). 
A grammar mask is applied to the logit scores based on these values (line~\ref{line:gm}). 
The algorithm subsequently selects the next token using the decoding strategy, and if the end-of-sequence token (EOS) is encountered, the process terminates. 
The final decoded output is obtained, incorporating the generated tokens, and is returned as the result of the MaskedGenerate algorithm.

Given partial output $\partialcode \in \lang_p(G)$, \Tool{} generates a corresponding mask $m$. 
If, for a token $t \in \vocab$, the concatenation $\partialcode.t$ results in a syntactically valid partial output, i.e. $\partialcode.t \in \lang_p(G)$, our soundness theorem ensures that $t$ is indeed a member of the set defined by the generated mask $m$. 
The subsequent theorem formally states this soundness property.

\begin{restatable}{theorem}{Sound}
\label{thm:sound}
Let $\partialcode \in \lang_p(G)$ be the partial output and any integer $d \geq 1$, let $\accepts_d \subseteq \allterminals^{d}$ contain all possible accept terminal sequences of length $d$ and $r \in \alphabets^*$ denote the remainder. 
If $m = \text{GrammarMask}(\accepts, r)$ then for any $t \in \vocab$, if $\partialcode . t \in \lang_p(G)$ then $t \in \set(m)$
\end{restatable}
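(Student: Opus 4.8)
The statement is a soundness (overapproximation) claim: every token $t$ whose concatenation keeps $\partialcode$ a valid prefix must survive the mask. The plan is to prove it directly by exhibiting, for each such $t$, a single accept sequence $\sequence \in \accepts$ whose contribution to the union in \textsc{GrammarMask} (Algorithm~\ref{alg:grammar}) already contains $t$; since $\set(m)$ is exactly the union of these contributions, this suffices. The engine of the reduction is the pair of lemmas already proved: Lemma~\ref{lemma:eq} lets me replace the string/regex test $\pmatch(r.t,\regex_\sequence)$ by the DFA-walk test $\dmatch(r.t, q_0^{\terminal_1}, \sequence[1{:}])$, and Lemma~\ref{lemma:dmatch} lets me factor out the remainder, replacing that by $\dmatch(t, q_r, \sequence[1{:}])$ with $q_r = \compute(r, q_0^{\terminal_1})$, which is precisely the condition under which $t \in \set(\dmap{d-1}(q_r,\sequence[1{:}]))$.

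Concretely, first I would use $\partialcode.t \in \lang_p(G)$ to pick an extension $w$ with $\partialcode.t.w \in \lang(G)$ and lex/parse it. By the 1-character-lookahead, no-backtracking lexer assumption, the prefix $\fixpartialcode$ keeps its tokenization $\sequence^{\square}$ inside the longer string, and the remainder $r$ is a prefix of the lexeme of the first terminal $\terminal_1$ appearing after $\sequence^{\square}$. Let $\sequence = \terminal_1 \terminal_2 \cdots \terminal_d$ be the first $d$ terminals of the parse of $r.t.w$; because $\sequence^{\square}.\sequence$ is a prefix of the valid terminal sentence for $\partialcode.t.w$, it lies in $\lang^\allterminals_p(G)$, so $\sequence$ is an accept sequence and $\sequence \in \accepts_d = \accepts$. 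Since $r$ is a prefix of a lexeme of $\terminal_1$, the state $q_r = \compute(r, q_0^{\terminal_1})$ is live, so \textsc{GrammarMask} enters the branch that unions in $\dmap{d-1}(q_r,\sequence[1{:}])$. It then remains to show $t$ is in that mask. I establish $\pmatch(r.t, \regex_\sequence)$ by a case split on how far $r.t$ reaches into the parse: if $r.t$ stays within the first $d$ terminals it extends (via part of $w$) to a full match of $\regex_\sequence$ (condition~2 of Definition~\ref{def:pmatch}); if it already overshoots them, a prefix of $r.t$ matches $\regex_\sequence$ (condition~1). Applying Lemma~\ref{lemma:eq} and then Lemma~\ref{lemma:dmatch} turns this into $\dmatch(t, q_r, \sequence[1{:}]) = \true$, i.e. $t \in \set(\dmap{d-1}(q_r,\sequence[1{:}])) \subseteq \set(m)$.

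I expect the delicate step to be the application of Lemma~\ref{lemma:dmatch}, whose hypothesis requires that no proper prefix of $r$ already lies in $\lang(\terminal_1)$. This can fail literally: for an identifier terminal the DFA passes through accepting states while reading $r$ (e.g. \str{i} and \str{is} are both names), so the reduction cannot be quoted blindly. The resolution, and the crux of the argument, is the maximal-munch/no-backtracking lexer assumption: for any valid extension the lexeme of $\terminal_1$ strictly contains $r$, so the decomposition of $r.t$ witnessing $\dmatch(r.t, q_0^{\terminal_1}, \sequence[1{:}])$ consumes all of $r$ inside the first DFA rather than terminating $\terminal_1$ in the interior of $r$, which is exactly what transfers the match to $\dmatch(t, q_r, \sequence[1{:}])$. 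A secondary point to handle cleanly is the boundary case where the chosen continuation yields fewer than $d$ terminals, where one falls back to a shorter accept sequence and leans on condition~1 of $\pmatch$ together with soundness holding for every length $\geq 1$; the deliberately permissive definition of $\pmatch$ (accepting prefix matches) is what keeps the construction sound regardless of $d$.
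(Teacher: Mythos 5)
Your proposal is correct and follows essentially the same route as the paper's own proof: both identify the relevant accept sequence from the parse of the actual continuation (the paper parses $\partialcode.t$ and uses its new remainder, you parse a full completion $\partialcode.t.w$), case-split on whether the continuation runs past $d$ terminals, establish $\pmatch(r.t,\regex_\sequence)$, and then chain Lemma~\ref{lemma:eq} and Lemma~\ref{lemma:dmatch} to place $t$ in the precomputed mask that \textsc{GrammarMask} unions into $m$. If anything, you are more careful than the paper at the one delicate point: the paper invokes Lemma~\ref{lemma:dmatch} without verifying its hypothesis that no prefix of $r$ lies in $\lang(\terminal_{f+1})$ (which can fail literally, e.g.\ when $r$ is itself a complete lexeme of the same terminal type), whereas you flag exactly this and supply the repair the paper leaves implicit, namely that the maximal-munch/no-backtracking lexer assumption forces the witnessing decomposition of $r.t$ to consume all of $r$ inside the first DFA, so the match transfers to $\dmatch(t,q_r,\sequence[1{:}])$.
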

The proof of the theorem is in Appendix~\ref{sec:proofs}.

Next, we give a definition that establishes a partial order on sets of terminal sequences, where given two sets $\accepts_1$ and $\accepts_2$, we say sets $\accepts_1 \greater \accepts_2$ if every sequence in $\accepts_2$ has a prefix in $\accepts_1$.

\begin{definition}[$\greater$]
We define a partial order $\greater$ on set of terminal sequences $\mathcal{P}(\allterminals^*)$ such that $\accepts_1 \greater \accepts_2$ when $\forall \sequence_2 \in \accepts_2 \exists \sequence_1 \in \accepts_1 \exists \sequence_3 \in \allterminals^*  $ s.t. $\sequence_2 = \sequence_1.\sequence_3$
\end{definition}

\begin{wrapfigure}{R}{0.5\textwidth}
\begin{minipage}{0.5\textwidth}
% \vspace{-.55in}

\begin{algorithm}[H] 
\small
\caption{\Tool{} Generation}
\label{alg:main}
% %
\textbf{Inputs:} $M$: LLM, $\tokenizer$: tokenizer, $C_0$: input prompt, $n_\textit{max}$: maximum generated tokens, $D$: decoding strategy
\begin{algorithmic}[1]
\Function{MaskedGenerate}{$M$, $\tokenizer$, $C_0$, $n_\textit{max}$, $D$}
\State $\curtokens \gets \text{Tokenize}(\tokenizer, C_0)$
\For{$i \in \{1, \dots n_\textit{max} \}$}
\State $\textit{scores} \gets M(\curtokens)$
\State $C_k \gets \text{decode}(\tokenizer, \curtokens)$
\State $\accepts, r \gets \text{Parse}(C_k)$
\label{line:acc}
\State $m \gets \text{GrammarMask}(\accepts, r)$
\label{line:gm}
\State $\textit{scores} \gets m \odot \textit{scores}$
\State $t_i \gets D(\textit{scores})$
\If{$t_i = EOS$}
\State break
\EndIf
\State $T_\textit{cur} \gets \text{append}(T_\textit{cur}, t_i)$
% \State $\curtokens \gets \text{append}(\curtokens, t_i)$
\EndFor
\State $\text{output} \gets \text{decode}(\tokenizer, \curtokens)$
\State \Return output
\EndFunction
\end{algorithmic}
\end{algorithm}
\vspace{-0.4in}

\end{minipage}
\end{wrapfigure}
We further state the lemma that shows the relation in the grammar masks generated by two accept sequences satisfying relation $\greater$.

\begin{restatable}{lemma}{Porder}
\label{lemma:porder}
Given $\accepts_1$ and $\accepts_2$ are set of accept sequences such that $\accepts_1 \greater \accepts_2$ and $m_1 = \text{GrammarMask}(\accepts_1, r)$ and $m_2 = \text{GrammarMask}(\accepts_2, r)$ then $\set(m_2) \subseteq \set(m_1)$
\end{restatable}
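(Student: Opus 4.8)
The plan is to unfold the definition of \textsc{GrammarMask} (Algorithm~\ref{alg:grammar}) into an explicit membership characterization and then reduce the inclusion $\set(m_2) \subseteq \set(m_1)$ to a single monotonicity property of \dmatch. Concretely, combining the loop body of Algorithm~\ref{alg:grammar} with the defining property of the mask store (Definition~\ref{def:lookup}, namely $t \in \set(\dmap{\alpha}(q, \sequence)) \iff \dmatch(t, q, \sequence)$), I would first establish that for $i \in \{1,2\}$ a token $t$ lies in $\set(m_i)$ if and only if there is some $\sequence \in \accepts_i$ with $q_r := \compute(q_0^{\sequence[0]}, r) \in \live(\dfastates_{\sequence[0]})$ and $\dmatch(t, q_r, \sequence[1:]) = \true$. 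The union over sequences in the algorithm is exactly this existential quantifier, so this turns the lemma into a purely combinatorial claim about how \dmatch{} behaves when the terminal sequence is extended.

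Next I would take an arbitrary $t \in \set(m_2)$ and exhibit a witness in $\accepts_1$. By the equivalence above there is $\sequence_2 \in \accepts_2$ with $q_r \in \live$ and $\dmatch(t, q_r, \sequence_2[1:]) = \true$. Since $\accepts_1 \greater \accepts_2$, I pick $\sequence_1 \in \accepts_1$ and $\sequence_3 \in \allterminals^*$ with $\sequence_2 = \sequence_1.\sequence_3$. Because the sequences in $\accepts_1, \accepts_2$ are non-empty (Algorithm~\ref{alg:grammar} indexes $\sequence[0]$, so the masks are well defined only for length $\geq 1$), $\sequence_1$ and $\sequence_2$ share the same first terminal $\sequence_1[0] = \sequence_2[0]$. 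Hence the state $q_r$ computed by the algorithm is identical for both sequences, the liveness test $q_r \in \live$ has already passed, and $\sequence_2[1:] = \sequence_1[1:].\sequence_3$. It therefore remains to show $\dmatch(t, q_r, \sequence_1[1:].\sequence_3) = \true$ implies $\dmatch(t, q_r, \sequence_1[1:]) = \true$, which yields $t \in \set(m_1)$ with witness $\sequence_1$.

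The crux — and the step I expect to be the main obstacle — is the following prefix-monotonicity of \dmatch, which I would isolate as an auxiliary lemma: for all $w, q$ and all terminal sequences $\sequence_a, \sequence_b$, $\dmatch(w, q, \sequence_a.\sequence_b) = \true$ implies $\dmatch(w, q, \sequence_a) = \true$. I would prove this by induction on $|\sequence_a|$ using the three-way recursion of Definition~\ref{def:dmatch}. Whenever condition~1 ($\compute(w,q) \in \live$) supplies the match it is independent of the terminal tail and immediately yields the shorter match. In the base case $\sequence_a = \{\}$ I would additionally use that every final state is live ($\dfafinal \subseteq \live(\dfastates)$, witnessed by the empty string), so a split of $w$ whose prefix reaches a final state satisfies condition~1 or condition~2 for the empty sequence. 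In the inductive step, a match of $\sequence_a.\sequence_b$ through condition~3 exposes a split $w = w_1.w_2$ with $\compute(w_1, q) \in \dfafinal$ and a recursive match of the tail from the next DFA's start state; applying the induction hypothesis to that shorter recursive call and re-assembling via condition~3 closes the step.

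Finally I would assemble the pieces by instantiating the auxiliary lemma with $\sequence_a = \sequence_1[1:]$ and $\sequence_b = \sequence_3$, which discharges the remaining implication and completes $\set(m_2) \subseteq \set(m_1)$. The only subtleties to watch are the bookkeeping around empty tails (when $\sequence_1$ has length one, so $\sequence_1[1:] = \{\}$ and the base case of the auxiliary lemma applies) and the convention for $\compute$ on the empty string; both are handled cleanly by the liveness-of-final-states observation.
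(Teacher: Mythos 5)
Your proposal is correct, but it takes a genuinely different route from the paper. The paper's proof is a two-line argument at the level of regular expressions: from $\sequence_2 = \sequence_1.\sequence_3$ it asserts $\pmatch(w, \regex_{\sequence_2}) \implies \pmatch(w, \regex_{\sequence_1})$ (which follows from decomposing a word in $\lang(\regex_{\sequence_1}.\regex_{\sequence_3})$ into a $\lang(\regex_{\sequence_1})$-prefix) and then concludes the mask inclusion, implicitly leaning on the $\pmatch$/$\dmatch$ equivalence (Lemma~\ref{lemma:eq}) and on Lemma~\ref{lemma:dmatch} to connect $\pmatch$ over $r.t$ with what Algorithm~\ref{alg:grammar} actually computes. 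You instead work directly on the operational side: you characterize $\set(m_i)$ via the algorithm and Definition~\ref{def:lookup}, observe that a nonempty prefix shares its first terminal so both iterations compute the identical state $q_r$ and pass the identical liveness test, and reduce everything to a prefix-monotonicity lemma for $\dmatch$, $\dmatch(w,q,\sequence_a.\sequence_b) \implies \dmatch(w,q,\sequence_a)$, proven by induction on $|\sequence_a|$ with the observation $\dfafinal \subseteq \live(\dfastates)$ handling the empty-tail base case. Your induction is sound (condition~1 is tail-independent; condition~3 peels one terminal and the inductive hypothesis reassembles), and your route has concrete advantages: it is self-contained, never invokes Lemma~\ref{lemma:eq} or Lemma~\ref{lemma:dmatch} (and so avoids the latter's side condition that no prefix of $r$ lies in $\lang(\terminal)$), and it tracks the liveness guard in the algorithm that the paper's proof silently skips. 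What the paper's argument buys in exchange is brevity, by reusing equivalences already established for the soundness theorem. One point worth making explicit in your write-up: the inclusion $t \in \set(m_2) \implies \dmatch(\cdot)$ requires reading Definition~\ref{def:lookup} as an \emph{iff} (the mask contains exactly the tokens satisfying $\dmatch$); the paper states only the ``if'' direction, and both your proof and the paper's need the stronger reading.
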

The proof of the lemma is in Appendix~\ref{sec:proofs}.

Theorem~\ref{thm:sound} proves soundness for accept sequences $\accepts_d$ of length $d$, while Lemma~\ref{lemma:porder} extends this proof to any set of accept sequences $\accepts$ where $\accepts \greater \accepts_d$. 
Our implementation, employing sequences of varying lengths, can be proven sound based on this extension.
% Say motivation and pretext for the completeness

The completeness theorem ensures that, under specified conditions, each token $t \in \set(m)$ guarantees $\partialcode.t$ as a syntactically valid partial output.
An implementation of \Tool{} with a short length of accept sequences although sound, may not guarantee completeness.  
To illustrate, let's take the example where $\sequence = \terminal_{f+1}, \terminal_{f+2} \in \accepts$ with simple singleton regular expressions $\regex_{\terminal_{f+1}} =$ \str{(} and $\regex_{\terminal_{f+2}} =$ \str{(}. 
In this case, our algorithm conservatively treats all tokens $t \in \vocab$ as syntactically valid, whenever \str{((} is a prefix of those tokens (e.g., \str{(((}, \str{(()})) even though some tokens may not meet syntactic validity. 
However, by assuming that the accept sequences are long enough, we can establish the completeness of the approach. 
\begin{restatable}{theorem}{Complete}
\label{thm:complete}
Let $\partialcode \in \lang_p(G)$ be the partial output, let $\accepts_d \subseteq \allterminals^{d}$ contain all possible accept terminal sequences of length $d$ and $r \in \alphabets^*$ denote the remainder. 
Suppose for any $t \in \vocab, d > \textit{len}(t)$ and $m = \text{GrammarMask}(\accepts_d, r)$ such that $t \in \set(m)$ then $\partialcode.t \in \lang_p(G)$
\end{restatable}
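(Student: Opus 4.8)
The plan is to derive, from the single hypothesis $t \in \set(m)$ together with $d > \textit{len}(t)$, an explicit string that extends $\partialcode.t$ into $\lang(G)$, thereby witnessing $\partialcode.t \in \lang_p(G)$. First I would unfold what membership in the mask means operationally. By Algorithm~\ref{alg:grammar} and Definition~\ref{def:lookup}, $t \in \set(m)$ forces the existence of an accept sequence $\sequence = \terminal_1, \ldots, \terminal_d \in \accepts_d$ for which the state $q_r = \compute(q_0^{\terminal_1}, r)$ is live in $\dfastates_{\terminal_1}$ and $\dmatch(t, q_r, \sequence[1:]) = \true$, where $\sequence[1:] = \terminal_2, \ldots, \terminal_d$ has length $d-1$. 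Writing $\sequence^{\square}, r = \partialparse(\partialcode)$ with $\fixpartialcode \in \lang(\sequence^{\square})$, the fact that $\sequence$ is an accept sequence gives, via Definition~\ref{def:acc}, that $\sequence^{\square}.\sequence \in \lang_p^\allterminals(G)$; this is the terminal-level fact I will eventually cash in.

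The crux is to show that the match of $t$ does not \emph{over-run} the terminal sequence, i.e. that $\dmatch(t, q_r, \sequence[1:])$ holds by bottoming out in a live state (condition~1 of Definition~\ref{def:dmatch}) rather than by consuming all of $\sequence$ and splitting off a non-empty suffix (condition~2). I would argue by counting characters of $t$. Unrolling the recursion, the only route to terminating via condition~2 is to apply condition~3 exactly $d-1$ times --- first completing $\terminal_1$ from $q_r$, then completing each of $\terminal_2, \ldots, \terminal_d$ from its start state to a final state --- and then peel off a non-empty leftover $w_2$. Under the standard assumption that no terminal matches $\epsilon$, each of the $d-1$ terminals $\terminal_2, \ldots, \terminal_d$ consumes at least one character of $t$, and the leftover $w_2 \in \alphabets^+$ consumes at least one more, so $\textit{len}(t) \geq (d-1) + 1 = d$. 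This contradicts $d > \textit{len}(t)$. Crucially, this bound must be taken on $\dmatch(t, q_r, \sequence[1:])$, where $r$ has already been absorbed into $q_r$: the analogous count on $r.t$ would only give $\textit{len}(r) + \textit{len}(t) \geq d$, which is too weak. Hence condition~2 is unreachable and the walk, after consuming all of $t$, must sit in a live state of some $\dfa_{\terminal_j}$ with $\terminal_{j+1}, \ldots, \terminal_d$ still pending.

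By Definition~\ref{def:live}, a live terminating state can be extended to a final state: there is $w_1 \in \alphabets^*$ driving the walk through the remaining DFAs to acceptance. Transporting this back through Lemma~\ref{lemma:dmatch} (whose hypothesis that no prefix of $r$ lies in $\lang(\terminal_1)$ is guaranteed by the remainder construction of Section~\ref{sec:parse}) and Lemma~\ref{lemma:eq} yields $\pmatch(r.t, \regex_\sequence) = \true$ via its \emph{extension} clause, case~2 of Definition~\ref{def:pmatch}: there is $w_1$ with $r.t.w_1 \in \lang(\regex_\sequence) = \lang(\sequence)$. Concatenating with $\fixpartialcode \in \lang(\sequence^{\square})$, the word $\partialcode.t.w_1 = \fixpartialcode.r.t.w_1$ parses to $\sequence^{\square}.\sequence$. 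Since $\sequence^{\square}.\sequence \in \lang_p^\allterminals(G)$, I pick a completion $\sequence'$ with $\sequence^{\square}.\sequence.\sequence' \in \lang^\allterminals(G)$ and a string $C'$ realizing $\sequence'$; by the equivalence between string and terminal-sequence membership noted in Section~\ref{sec:parse}, $\partialcode.t.w_1.C' \in \lang(G)$. As $\partialcode.t$ is a prefix of this word, Definition~\ref{def:gramdecode} delivers $\partialcode.t \in \lang_p(G)$, completing the argument.

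I expect the principal obstacle to be the character-counting step that excludes the over-run case, and in particular pinning down that each intermediate terminal of the accept sequence contributes at least one character to $t$; this is exactly where non-emptiness of terminals is indispensable and where the bound $d > \textit{len}(t)$ is consumed. A secondary technical nuisance is the clean composition of lexings across the $\fixpartialcode$--$r$ boundary: to justify that $\partialcode.t.w_1.C'$ actually parses to $\sequence^{\square}.\sequence.\sequence'$ rather than re-lexing across the seam, I must lean on the $1$-character-lookahead lexer assumption together with the defining property of the remainder $r$.
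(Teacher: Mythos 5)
Your proposal is correct and shares the paper's overall skeleton---unfold mask membership into a $\dmatch$ fact, exclude the ``over-run'' (prefix-match) case by a character count against $d > \len(t)$, and in the surviving case construct an explicit extension $w_1$ with $r.t.w_1 \in \lang(\regex_\sequence)$, which combined with $\fixpartialcode \in \lang(\sequence^{\square})$ and Definition~\ref{def:acc} yields $\partialcode.t \in \lang_p(G)$. The interesting difference is \emph{where} you perform the counting. The paper first transports $\dmatch(t,q,\sequence^p)$ through Lemma~\ref{lemma:dmatch} and Lemma~\ref{lemma:eq} to $\pmatch(r.t,\regex_\sequence)$ and then counts on $r.t$; since that count alone only gives $\len(r)+\len(t) \geq d$, the paper must additionally invoke the remainder property (no prefix of $r$ completes the leading terminals) to argue that $r$ is a proper prefix of the matched word $w_1$, recovering the contradiction. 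You instead count directly on the $\dmatch(t, q_r, \sequence[1{:}])$ derivation, where $r$ is already absorbed into $q_r$: the $d-1$ terminals $\terminal_2,\dots,\terminal_d$ each cost at least one character (non-empty terminal languages) plus a non-empty leftover, giving $\len(t)\geq d$ outright. This is a cleaner exclusion step---it needs only non-emptiness of terminals, not the remainder's non-lexability, and it also dispenses with the paper's simplifying assumption $d>2$. Two small points of rigor: your phrase ``transporting back through Lemma~\ref{lemma:dmatch} and Lemma~\ref{lemma:eq} yields case~2 of Definition~\ref{def:pmatch}'' is loose, because those lemmas are equivalences between the whole predicates, not case-by-case; but this is harmless since your live-state construction (Definition~\ref{def:live} plus non-emptiness of the pending terminals) already builds $w_1$ with $r.t.w_1 \in \lang(\regex_\sequence)$ directly, which \emph{is} the extension clause. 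Also, your final appeal to the lexer assumption for re-lexing $\partialcode.t.w_1.C'$ as $\sequence^{\square}.\sequence.\sequence'$ is a gap the paper's own proof shares (it passes from $\partialcode.t.w_1 \in \lang(\sequence^{\square}.\sequence_1)$ to $\lang_p(G)$ with the same implicit equivalence), so it does not count against you.
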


The proof of the theorem is in Appendix~\ref{sec:proofs}. 
While our completeness theorem ensures the \Tool{} consistently extends syntactically correct partial outputs, it does not guarantee termination with a correct and complete output. 
The focus of the theorem is on generating syntactically valid partial outputs, and the theorem does not address whether the process converges to a syntactically correct whole output. 
Termination considerations go beyond the completeness theorem's scope.

\subsection{\Tool{} Implementation}
\label{sec:implementation}
\add{
\noindent \textbf{Base LR parser: }
Bottom-up LR parsers, including LR(1) and LALR(1) parsers, process terminals generated from the lexical analysis of the code sequentially and perform shift or reduce operations~\cite{aho86}. 
LR($\kappa$) parsers have the immediate error detection property, ensuring they do not perform shift or reduce operations if the next input $\kappa$ terminals on the input tape is erroneous~\cite{10.1145/356628.356629}.
Consequently, every entry in the parsing table corresponding to $\kappa$ terminals that maps to a shift or reduce operation indicates that the terminal is acceptable. 
This property allows us to use LR(1) parsing tables to efficiently compute accept sequences at any intermediate point, making them preferable for \Tool{} applications. 
Thus, computing acceptable terminals with LR(1) parsers has a complexity of $O(|\allterminals|)$.
Although LALR(1) parsers are more commonly used due to their smaller memory requirements and faster construction, computing acceptable terminals with them requires iterating over all terminals leading to a complexity of $O(T_\parser \cdot |\allterminals|)$ due to the need for multiple reduce operations before confirming the validity of each terminal. 
Furthermore, while for $\kappa > 1$, LR($\kappa$) parsers can compute accept sequences of length $\kappa$ immediately, they incur extremely high memory requirements. 
Additionally, while we can use LL($\kappa$) parsing tables to compute the next $\kappa$ accept terminals, LR($\kappa$) parsers offer a higher degree of parsing power. 
Therefore, we employ LR parsers in \Tool{}.
Our evaluation indicates that LR(1) parsers suffice for eliminating most syntax errors, making them a practical choice for \Tool{}. 
}
We discuss how the implementation of how parsing is performed \emph{incrementally} to obtain the accept sequences and remainder in the Appendix~\ref{sec:incparse}.

\noindent \textbf{Accept Sequences:}
In our implementation, we focus on generating accept sequences of length 1 or 2, as they can be efficiently obtained from LR(1) parser. 
While this approach incurs some loss of precision, it leads to sound but incomplete syntactical decoding. 
Further, our evaluation demonstrates that this strategy is efficient and precise in practical scenarios.
We note that $\pmatch$ $r.t$ with a 2-length sequence is equivalent to $\dmatch$ with a 1-length sequence, as stated in Lemma~\ref{lemma:eq}. 
Consequently, in our work, we precompute mask stores $\dmap{0}$ and $\dmap{1}$.
On parsing the partial output $\partialcode$, the parser state \add{of LR(1) parsers} can be used to directly obtain syntactically acceptable terminals for the current completion ($\curaccepts$) and the next completion ($\nextaccepts$). 
We utilize $\curaccepts$ and $\nextaccepts$ to construct the accept sequences $\accepts$, considering two cases:

% \begin{description}
%     \itemsep0em
%     \item 
    
    \textbf{Case 1: $\partialcode = l_1.l_2 \dots l_f$:} Let $\terminal_f$ represent the type of the final lexical token. 
    In many instances, a token may be extended in the subsequent generation step, such as when an identifier name grows longer or additional words are appended to a comment. 
    In those cases if $\nextaccepts = {\terminal_1^1, \terminal_2^1, \dots, \terminal_n^1}$, we include all 2-length sequences $\{\terminal_f, \terminal_i^1\}$ for each $i$. 
    As previously discussed, the type of the final lexical token may change from $\terminal_f$. 
    Consequently, when $\curaccepts = \{\terminal_1^0, \terminal_2^0, \dots, \terminal_n^0\}$, we add 1-length sequences $\sequence_i$ for each terminal sequence $\{\terminal_i\}$ from $\curaccepts$, excluding $\terminal_f$. 
    This method ensures the generation of sequences accounting for potential extensions of the same token and changes in the type of the final lexical token.

    % \item 
    \textbf{Case 2 $\partialcode = l_1.l_2 \dots l_f.u$:} In this scenario, the current terminal is incomplete, leading to a lack of information about subsequent terminals. 
    Consequently, when $\nextaccepts = \{\terminal_1, \terminal_2, \dots, \terminal_n\}$, we define $\accepts$ as a set of sequences: $\{\sequence_1, \sequence_2, \dots, \sequence_n\}$, where each $\sequence_i$ corresponds to a single terminal sequence $\{\terminal_i\}$ from $\nextaccepts$. 
    Specifically, $\sequence_1 = \{\terminal_1\}$, $\sequence_2 = \{\terminal_2\}$, and so forth. 
    % This approach ensures the generation of accept sequences based on the available information for subsequent terminals when the current one is incomplete.
% \end{description}

\subsection{Time Complexity}
\label{sec:time}
In this section, we analyze the time complexity of the \Tool{} algorithm. 
We focus on the cost of creating the mask at each iteration of the LLM generation loop.
The key computations involved in this process are the parsing carried out by the incremental parser to compute $\accepts$ and the lookup/unification operations performed through the DFA mask store.

The incremental parser parses $O(1)$ new tokens at each iteration and computes $\accepts$. 
Let $T_A$ represent the time taken by the parser to compute the accepted terminals and $T_\parser$ denote the time the parser takes to parse a new token and update the parser state. 
Hence, in each iteration, the parser consumes $O(T_A+T_\parser)$ time to generate $\accepts$.
The DFA mask store lookup involves traversing $|\accepts|$ DFA sequences, with the number of steps in this walk bounded by the length of the remainder $r$. 
As $\accepts$ can have a maximum of $|\allterminals|$ sequences, the DFA walk consumes $O(|\allterminals| \cdot \len(r))$ time.
We employ a hashmap to facilitate efficient lookups at each DFA node, ensuring that all lookups take constant time. Consequently, this step takes $O(|\allterminals|)$ time. 
Let $T_\cup$ denote the time taken for computing the union of binary masks. 
With potentially $|\allterminals|$ union operations to be performed, the mask computation takes $O(T_\cup \cdot |\allterminals|)$ time.
Therefore, the overall time complexity at each step during generation is given by $O(T_A + T_\parser + |\allterminals| \cdot \len(r) + T_\cup \cdot |\allterminals|)$. 

For an incremental LR(1) parser, the complexity of our algorithm at each step of LLM token generation is $O(|\allterminals| \cdot \len(r) + T_\cup \cdot |\allterminals|)$. 
\add{With our lexer assumption, we ensure that} the remainder $r$ is small, allowing us to further simplify our complexity analysis to $O(T_\cup \cdot |\allterminals|)$ by treating $\len(r)$ as constant.
% Additionally, all these computations have the potential for parallelization during LLM generation, but this aspect is deferred to future work.

\noindent \textbf{Offline cost: } The cost of computing the mask store $\dmap{\alpha}$ offline involves considering all DFA states $q \in Q_\Omega$, all possible terminal sequences of length $\alpha$, and all tokens $t \in \vocab$. 
Given that we need to traverse the DFA for $\len(t)$ steps for each entry in the store, the time complexity for computing the mask store is $O(\textit{max}_{t \in \vocab}(\len(t)).|Q_\Omega|.|\vocab|.|\allterminals|^\alpha)$. 
Typically, $\len(t)$ is small, allowing us to simplify this to $O(|Q_\Omega|.|\vocab|.|\allterminals|^\alpha)$. 
In our implementation, the use of $\dmap{0}$ and $\dmap{1}$ results in a cost of $O(|Q_\Omega|.|\vocab|.|\allterminals|)$.
The size of $|Q_\Omega|$ depends on the complexity of regular expressions for the terminals, which may vary for each grammar. 
However, as demonstrated in our evaluation section, these mask stores can be computed within 10 minutes for each combination of grammar and LLM. 
This computation is a one-time cost that can be amortized over all generations performed for the given LLM and grammar.

\subsection{SynCode Framework}
\label{sec:framework}

\begin{figure}[tb]
\centering
\includegraphics[width=13cm]{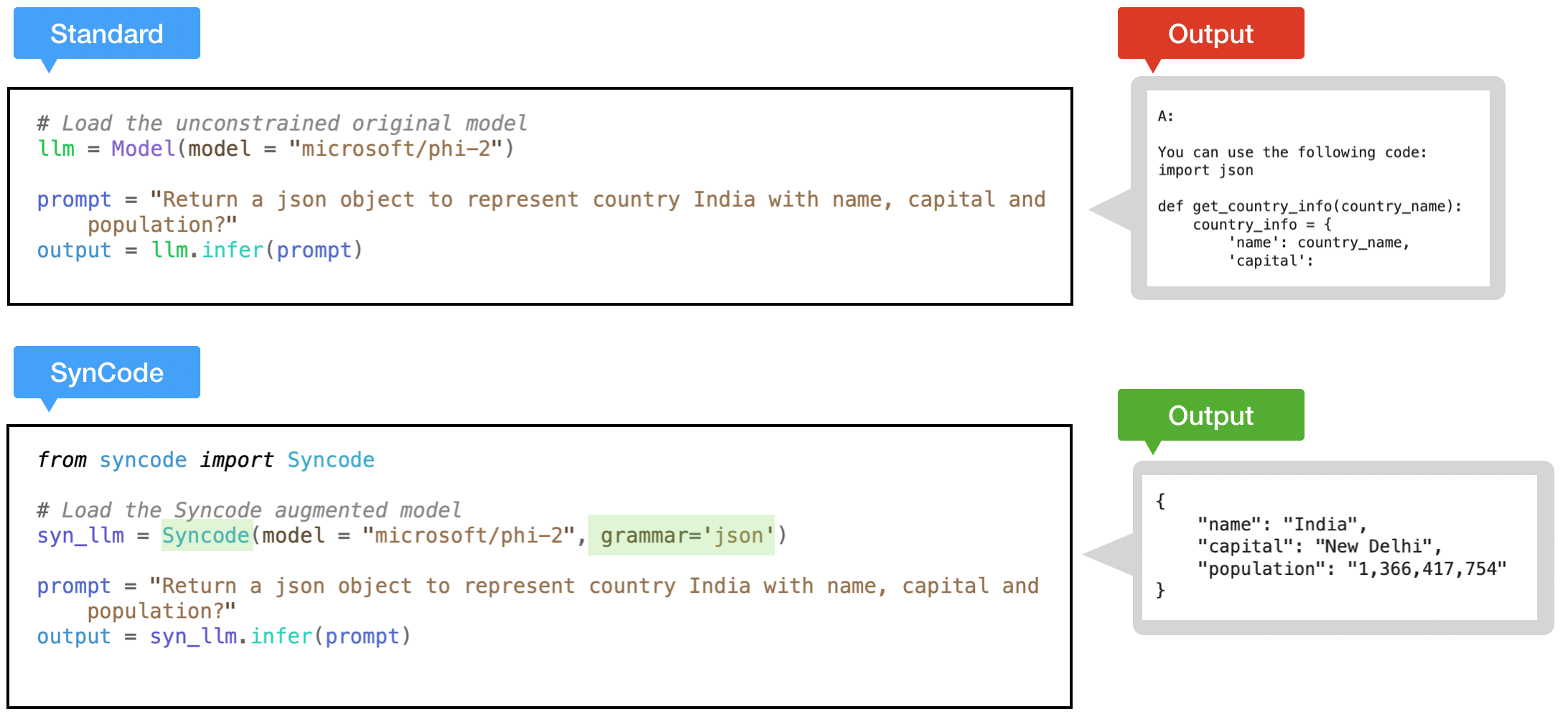}
\vspace{-.1in}
\caption{The upper section displays erroneous output from a standard LLM generation, failing to produce the intended JSON format. The lower segment showcases the fix achieved through the use of the \Tool{} framework.
} 
\label{fig:dfa}
\vspace{-.1in}
\end{figure}

Figure~\ref{fig:dfa} shows how \Tool{} framework can be used in practice by selecting a grammar. 
We next discuss other important features of the framework.

\noindent \textbf{Adding a New Grammar.} 
Our Python-based \Tool{} framework is shipped with several built-in grammars such as JSON, Python, Go, etc. 
A user can apply \Tool{} for arbitrary grammar by providing the grammar rules in EBNF syntax with little effort.
The grammar needs to be unambiguous LALR(1) or LR(1) grammar for using the respective base parsers.
% The power of the LALR(1) parser is sufficient for most mainstream formal languages~\cite{10.5555/42212}.

\noindent \textbf{Ignore Terminals.}
Our EBNF syntax adopted from Lark allows one to provide \emph{ignore terminals} as part of the grammar. 
Lark ignores those terminals while parsing. 
In the case of Python, this includes \emph{comments} and \emph{whitespaces}. 
\Tool{} handles these ignore terminals by adding a trivial 1-length accept sequence for each of these ignore terminals.

\noindent \textbf{Parsers.}
\Tool{} supports both LALR(1) and LR(1) as base parsers. 
We adapt Lark's~\cite{lark} LALR(1) parser generator for \Tool{}. 
Since Lark does not implement the LR(1) parser generator, we implemented the LR(1) parser generator on top of the Lark.
The generation of LR(1) parser which is performed offline may take longer time compared to the LALR(1) parser (e.g., up to 2 mins for our Python grammar), however, it is more efficient at inference time in computing the accept sequences.
Further, since the Lark-generated parser is non-incremental, we build the incremental parser on top of it by caching the parser state as described in Appendix~\ref{sec:incparse}.

\noindent \textbf{Non-CFG Fragments of PLs.}
\Tool{} can handle non-context-free fragments of PLs, such as \emph{indentation} in Python and end-of-scope markers in Go.
To support languages with indentation, such as Python and YAML, \Tool{} has a mechanism that tracks acceptable indentation for the next token, effectively masking tokens that violate indentation constraints at a given point.
This indentation constraint feature can be enabled with any new grammar. 
Similarly, for handling other custom parsing rules beyond CFGs, users can add additional constraints to the generation by overriding specific \Tool{} functions. 
For instance, in Go, semicolons are optional and may be automatically inserted at the end of non-blank lines. 
Implementing such constraints in \Tool{} programmatically requires minimal effort.
However, \Tool{} currently does not support enforcing semantic constraints. (e.g, if a variable in a program is defined before it is used.)

% \noindent \textbf{Opportunistic Mode.}
% \add{
% Additionally, \Tool{} incorporates a form of optimization also present in~\cite{beurerkellner2024guiding, llamacpp}, referred to as opportunistic masking. 
% Instead of initially computing the mask, as outlined in Algorithm~\ref{alg:main}, we first execute the decode step of the LLM and use the parser to examine the token suggested by the model. 
% Only if the token is incorrect\fTBD{how wo we know the overhead is going to be better this way than the other?} do we proceed to compute the remainder of the mask, thus allowing the LLM to direct the decoding process.\fTBD{Where do we use it in our experiments? Do we show this optimization is useful/nencessary? Does it perform better in our sys than in those from the 2 refs?}
% }

% \noindent \textbf{Under-approximation.}
% \add{
% Write here!
% }

\section{Experimental Methodology}
\newcommand{\CS}{C\nolinebreak\hspace{-.05em}\raisebox{.6ex}{\bf \#}}
\noindent \textbf{Models.}
In our evaluation, we select a diverse set of state-of-the-art open-weight LLMs of varying sizes. 
Since closed-source LLMs, such as GPT-4 or Gemini, do not expose generation logits through their APIs, applying a constrained generation approach in \Tool{} is not feasible. 
Therefore, we focus on enhancing smaller, open-source models in our evaluation.
We select the state-of-the-art models Llama-2-7B-chat ~\cite{touvron2023llama2openfoundation} and Gemma2-2B-it ~\cite{gemmateam2024gemma2improvingopen} for our JSON evaluation.
For text-2-SQL generation experiments, we use Llama-2-7B-chat, Llama-3.2-1B, Llama-3.2-3B, and Gemma-2-2B-it.
Furthermore, we chose models such as \llama{}~\cite{touvron2023llama}, \wizard{}~\cite{luo2023wizardcoder}, and \codegen{}~\cite{nijkamp2023codegen} for code completion. 
% These models were chosen since they were the top-performing small models featured on the BigCode Models Leaderboard~\cite{bigcode2023leaderboard}.
% We give more detail about these models in Appendix~\ref{sec:models}.

% \input{tables/models}
\noindent \textbf{Datasets.}
We focus our evaluation on generating JSON, SQL, Python, and Go outputs. 
We choose JSON as it is supported by the baselines~\cite{llamacpp, willard2023efficient}, which allows us to compare against them.
We selected Python since it is extensively present in the training data employed for LLM training and fine-tuning. 
Conversely, we opted for Go due to its lower standard LLM accuracy and a relatively smaller presence in the training data.
We consider JSON-Mode-Eval~\cite{jsoneval} dataset for text to JSON generation and HumanEval and MBXP~\cite{athiwaratkun2023multilingual} dataset for evaluating Python and Go code generation. 
We display examples of prompts from these datasets in Appendix~\ref{sec:prompts}.

\begin{itemize}
\itemsep0em 
\item \noindent \textbf{JSON-Mode-Eval~\cite{jsoneval}.} It consists of 100 zero-shot problems. Each problem prompt follows the chat format with a system prompt specifying a JSON schema and a user prompt requesting the LLM to generate a JSON object that contains specified contents.

\item \noindent \textbf{Spider text-2-SQL.} Spider~\cite{yu-etal-2018-spider} text-to-SQL dataset consists of 1,034 problems of varying difficulty levels: \emph{easy} (250), \emph{medium} (440), \emph{hard} (174), and \emph{extra hard} (170).

\item \noindent \textbf{Multilingual HumanEval~\cite{athiwaratkun2023multilingual}.}  It is an extension of the original HumanEval collection~\cite{chen2021evaluating}, which comprises 164 Python programming problems, to include other languages like Go. Each problem in the dataset consists of a function definition, and text descriptions of the function as a part of the function docstring.

\item \noindent \textbf{MBXP~\cite{athiwaratkun2023multilingual}.} It is extended from the MBPP~\cite{austin2021program} dataset for Python to support other languages such as Go. The dataset consists of 974 problems with the same format as HumanEval.

\end{itemize}

% \noindent \textbf{Parsers.}
% Our evaluation considers the LR(1) base parser as the default and considers the LALR(1) parser for the ablation.

% We build our parsing toolkit on top of Lark\cite{lark}.}
\noindent \textbf{Grammars.}
For Python, we used the readily available grammar from the Lark repository. 
For Go, we converted an existing LL(*) grammar from~\cite{antlr} implementation to LR(1) grammar for our use.
We write the CFG for these languages using the Extended Backus-Naur Form (EBNF) syntax.
We use a substantial subset of grammar for Python and Go syntactic generation with \Tool{}. 
The grammar has commonly used features of the language such as control flow, and loops, and excludes some features such as Python's support for lambda functions.  
Adding support for more features would require more engineering effort but it will not change the overall technique.
The grammars we used are available in Appendix~\ref{sec:grammar}.
The JSON grammar consists of 19 rules and 12 terminals.
The Python grammar we used contains 520 production rules and 94 terminals, whereas the Go grammar comprises 349 rules and 87 terminals.  

\noindent \textbf{Evaluating Syntax Errors.}
For evaluating the errors in the generated output in each of the languages, we use their respective standard compilers.

% \noindent \textbf{Hyperparameter Values.}
% We use these same hyperparameter values for our baselines.

\noindent \textbf{Experimental Setup.}
We run experiments on a 48-core Intel Xeon Silver 4214R CPU with 2 NVidia RTX A5000 GPUs. 
\Tool{} is implemented using PyTorch~\cite{NEURIPS2019_9015}, HuggingFace transformers library~\cite{wolf-etal-2020-transformers} and Lark library~\cite{lark}.

\noindent \textbf{Baselines.}
\add{
We evaluate three state-of-the-art baselines \outlines{}~\cite{willard2023efficient} v0.1.1, \guidance{}~\cite{guidance} v0.1.16, and \llamacpp{}~\cite{llamacpp} v0.3.1 in our study. 
% \outlines{} implementation is most similar to Synchromesh~\cite{poesia2022synchromesh} open-source reimplementation\cite{synchromesh_github} from a subset of original authors of the paper. 
% However, we found that this open-source reimplementation has several bugs in its implementation and it does not work on most grammars we tried in our evaluation. 
The algorithmic differences in the baselines and \Tool{} are discussed in Section~\ref{sec:related}.
We perform a warmup run for each experiment where we measure inference time to ensure that one-time precomputation time is not included in the inference runtime.
For a fair comparison with baselines, \Tool{} uses opportunistic masking~\cite{beurerkellner2024guiding}, an optimization used in \llamacpp{} and \guidance{}. 
Instead of computing the full logit vector mask upfront, the model generates a token and only computes the mask if the proposed token is incorrect.
}

\section{Experimental Results} \label{sec:exp_main}
\add{
In this section, we evaluate \Tool{} on generating various formal languages.
We compare \Tool{} with state-of-the-art baselines and perform various ablation studies. 
}

\add{
\Tool{} allows the model to generate a special EOS token (indicating the end of generation) only when the output belongs to $\lang(G)$. 
In practice, however, LLM generation typically stopped after a fixed maximum number of tokens, $n_\textit{max}$. 
Therefore, terminating with the EOS token within this limit is not always guaranteed potentially resulting in syntax errors. 
% Nevertheless, this approach significantly enhances syntactic accuracy, as we demonstrate in our evaluation.
}

\subsection{Effectiveness of \Tool{} for JSON Generation} 
\label{sec:exp_errors}
% % \begin{wraptable}{r}{.6\textwidth} 
% \begin{table}[b]
%     \tablesize
%     \centering
%     % \vspace{-0.15in}
%     \caption{Effectiveness of \Tool{} in generating syntactic Go code}
%     \begin{tabular}{@{}ll ccc@{}}
%         \toprule
%         Metric & Prompt Type & \Baseline{} & \Tool{} & $\downarrow$ \\
%         \hline
%          & CodeGen-350M  & 573 & 49 & 91\% \\
%         HumanEval & WizardCoder-1B  & 1031 & 50 & 95\% \\
%         & Llama-7B  & 725 & 10 & 99\% \\
%         \hline
%         & CodeGen-350M  & 212 & 2 & 99\% \\
%         MBXP & WizardCoder-1B  & 243 & 14 & 94\% \\
%         & Llama-7B & 414 & 1 & 99\% \\
        
%         \bottomrule
%     \end{tabular}
%     % \vspace{-0.1in}
%     \label{tab:json_eval}
% % \end{wraptable}
% \end{table}

\begin{table}[htb]
    \tablesize
    \centering
    % \vspace{-0.15in}
    \caption{Effectiveness of \Tool{} in generating JSON with original and explicit prompts.}
    % \begin{tabular}{@{}ll ccc@{}}
    \begin{tabular}{@{}l r r r r r r r@{}}
        \toprule
        % Dataset & Architecture & \Baseline{} & \Tool{} & $\downarrow$ \\
        Model & Tool & \multicolumn{2}{c}{Syntax Errors} & \multicolumn{2}{c}{Validation Accuracy (\%)}
        & \multicolumn{2}{c}{Generation Time (s)}\\
        & & Original & Explicit & Original & Explicit
         & Original & Explicit\\
        % \hline
        \midrule
        & \textbf{\Tool{}}  & \textbf{0} & \textbf{0} & \textbf{66\%} & \textbf{84\%} & \textbf{3.07} & \textbf{3.02} \\

        & \Baseline{}  & 98 & 41 & 2\% & 58\% & 3.58 & 3.11 \\
        Llama-2-7B-chat & \llamacpp{} & 23 & 23 & 63\% & 68\% & 21.91 & 20.84 \\
        & \guidance{} & 13 & 11 & 57\% & 65\% & 5.14 & 4.14 \\
        & \outlines{}$^\dag$  & 16 & 14 & 62\% & 56\% & 38.07 & 41.79 \\
        & \transformerscfg{}  & 2 & 0 & 62\% & 64\% & 6.08 & 4.01 \\
        \midrule

        & \textbf{\Tool{}}  & \textbf{0} & \textbf{0} & \textbf{99\%} & \textbf{100\%} & 4.82 & 4.74 \\

        & \Baseline{}  & 59 & 59 & 41\% & 41\% & 4.32 & 5.82 \\
        Gemma2-2B-it & \llamacpp{} & 7 & 7 & 92\% & 91\% & 22.06 & 21.97 \\
        & \guidance{} & 1 & 1 & 96\% & 96\% & 6.09 & 5.47 \\
        & \outlines{}  & 2 & 0 & 67\% & 90\% & \textbf{1.99} & \textbf{2.64} \\
         & \transformerscfg{}  & 1 & 0 & 96\% & 95\% & 19.12 & 9.49 \\
        
        \bottomrule
    \end{tabular}
    \begin{tablenotes}
    \footnotesize
    \item $\dag$ We observed issues when using Llama-2-7B-chat with Outlines v0.1.1 and therefore, we use older version v0.0.46. 
    
    % use v0.0.46 when running Outlines for Llama-2-7B-chat and v0.1.1 for Gemma2-2B-it as these are the latest releases that give the best performance on these models.
\end{tablenotes}
    % \vspace{-0.1in}
    \label{tab:json_eval}
% \end{wraptable}
\end{table}

We evaluate the effectiveness of \Tool{} in 
guiding LLMs with the JSON grammar to generate syntactically correct JSON. 
We run the inference with Llama-2-7B-chat and Gemma2-2B-it with \Tool{}, \llamacpp{}, \outlines{}, \guidance{}, \transformerscfg{}, and \baseline{} generation on the 100 problems from the JSON-Mode-Eval dataset. 
We select these models for the JSON experiment as they are supported by all considered baselines. 

% We chose \llamacpp{}, \outlines{}, and \guidance{} as they have JSON grammars that work with their frameworks. 
We run \llamacpp{} on a CPU as it requires a specific CUDA version not compatible with our machine. 
We set max new tokens $n_\textit{max} = 400$. 
We also report an evaluation of augmenting the prompts with an explicit request to output only JSON. 
We present an example of these explicit prompts in Appendix~\ref{sec:prompts}. 
We evaluate the correctness of JSON generated by an LLM by first evaluating whether the JSON string can be parsed and converted to a valid JSON object. 
We further evaluate whether the generated JSON is valid against the schema specified in the prompt. 
Although the \Tool{} does not enforce the specific schema to the JSON output for each task, we believe it is an important research question to check whether the reduced syntax errors due to \Tool{} can also lead to improved schema validity.

Table~\ref{tab:json_eval} presents our evaluation results. We report results for both the prompts taken directly from the dataset (denoted as "Original") and after augmenting these prompts with an explicit request to output JSON (denoted as "Explicit"). 
In the "Validation Accuracy" column, we compute the percentage of valid completions against their respective schemas. 
In the "Generation Time (s)" column, we report the average time taken to generate a completion to a prompt from the dataset. 
Guiding Llama-2-7B-chat and Gemma2-2B-it  with the JSON grammar via \Tool{} eliminates syntax errors in generated JSON. 
On the other hand, \baseline{} generation results in syntactically incorrect JSON for 98\% and 59\% of completions to the original prompts for the Llama-2-7B-chat and Gemma2-2B-it models respectively. A majority of these errors are due to the generation of natural language before and after the JSON. Explicit prompts somewhat mitigate this issue, but still results in syntactically invalid outputs to 41\% and 59\% of these prompts for \baseline{} Llama-2-7B-chat and Gemma2-2B-it generation respectively, primarily due to errors such as unmatched braces and unterminated string literals.\llamacpp{}, \outlines{}, \guidance{}, and \transformerscfg{} face similar problems with closing braces and terminating strings.
% Augmenting \chat{} with \outlines{}, results in the generation of nonsensical output, such as numbers, empty braces, empty strings, and incomplete JSON, that does not align with the prompts. 

Notably, \Tool{} significantly improves the JSON schema validation accuracy of Gemma2-2B-it completions over \baseline{} generation, from 41\% to 99\% and 41\% to 100\% for original and explicit prompts respectively. 
\add{Furthermore, \Tool{} outperforms \llamacpp{}, \outlines{},\guidance{}, and \transformerscfg{} in validation accuracy of Llama-2-7B-chat completions by 3\%, 4\%, 9\%, and 4\% respectively for original prompts and 16\%, 28\%, 19\%, and 20\% for explicit prompts.} The remaining schema validation errors with \Tool{} are semantic errors, including data type mismatch between the generation JSON and schema, missing fields required by the schema, and adding extra fields not allowed by the schema. \Tool{} is faster than all baseline grammar-guided generation methods for Llama-2-7B-chat and all but \outlines{} for Gemma2-2B-it. The low generation time with \outlines{} for Llama-2-7B-chat can largely be attributed to the fact that many of its completions to prompts are empty JSON (35\% of original and 7\% of explicit) which takes few tokens to generate, but often does not conform to the schema.

% \add{\Tool{} is 13.8x faster than \outlines{} for explicit prompts and 12.4x faster for original prompts. Similarly, \Tool{} is 1.4x faster than \guidance{} for explicit prompts and 1.6x faster for original prompts.} 
% \llamacpp{} runtimes are relatively slower than other approaches as it was run on a CPU. 
% We observed that JSON grammar-guided generation in \llamacpp{} reduces the average time to generate a completion by between 6.4\% to 10.9\% over standard \llamacpp{} generation.  

Interestingly, we observe that for Llama-2-7B-chat, \Tool{} also reduces the average generation time over \baseline{} generation. 
We attribute this finding to the fact that without grammar-guided generation, the model generates syntactically invalid output, such as natural language, in addition to JSON and thus generates more tokens in response to the same prompt than with \Tool{}. 
Thus, augmenting LLMs with \Tool{} can significantly improve syntactical correctness and runtime efficiency.

\subsection{Effectiveness of \Tool{} for SQL Generation} 
\label{sec:exp_errors}
This study demonstrates that \Tool{} improves text-to-SQL generation by enforcing grammar constraints, ensuring that generated SQL queries are syntactically accurate. 
We evaluate the following models for SQL generation: Llama-3.2-1B, Llama-3.2-3B (base models) and Llama-2-7B-chat, Gemma-2-2B-it (instruct-tuned models). 
We observe that despite explicitly prompting to only generate the SQL query, the instruct-tuned Gemma-2-2B-it model often enclosed generated SQL queries within markers, such as \str{\textasciigrave\textasciigrave\textasciigrave} or \str{\textasciigrave\textasciigrave\textasciigrave sql}. 
Thus, we consider another baseline for Gemma-2-2B where we extract the SQL query substring within these markers, handling cases where the output format is either \str{\textasciigrave \textasciigrave \textasciigrave \{SQL query\} \textasciigrave \textasciigrave \textasciigrave} or \str{\textasciigrave \textasciigrave \textasciigrave sql \{SQL query\} \textasciigrave \textasciigrave \textasciigrave}.

For evaluation, we use the Spider~\cite{yu-etal-2018-spider} text-to-SQL dataset, which consists of 1,034 problems of varying difficulty levels: \emph{easy} (250), \emph{medium} (440), \emph{hard} (174), and \emph{extra hard} (170). 
We prompt models with schema information and text queries, instructing them to generate SQL queries only. 
Using greedy decoding and \str{\textbackslash n\textbackslash n} is used as an additional stopping condition for all experiments.

\begin{table}[tbh]
    \scriptsize
    \centering
    \caption{Comparison of \Tool{} and unconstrained generation on SQL generation.}
    \begin{tabular}{llcccccccc}
        \toprule
        \multirow{2}{*}{\textbf{Model}} & \multirow{2}{*}{\textbf{Method}} & \multicolumn{5}{c}{\textbf{Accuracy (\%)}} & \multirow{2}{*}{\textbf{Execute (\%)}} & \multirow{2}{*}{\textbf{Tokens}} & \multirow{2}{*}{\textbf{Time (s)}} \\
        \cmidrule(lr){3-7}
        & & \textbf{Easy} & \textbf{Medium} & \textbf{Hard} & \textbf{Extra} & \textbf{Overall} & & & \\

        \midrule
        & \Baseline{} & 0.0 & 0.0 & 0.0 & 0.0 & 0.0 & 0.0 & 221.43 & 9.883 \\
Gemma2-2B-it & \Baseline{}+ & 44.8 & 18.9 & 21.9 & 17.1 & 25.4 & 77.6 & 221.43 & 9.893  \\
& \Tool{} & \textbf{45.8} & 18.7 & \textbf{23.3} & \textbf{20.6} & \textbf{26.3} & \textbf{78.2} & \textbf{135.17} & \textbf{5.876} \\

        \midrule
        & \Baseline{} & 34.4 & 22.0 & 12.1 & 4.1 & 20.4 & 32.6 & 44.74 & 1.148 \\
Llama-2-7b-chat & \Tool{} & \textbf{40.0} & \textbf{27.3 }& \textbf{13.8} & \textbf{5.9} & \textbf{24.6} & \textbf{41.6} & 50.33 & 1.483 \\

        \midrule
        & \Baseline{} & 40.8 & 24.8 & 20.7 & 10.6 & 25.6 & 51.1 & 48.00 & 0.509 \\
Llama-3.2-1B & \Tool{} & \textbf{46.8} & \textbf{28.2} & \textbf{23.0} & \textbf{10.6} & \textbf{28.8} & \textbf{59.0} & 56.36 & 0.916 \\

            \midrule
             & \Baseline{} & 38.0 & 29.5 & 28.2 & 12.9 & 28.6 & 67.4 & 47.78 & 0.846 \\
Llama-3.2-3B & \Tool{} & \textbf{47.2} & \textbf{34.8} & \textbf{32.8} & \textbf{19.4} & \textbf{34.9} & \textbf{81.4} & 47.63 & 1.164 \\

        \bottomrule
    \end{tabular}
    \label{tab:sql_comparison}
    % \vspace{-.2in}
\end{table}

Table~\ref{tab:sql_comparison} presents a comparison of \Tool{} and unconstrained generation across key metrics. The Accuracy (\%) column shows the percentage of correctly generated SQL queries across different difficulty levels.
Execute (\%) reflects the percentage of queries successfully executed without runtime errors in SQLite. 
The Tokens column reports the average number of tokens generated, and Time(s) shows the average generation time. 
\Baseline{}+ row for Gemma2 denotes the result for the additional baseline where we extract the SQL query from the full generation using regex matching.

We observe that \Tool{} achieves better performance over the baselines in terms of both execution percentage and execution accuracy.
For example, with the Llama-3.2-3B model, \Tool{} achieves an execution success rate of 81.4\%, compared to 67.4\% for unconstrained generation. 
Further, the execution accuracy improves from 28.6\% to 34.9\%.
In the case of the Gemma2-2B-it model, we observe that \Tool{} shows a moderate improvement over the \Baseline{}+ accuracy. 
However, it shows a significant gain in the speed (1.7x) of generation and a reduction in the number of tokens generated. 
Although the Gemma2-2B-it model has a good execution percentage without any runtime errors.
The instruct-tuned models tends to use large number of tokens that are not part of the query.
In applications where the goal is to use LLMs to generate SQL queries without additional explanations, the result with Gemma2-2B-it shows that \Tool{} is useful in improving the efficiency of LLM generation along with the improvements in accuracy. 

\subsection{Effectiveness of \Tool{} for GPL} 
\label{sec:exp_errors}

\begin{table}[tbh]
    \tablesize
    \centering
    % \vspace{-0.15in}
    \caption{Number of programs with syntax errors for \baseline{} and \Tool{} generation ($\downarrow$ shows how much \Tool{} reduces the occurrence of the syntax errors compared to Standard generation.}
    % \begin{tabular}{@{}ll ccc@{}}
    \begin{tabular}{@{}l l r r r r r r@{}}
        \toprule
        % Dataset & Architecture & \Baseline{} & \Tool{} & $\downarrow$ \\
        Dataset & Model & \multicolumn{3}{c}{Python} & \multicolumn{3}{c}{Go}\\
        & & \Baseline{} & \Tool{} & $\downarrow$ & \Baseline{} & \Tool{} & $\downarrow$ \\
        % \hline
        \midrule
        
         & CodeGen-350M  & 271 & \textbf{15} & 95\% & 573 & \textbf{49} & 91\% \\
        HumanEval & WizardCoder-1B  & 36 & \textbf{3} & 92\% & 1031 & \textbf{50} & 95\% \\
        & \llama{}  & 291 & \textbf{2} & 99\% & 725 & \textbf{10} & 99\% \\
        \hline
        & CodeGen-350M  & 78 & \textbf{4} & 95\% & 212 & \textbf{2} & 99\% \\
        MBXP & WizardCoder-1B  & 28 & \textbf{2} & 93\% & 243 & \textbf{14} & 94\% \\
        & \llama{} & 148 & \textbf{5} & 97\% & 414 & \textbf{1} & 99\% \\
        
        \bottomrule
    \end{tabular}
    % \vspace{-0.1in}
    \label{tab:main_eval_python}
% \end{wraptable}
\end{table}

%---------------------- 
We run inference with \codegen{}, \wizard{}, and \llama{} with \Tool{} and with the \baseline{} no-masking approch. 
We do not compare \Tool{} with the other baselines as none of these works support general-purpose programming language grammars. 
We experiment with both Python and Go programming languages, evaluating performance on zero-shot problems from the HumanEval and MBXP datasets. 
For each dataset, we generate $n = 20$ and $n = 1$ samples per problem with the LLM, respectively. 
We run the LLM-generated code completion against a predefined set of unit tests. 
For each unit test, we record the error type when running the generated program against that test case. 
We use the hyperparameters temperature $= 0.2$ and top $p = 0.95$. 
% Table \ref{tab:main_eval_python} presents our results for Python code. The columns \baseline{} and \Tool{} represent the total number of errors of a particular Error Type of LLM-generated code completions to problems in a particular dataset when utilizing that respective generation approach. The column $\downarrow$ represents the percentage reduction from the \baseline{} column to the \Tool{} column. Notably, our experiments reveal that \Tool{} reduces the number of syntax and indentation errors by over 90\% over the baseline in most experiments. Moreover, \Tool{} reduces the number of syntax or indentation errors to less than 1\% of the total test cases. 
Table~\ref{tab:main_eval_python} presents our results for Python and Go. 
The columns \baseline{} and \Tool{} represent the total number of generated programs with syntax errors for the respective approaches. 
The column $\downarrow$ designates the percentage reduction in syntax errors from the \baseline{} generation to the \Tool{} generation.
In this evaluation, across both HumanEval and MBXP datasets, we generate a total of 4154 samples for each language. 
On average, of all \baseline{} generated samples, ~6\% and ~25\%  have syntax errors for Python and Go, respectively. 
% The HumanEval dataset has 164 problems per language, and we generate $n = 20$ samples per problem, resulting in a total of 3280 generated samples per language. 
% The MBXP dataset has 974 problems per language and $n = 1$ samples are generated per problem. The LLMs with \baseline{} generation generate syntax errors for ~9\% and ~30\% of MBXP Python and Go samples respectively on average.

 Notably, our experiments reveal that \Tool{} reduces the number of syntax errors by over 90\% over the baseline in most experiments. 
 Moreover, \Tool{} reduces the number of syntax errors to less than 1\% of the total samples. 
 Interestingly, we observe significantly more Syntax errors in standard LLM-generated Go code than in Python code, likely because the LLMs are trained more extensively on Python code than Go. 
 Thus, \Tool{} can be especially effective for Go and more underrepresented programming languages, where LLMs are more likely to generate syntax errors due to a limited understanding of the language. 
 \Tool{} can bridge this gap by guiding the LLM to sample only the syntactically valid tokens during decoding.

% Furthermore, we report our findings for Go code in Table \ref{tab:main_eval_go} and demonstrate that \Tool{} reduces syntax errors by over 90\% compared to standard generation. Interestingly, we observe significantly more Syntax errors in standard LLM-generated Go code than in Python code, likely due to the fact that the LLMs are trained more extensively on Python code than Go. Thus, \Tool{} can be especially effective for Go and more underrepresented programming languages, where LLMs are more likely to generate syntax errors due to a limited understanding of the language. \Tool{} can bridge this gap by guiding the LLM to sample only the syntactically valid tokens during decoding. 

We further analyze the errors in Python and Go code generated by the LLMs augmented with \Tool{}, an example of which is presented in Appendix~\ref{sec:syncode_error}. 
All of the errors were because the LLM failed to generate a complete program within the maximum token limit. 
Recall, \Tool{} provides guarantees of completeness for syntactically correct partial programs. 
However, it does not guarantee convergence to a syntactically correct and complete program.
% If the LLM cannot generate a complete program within the maximum token limit, \Tool{} cannot guarantee that the generated program is syntactically correct.

% Across all datasets, programming languages, and model architectures evaluated, LLMs augmented with \Tool{} generate code with significantly fewer syntax and indentation errors, underscoring \Tool{}'s effectiveness in addressing syntactic issues.

% \input{tables/runtime}
% \noindent{\bf Runtime Comparison.}
% We compare the runtime of code generation with and without \Tool{}. 
% We used Python and Go prompts from the HumanEval dataset.
% For each example, we generate a single sample ($ n = 1 $) per problem, with the max new tokens parameter set to 200. 
% Table~\ref{table:runtime} reports the average time taken to generate a code completion to a prompt. 
% Our results reveal that \Tool{} increases the generation time by  1.22x on average. Despite these slight time increases, the benefits of \Tool{} in reducing syntax errors outweigh the incurred overhead. 

% As highlighted earlier, LLMs augmented with \Tool{} generate code with significantly fewer syntax errors. 
% Therefore, the marginal increase in generation time with \Tool{} is justified by the potential for improved code quality and syntactic correctness, as demonstrated by our findings. 

\begin{table}[tb] 
    \tablesize
    \centering
    % \vspace{-0.2in}
    \caption{Functional correctness on HumanEval problems}
    \begin{tabular}{@{}l l r r r r r@{}}%{@{}lll rrr@{}}
        \toprule
        Metric & Architecture & \multicolumn{2}{c}{Python} & \multicolumn{2}{c}{Go}\\
        
        & & \Baseline{} & \Tool{} & \Baseline{} & \Tool{} \\
        \midrule
        & CodeGen-350M & 6.8\% & \textbf{6.9\%} & 3.6\% & 3.6\% \\
        
        pass@1 & WizardCoder-1B & 20.0\% & 20.0\% & 9.3\% & \textbf{9.5\%} \\
        
        & \llama{} & 11.2\% & \textbf{11.5\%} & 3.8\% & \textbf{4.25\%} \\
        \hline
        & CodeGen-350M & 10.6\% & 10.6\% & 5.6\% & \textbf{6.1\%} \\
        
        pass@10 & WizardCoder-1B & 27.6\% & \textbf{28.4\%} & 12.5\% & \textbf{13.7\%} \\
        
        & \llama{} & 17.1\% & \textbf{18.9\%} & 8.8\% & 8.8\% \\

        \bottomrule
    \end{tabular}
    % \vspace{-0.4in}
    \label{tab:pass@1}
\end{table}
% \begin{table}[b] 
%     \tablesize
%     \centering
%     \vspace{-0.2in}
%     \caption{Functional correctness on HumanEval problems}
%     \begin{tabular}{l | c c c c | c c c c}%{@{}lll rrr@{}}
%         \toprule
%         Architecture & \multicolumn{4}{c|}{Python} & \multicolumn{4}{c}{Go}\\
        
%         % & & Std. & \Tool{} & Std. & \Tool{} \\
%         & \multicolumn{2}{c}{pass@1} & \multicolumn{2}{c|}{pass@10} & \multicolumn{2}{c}{pass@1} & \multicolumn{2}{c}{pass@10}\\

%         \midrule 
%         & Std. & \Tool{} & Std. & \Tool{} & Std. & \Tool{} & Std. & \Tool{} \\

%         % model python,p1,standard python,p10,standard, python,p1,tool python,p10,tool go,standard,p1 go,standard,p10 go,tool,p1, go,tool,p10

%         \midrule
%         CodeGen-350M & 6.8\% & 6.9\% & 10.6\% & 10.6\% & 3.6\% & 3.6\% & 5.6\% &  6.1\% \\
        
%         WizardCoder-1B & 20.0\% & 20.0\% & 27.6\% & 28.4\% & 9.3\% & 9.5\% & 12.5\% & 13.7\% \\
        
%         Llama-7B & 11.2\% & 11.5\% &  17.1\% & 18.9\% & 3.8\% & 4.25\% & 8.8\% & 8.8\%\\

%         \bottomrule
%     \end{tabular}
%     % \vspace{-0.4in}
%     \label{tab:pass@1}
% \end{table}
\noindent {\bf Functional Correctness for Code Generation.}
% \label{sec:pass@1}
We investigate whether augmenting LLMs with \Tool{} improves the functional correctness of the generated code. 
We evaluate functional correctness using the pass@k metric, where $k$ samples are generated per problem, and a problem is considered solved if any sample passes a set of unit tests, and the fraction of solved problems is calculated. 
Table~\ref{tab:pass@1} reports our results for pass@1 and pass@10 for generated code completions to problems from the HumanEval dataset. 
We observe that augmenting LLMs with \Tool{} has a slight improvement in functional correctness over \baseline{} generation.
%This observation indicates that for these state-of-the-art models, mere syntactic correction is not sufficient to improve their ability to generate logically correct code that passes the unit tests for these tasks.
This observation indicates that for these state-of-the-art models, syntactic correction can result in a small improvement in the logical correctness of the code.
% is not sufficient to improve their ability to generate logically correct code that passes the unit tests for these tasks.

% While \Tool{} significantly reduces the number of syntax errors in LLM-generated code, many of these code completions still end up failing test cases. 

\subsection{Mask Store Overhead} 
\label{sec:exp_overhead}
We analyze the time and memory overhead involved in generating a DFA mask store using \Tool{}. 
The DFA mask store for \chat{} took 113.72 seconds to create and consumes 181 MB of memory. 
Additionally, we report the creation time and memory overhead of DFA mask stores for models used for Python and Go in  Table~\ref{table:memory_overhead}. 
Each row shows the \Tool{} store generation time in seconds, and memory in GBs, for a particular LLM and grammar. 
The $|\vocab|$ column represents the total vocabulary size of the tokenizer of the particular LLM. 
We see that generating the store requires less than 2GB of memory and several minutes across the evaluated models and grammars.
This overhead is minimal for practical \Tool{} use cases, as the mask store is a one-time generation task. 
Thereafter, the mask store can be efficiently loaded into memory and used for repeated inference. 
We see smaller generation time and memory with \chat{} and JSON grammar as opposed to \llama{}, \wizard{}, and \codegen{} with Python and Go grammars since the size of the mask store is proportional to the number of terminals in the grammar.
\begin{table}[t]
\centering
\tablesize

% \begin{wraptable}{r}{.6\textwidth} \centering
% \
% \vspace{-.1in}
\caption{DFA Mask store creation time and memory} % Change 
\begin{tabular}
    {@{}l|c| r r r r@{}}
    \toprule
    & & \multicolumn{2}{c}{Python} & \multicolumn{2}{c}{Go}\\
    Model & $|\vocab|$ & Time(s) & Memory & Time(s) & Memory \\
    \midrule
    \codegen{} &  51200 & 602.26 & 1.87GB & 603.03 & 1.58GB \\
    \wizard{}  & 49153 & 588.28 & 1.83GB & 588.84 & 1.54GB \\
    \llama{}  & 32000 & 382.26 & 1.17GB & 380.49 & 1.06GB\\
    \bottomrule
\end{tabular}

\label{table:memory_overhead}
% \end{wraptable}
\end{table}

\section{Related Work}
\label{sec:related}

Our work focuses on enhancing the syntactical accuracy LLMs by using a constrained decoding algorithm.
Prior research has explored two other primary directions to enhance LLMs' accuracy in generating formal language: 
1) Fine-tuning or prompt engineering \cite{bassamzadeh2024comparativestudydslcode, weyssow2024exploringparameterefficientfinetuningtechniques}, which demands substantial data, compute resources, and time, often without any formal guarantees. 
2) Modifications to the LLM's architecture or tokenization \cite{murty2023pushdown, 10.1145/3597926.3598048, 10.1145/3597503.3639125}, although these techniques have not yet achieved performance comparable to the current state-of-the-art standard LLMs.
However, both fine-tuning and architectural changes are complementary to the grammar-guided decoding approach that we focus on in our work, and any gains through those techniques will improve the overall quality of LLM generation.

There are several recent works on constrained LLM generation~\cite{Wei_2023, 10.1145/3591300, guidance, willard2023efficient, scholak-etal-2021-picard, poesia2022synchromesh, llamacpp, geng2023grammar, beurerkellner2024guiding, agrawal2023guiding, melcer2024constraineddecodingfillinthemiddlecode}.
This includes recent works that have used language-server (tools built for communication between IDEs and programming language-specific tools like static analyzers and compilers) suggestions to enforce language-specific semantic constraints during decoding \cite{agrawal2023guiding, Wei_2023}.
These techniques do not guarantee syntactical accuracy and rely on the availability and efficiency of language servers.

\noindent{\bf Structured LLM Generation.} 
\add{
We focus our further discussion on comparison to the techniques that constrain LLM for structured generation according to a formal language. 
We compare \Tool{} with prior works in terms of precision and efficiency of the algorithms and generality and scalability of frameworks.
}
Table~\ref{tab:constrained_decoding_methods} presents the various recent techniques for structured LLM generation.
The columns "Regex" and "CFG" indicate regular expression and CFG constraining features, respectively. 
The "Precomputed" column denotes techniques that precompute certain structures to enhance generation efficiency. 
% "Sound" indicates whether these techniques provide soundness guarantees. 
The "GPL" column specifies if the tools support general-purpose PLs. 
"Max CFG" displays the number of production rules in the largest supported Grammar by these techniques.
\add{
We obtained these numbers by examining the built-in grammars that were provided in the corresponding libraries. 
}
Finally, the "Input Format" column indicates the format used to specify generation constraints.
In addition to the improvement over the baselines presented in the evaluation, our work focuses on rigorously formalizing the correctness of our CFG-guided generation approach.

% Regex guided
Recent works such as \guidance{}~\cite{guidance} and \lmql{}~\cite{10.1145/3591300} mitigate the unpredictability of LLM responses by using template or constraint-based controlled generation techniques. 
These libraries feature a templating engine where prompts are expressed with holes for the generation to fill. 
\lmql{}~\cite{10.1145/3591300} supports general regular expression constraints, but not CFG constraints.
\guidance{}~\cite{guidance} supports CFG-guided generation. 
It uses Earley parsing~\cite{10.1145/362007.362035} for constrained decoding. 
Similar to other related works, it incurs high inference overhead as it checks the syntactical validity of the entire model vocabulary at each step. 
It uses a trie similar to \cite{poesia2022synchromesh, willard2023efficient, beurerkellner2024guiding}. 
As shown in our evaluation it incurs higher overhead for JSON generation than \Tool{}. 
It iterates over the vocabulary in order of the next token probability to efficiently compute the next token. 
However, this leads to a lack of generality and it cannot be directly combined with an arbitrary decoding strategy. 
% Further, due to the non-standard input format for the grammar of the framework, it is difficult to introduce a new grammar. 

\begin{table}[t]
\centering
\footnotesize
\caption{Overview of various constrained decoding methods}
\begin{threeparttable}
\begin{tabular}{@{}lcccccc@{}}
\toprule  & Regex              & CFG               & \shortstack{Precomputed} & GPL & Max CFG  & Input format \\
\midrule
\lmql~\cite{10.1145/3591300}   & \yes & \no        & \no    & \no & 50-100 & LMQL DSL \\
\guidance~\cite{guidance}  & \yes  & \yes    & \no      & \no & 50-100 & Python DSL \\
\outlines~\cite{willard2023efficient}  & \yes & \yes   & \yes & \no & 50-100 & Lark EBNF \\
\picard~\cite{scholak-etal-2021-picard}        & \yes  & \yes & \no  & \no & 50-100 & Haskell \\
\synchromesh~\cite{poesia2022synchromesh}   & \yes  & \yes  & \no & \no & $\ddag$ & ANTLR \\
\llamacpp~\cite{llamacpp}       & \yes & \yes        & \no & \no & 50-100 & GBNF DSL \\
\tgcd~\cite{geng2023grammar} & \yes  & \yes        & \no & \no & 50-100 & GF  \\ 
\domino~\cite{beurerkellner2024guiding}         & \yes & \yes & \yes & \no & 50-100 & GBNF DSL \\ 
\midrule
\Tool{} (ours)  & \yes & \yes & \yes & \yes & 500+ & Lark EBNF \\ 
\bottomrule
\end{tabular}
\begin{tablenotes}
    \footnotesize
    \item $\dag$ Implementation issues
    $\ddag$ Synchromesh is closed-source and the information about DSL grammars is unavailable
    \item GF: Grammatical Framework, GBNF is a DSL defined by LLAMA.CPP 
\end{tablenotes}
\end{threeparttable}
\normalsize
\label{tab:constrained_decoding_methods}
\end{table}

\outlines{}~\cite{willard2023efficient} is a library originally focused on regular expression-guided generation and recently extended to support grammar-guided generation. 
During LLM generation, \outlines{} employs an incremental Lark-based LALR parser to determine the next acceptable terminals based on the grammar. 
It constructs a larger regular expression by computing the union of regular expressions from all terminals, which is then converted into a DFA during inference. 
It then iterates over all LLM tokens and collects the set of tokens that lead to a valid path through this combined DFA. 
As shown in our evaluation, \Tool{} performs better than \outlines{} on generating with JSON grammar and it currently lacks support for large GPL grammars.
% The approach needs iterating over the vocabulary during inference and performing DFA-based matches across the entire vocabulary. 
% Despite optimizations such as DFA caching and a recent introduction of tensor-based caching, \outlines{} is slower than \Tool{}. 

\llamacpp{}~\cite{llamacpp}, has also recently introduced support for grammar-guided generation. 
This approach models a nondeterministic pushdown automaton with $N$ stacks to maintain possible parse states. 
\llamacpp{} defines a new grammar syntax and implements a simplified basic parser in C++. 
While this implementation in C++ reduces some parsing overhead compared to heavier LR(1) parsers implemented in Python on top of Lark for \Tool{}, it is algorithmically inefficient. 
This inefficiency again is due to the requirement to iterate over the entire vocabulary and update stack states during inference. 
Moreover, the non-standard grammar syntax and limited support for general grammar features restrict its evaluation to simpler grammars such as JSON. 
We anticipate that \llamacpp{} and \outlines{} would perform even slower on grammars with more rules, terminals, and complex regular expressions, such as those found in Python and Go.
As shown in our evaluation, \Tool{} is more efficient and results in fewer syntax errors.

\synchromesh{}~\cite{poesia2022synchromesh} is a proprietary
% ~\footnote{While there exists a publicly available unofficial reimplementation of Synchromesh~\cite{synchromesh_github} that operates on a non-incremental Lark parser, it has bugs and did not work with our grammar despite reasonable efforts to fix the issues.} 
a tool from Microsoft that supports CFG-guided syntactic decoding of LLMs.
Similar to \outlines{}, it creates a union of regular expressions of terminals during LLM generation.
Further, Synchromesh uses a non-incremental parser for parsing.
Both of which lead to lower time complexity.
Synchromesh uses techniques like Target Similarity Tuning for semantic example selection and Constrained Semantic Decoding to enforce user-defined semantic constraints and works on DSLs.
In contrast, our work, \Tool{} focuses exclusively on syntactic generation.

\picard{}~\cite{scholak-etal-2021-picard} uses a specific decoding strategy that maintains a beam of multiple candidate outputs and promptly rejects the candidates that violate the syntax.
It utilizes an incremental monadic parser and was developed specifically to support SQL generation. 
Introducing a new grammar into \picard{} necessitates considerable effort, as it lacks support for a grammar-defining language to provide grammar rules.

% Domino
Recent work Domino~\cite{beurerkellner2024guiding} does CFG-guided LLM generation.
It avoids traversing the whole vocabulary during inference by precomputing a prefix tree corresponding to each NFA state of the terminals of the grammar.
The purpose of creating this structure is similar to \Tool{}'s DFA mask store. 
We believe that \Tool{}'s mask store is more efficient than Domino's prefix tree since on modern machines (especially with GPUs) the union of the boolean masks from mask store can be performed quite efficiently in practice~\cite{paszke2019pytorch}.
Domino defines the \emph{minimally invasive} property which is equivalent to \Tool{}'s soundness property.
One key difference between \Tool{} and Domino is that Domino applies under-approximation, permitting only tokens that align with the lookahead of the parser, while \Tool{} adopts a conservative over-approximation approach, allowing tokens as long as their prefixes match the parser lookahead.
Due to the under-approximation, they claim that it requires $\infty$ parser lookahead to get this soundness, whereas \Tool{} ensures soundness for any lookahead.
% Further, the largest grammar that Domino can support currently is the While Programming Language grammar in C with 70 rules with roughly 25\% overhead. 
Further, the largest grammar that Domino can support currently is highly simplified C grammar with 70 rules with roughly 25\% overhead. 
Domino's code is not available yet to experimentally compare it with \Tool{}. 

% \Comment{ 
% Very recently, a concurrent work Domino~\cite{beurerkellner2024guiding} was released that does grammar-guided LLM generation. 
% One key difference between \Tool{} and Domino is that Domino performs under-approximation for selecting acceptable tokens compared to \Tool{}'s over-approximation.
% Domino defines the "minimally invasive" property which is equivalent to \Tool{}'s soundness property.
% Due to the under-approximation, they claim that it requires $\infty$ parser lookahead to get this soundness, whereas \Tool{} ensures soundness for any lookahead.   
% To avoid traversing the whole vocabulary during inference by precomputing a prefix tree corresponding to each NFA state. 
% This is similar to \Tool{}'s DFA mask store. 
% However, in the worst case assembling all acceptable tokens for Domino may require traversing the whole prefix terminal tree which can take $O(|\vocab|)$ time.
% \Tool{}'s mask store is more efficient than Domino's prefix tree since on modern machines with GPU the union of these boolean masks can be performed in constant time. 
% Further, the largest grammar that Domino can support currently is highly simplified C grammar with only 70 rules which leads to 25\% to 50\% overhead. 
% Domino was released quite recently and the code for the tool is not available yet for comparison. 
% }

\noindent{\bf Fixed Schema Generation.}
Many recent works perform constrained LLM decoding to ensure that the generated output follows a fixed schema of JSON or XML~\cite{sglang, beurerkellner2024guiding, willard2023efficient, jsonformer}.
When employing a fixed schema, many intermediate points in the generation process offer either a single syntactical choice (e.g., key in the JSON schema) or present only a handful of distinct options. 
In cases where only one choice exists, the generation of the next token through the LLM can be entirely skipped.
 Alternatively, when there are multiple but limited choices, techniques like speculative decoding can be used to expedite the generation process~\cite{ChenBILSJ23, leviathan2023fastinferencetransformersspeculative}.
\Tool{} does not focus on generation problems with fixed schema, it solely focuses on CFG-guided generation.
We made the same observation as in~\cite{beurerkellner2024guiding}, techniques such as speculation are not useful for CFGs where the schema is not fixed. 

% \noindent{\bf Constrained Code Generation.} 

% \noindent{\bf LLM for Code Generation.} 
% Recent research has advanced the development of LLMs for code generation~\cite{chen2021evaluating,nijkamp2023codegen,chowdhery2022palm,fried2023incoder,touvron2023llama,luo2023wizardcoder}. These models, including Codex~\cite{chen2021evaluating}, CodeGen~\cite{nijkamp2023codegen}, PaLM-Coder~\cite{chowdhery2022palm}, InCoder~\cite{fried2023incoder}, LLaMA~\cite{touvron2023llama}, WizardCoder~\cite{luo2023wizardcoder}, and others, have shown remarkable capabilities in understanding and generating code.
% Many of these models utilize fine-tuning, a process where these pre-trained models are further trained on specific code datasets to enhance their task-specific performance.
% Additionally, these models leverage in-context learning~\cite{olsson2022incontext} such as few-shot learning~\cite{brown2020language} techniques, allowing them to adapt and respond accurately with few examples in the prompt.

\section{Conclusion}
Existing methods for guiding LLMs to produce syntactically correct output have been notably slow and restrictive. 
In this paper, we present \Tool{}, an efficient and general framework to enhance LLMs' ability to generate syntactical output for various formal languages. 
During decoding, \Tool{} incrementally parses the partially generated output, computes the unparsed remainder and acceptable terminal sequences, and then leverages the remainder, accept sequences, and pre-computed DFA mask store to compute a mask to constrain the LLM's vocabulary to only syntactically valid tokens. 
We evaluated \Tool{} on generating syntactically correct JSON, SQL, Python, and Go code with different combinations of datasets, models, and tasks. 
\Tool{} eliminates syntax errors in JSON completions and significantly improves JSON schema validation over the baselines. 
Furthermore, \Tool{} reduces the number of syntax errors in generated Python and Go code by \average{} on average compared to \baseline{} generation.
We believe that our approach will pave the way for more efficient and higher-quality structured LLM generation in real-world applications.

% \input{limitations}
%
% ---- Bibliography ----
%
\bibliographystyle{ACM-Reference-Format}
\bibliography{ref}

\appendix
\section{Appendix}

\subsection{List of Symbols}
\label{sec:symbols}
\begin{tabular}{ll}
% Grammar related
    $G$ & Formal Grammar\\
    $L(G)$ & Language of a grammar \\
    $L_p(G)$ & Prefix language of a grammar\\
    $l$ & lexical tokens \\
    $l_i$ & $i$-th lexical token in the parsed output \\
    $\terminal$ & A terminal in the grammar \\
    $\terminal_i$ & Terminal type of $i$-th lexical token \\
    $\allterminals$ & Set of all terminals in the grammar \\
    $\lang^\allterminals(G)$ & Language of terminals for grammar $G$ \\
    $\lang^\allterminals_p(G)$ & Prefix language of terminals \\
    $\parser$ & Parser \\
    $\sequence$ & Sequence of terminals \\
    $\tokenizer$ & Tokenizer in an LLM \\
    $\vocab$ & Vocabulary of an LLM\\
    $\vocab_k$ & Subset of vocabulary containing acceptable tokens at $k$-th LLM generation iteration\\
    $\regex_\terminal$ & Regular expression for a terminal $\terminal$ \\
    $\regex_i$ & Regular expression corresponding to $i$-th lexical token \\
    $\greater$ & Partial order over set of terminal sequences\\

% Syncode related
    $\remainder$ & Remainder from \Tool{} parsing the partial output \\
    $\partialcode$ & Partial output at $k$-th iteration of LLM generation\\
    $\fixpartialcode$ & Parsed prefix of partial output $\partialcode$ at $k$-th iteration of LLM generation\\
    $\accepts$ & Set of accept sequences \\
    $\dmap{\alpha}$ & DFA lookup store function for terminal sequences of length $\alpha$ \\
    $\dmatch$ &  Match with DFA walk as defined in Section~\ref{sec:technical}\\
    $\pmatch$  & Partial match with regular expression\\
    $\partialparse$ & Partial parsing function \\
    $m$ & Boolean mask \\

% DFA related
    $\dfa$ & Deterministic finite automaton \\
    $\dfastates$ & States in a DFA \\
    $\alphabets$ & Set of characters i.e. alphabet for DFA \\
    $\transitions$ & Transition function in a DFA \\
    $\compute$ &  Extended transition function in a DFA\\
    $\dfastart$ & Start state of a DFA \\
    $\dfafinal$ & Set of final states in DFA\\
    $\live$ & Live states of the DFA \\
    $\dfastates_\Omega$ & Set containing all DFA states for DFAs of all terminals in the grammar\\

% Parser related 
    $\curaccepts$ & Set of terminals acceptable for current lexical token \\
    $\nextaccepts$ & Set of terminals acceptable as for next lexical token \\
    $\lex$ & Lexer function \\
    $\len$ & Length of a sequence \\
    $\curtokens$ & Current set of tokens \\
    $\psmap$ & Map for storing parser state \\
\end{tabular}

% \newcommand{\partialparse}{\textit{pparse}}

% \newcommand{\set}{\textit{set}}

% \newcommand{\str}[1]{\enquote*{\emph{#1}}} 

% % Parser related
% \newcommand{\next}{\emph{Next}\xspace}
% \newcommand{\follow}{\emph{Follow}\xspace}
% % \newcommand{\lexertokens}{\textit{LT}}

% % regex
% \newcommand{\pmatch}{\textit{pmatch}}
% \newcommand{\len}{\textit{len}}

% % DFA related
% \newcommand{\dmatch}{\textit{dmatch}}

\newpage
\subsection{Proofs for Theorems}
\label{sec:proofs}

\eq* 
\begin{proof}
\begin{enumerate} [label=(\alph*)]
    \item First we prove $\dmatch(w, \dfastart^{\terminal_{f+1}}, \sequence^p) \implies \pmatch(w, \regex_\sequence)$
    We prove this using induction on the length $i$ of $w$. \\
    For $i=0$, $\pmatch(w, \regex_\sequence)$ is trivially true. \\
    Now, we assume that for $w$ of length $i < k$, $\dmatch(w, \dfastart^{\terminal_{f+1}}, \sequence^p) \implies \pmatch(w, \regex_\sequence)$. \\
    We consider $w$ of length $k$ and $\dmatch(w, \dfastart^{\terminal_{f+1}}, \sequence^p)$.\\ 
    We consider 3 conditions from Definition~\ref{def:dmatch}. \\
    If condition 1 is true, $\compute_{\terminal_{f+1}}(w, \dfastart^{\terminal_{f+1}}) \in \live(Q_{\terminal_{f+1}})$. 
    Let $q_1 = \compute(w, \dfastart^{\terminal_{f+1}})$. 
    By Definition~\ref{def:live}, $\exists w_1 \text{ s.t. } \compute_{\terminal_{f+1}}(w_1, q_1) \in F_{\terminal_{f+1}}$.
    Hence, 
    \[\compute(w.w_1, \dfastart^{\terminal_{f+1}}) \in F_{\terminal_{f+1}} \implies w.w_1 \in \lang(\regex_{\terminal_{f+1}})\]
    We assume that each terminal $\lang(\terminal_i)$ is non-empty. Hence, 
    \[\exists w_2 \in \lang(\regex_{\sequence^p}) \implies w.w_1.w_2 \in \lang(\regex_{\sequence}) \]
    Hence, by condition 2 from Definition~\ref{def:pmatch}, $\pmatch(w, \regex_\sequence)$. 
    
    If condition 2 is true, $\exists w_1, w_2$ such that $w_1.w_2 = w$, $\compute_{\terminal_{f+1}}(w_1, \dfastart^{\terminal_{f+1}}) \in F \text{ and } \sequence^p= \{\} $. 
    Here, $w_1 \in \lang(\regex_{\terminal_{f+1}})$. 
    Since $\sequence^p = \{\}$, $\regex_\sequence = \regex_1$, and hence, $w_1 \in \lang(\regex_\sequence)$. 
    Hence by condition 1 from Definition~\ref{def:pmatch}, $\pmatch(w, \regex_\sequence)$. 

    If condition 3 is true, $\exists w_1, w_2$ such that $w_1.w_2 = w$, $\compute_{\terminal_{f+1}}(w_1, \dfastart^{\terminal_{f+1}}) \in F_{\terminal_{f+1}}$, \\ 
    and $\dmatch(w_2, q_{0}^{\terminal_{f+2}}, \{\terminal_{f+3} \dots \terminal_{f+d}\}) = \true$. 
    \[\compute_{\terminal_{f+1}}(w_1, \dfastart^{\terminal_{f+1}}) \in F_{\terminal_{f+1}} \implies w_1 \in \lang(\regex_{\terminal_{f+1}}) \]
    Since length of $w_2 < k$, by our induction hypothesis, $\pmatch(w_2, \regex_{\sequence^p}) = \true.$ By Definition~\ref{def:pmatch}, there are two possibilities. 
    Suppose $\exists w_2 = w_3.w_4$ such that $w_3 \in \lang(\regex_{\sequence^p})$. \\
    \[w_1.w_3 \in \lang(\regex_{\sequence}) \implies \pmatch(w, \regex_{\sequence}) = \true \] 
    Alternatively, if $\exists w_3$ such that $w_2.w_3 \in \lang(\regex_{\sequence^p})$ 
    \[ w_1.w_2.w_3 \in \lang(\regex_\sequence) \implies \pmatch(w, \regex_{\sequence}) = \true
    \]
    Hence, our induction proof is complete and $\pmatch(w, \regex_{\sequence}) = \true$

    \item Next we prove $\pmatch(w, \regex_\sequence) \implies \dmatch(w, \dfastart^{\terminal_{f+1}}, \sequence^p)$
    We prove this using induction on the length $i$ of $w$. \\
    For $i=0$, $\dmatch(w, \dfastart^{\terminal_{f+1}}, \sequence^p)$ is trivially true. \\
    Now, we assume that for $w$ of length $i < k$, $ \pmatch(w, \regex_\sequence) \implies \dmatch(w, \dfastart^{\terminal_{f+1}}, \sequence^p)$ \\
    Now we consider $w$ of length $k$ and $\pmatch(w, \regex_\sequence)$.\\ 
    By Definition~\ref{def:pmatch}, there are two possible conditions \\
    
    \textbf{Case 1:} $\exists w_1 \in \alphabets^*, w_2 \in \alphabets^+\text{ such that } w = w_1.w_2$ and $w_1 \in \lang(\regex_\sequence)$ \\
    Hence, $\exists w_3, w_4 \text{ such that } w_1 = w_3.w_4 $ and $w_3 \in \lang(\regex_{\terminal_{f+1}})$ and $w_4 \in \lang(\regex_{\sequence^p})$. By induction hypothesis, 
    \[
        \pmatch(w_4.w_2, \regex_{\sequence^p}) \implies \dmatch(w_4 w_2, \{\terminal_{f+2}, \terminal_{f+3} \dots \terminal_{f+d} \})
    \]
    Since $w = w_3.w_4.w_2$ and
    \[
        w_3 \in \lang(\regex_{\terminal_{f+1}}) \implies \compute_{\terminal_{f+1}}(w_3, \dfastart^{\terminal_{f+1}}) \in F_{\terminal_{f+1}}
    \]
    Hence, by condition 3 in Definition~\ref{def:dmatch}, $\dmatch(w, \dfastart^{\terminal_{f+1}}, \sequence^p)$  

    \textbf{Case 2:} $\exists w_1$ such that $w.w_1 \in \lang(\regex_\sequence)$ \\
    Hence, $\exists w_2, w_3 \text{ s.t } w.w_1 = w_2.w_3 $ and $w_2 \in \lang(\regex_{\terminal_{f+1}})$ and $w_3 \in \lang(\regex_{\sequence})$ \\
    Now there are two possibilities, either $w$ is prefix of $w_2$ or $w_2$ is prefix of $w_2$ \\
    Supoose $w$ is prefix of $w_2$, then $\compute_{\terminal_{f+1}}(w, \dfastart^{\terminal_{f+1}}) \in \live(\dfastates_{\terminal_{f+1}})$ and hence by Definition~\ref{def:dmatch}, $\dmatch(w, \dfastart^{\terminal_{f+1}}, \sequence^p)$
    Alternatively, if $w_2$ is prefix of $w$ then $\exists w_4 \text{ s.t. } w = w_2 w_4$ \\
    Hence, $w_4.w_1 = w_3 \in \lang(\regex_{\terminal_{f+1}})$ and thus $\pmatch(w_4, \regex_{\sequence^p})$ \\
    By induction hypothesis $\dmatch(w_4, \dfastart^{\terminal_{f+2}}, \{\terminal_{f+3}, \terminal_4 \dots \terminal_{f+d} \})$ \\
    and since $w = w_2.w_4 $ and $\compute_{\terminal_{f+1}}(w_2, \dfastart^{\terminal_{f+1}}) \in F_{\terminal_{f+1}}$. 
    We get $\dmatch(w, \dfastart^{\terminal_{f+1}}, \sequence^p)$
    
\end{enumerate}

\end{proof}

\dm*
\begin{proof}
\begin{enumerate}[label=(\alph*)]
    \item First, we prove $\dmatch(t, q, \sequence) \implies \dmatch(r.t, \dfastart^{\terminal}, \sequence)$. \\
    From Definition~\ref{def:dmatch}, either of the 3 conditions hold true for $\dmatch(t, q, \sequence)$. \\
    
    If condition 1 is true then 
\[\compute_{\terminal_1}(t, q) \in \live(Q_{\terminal}) \implies \compute_{\terminal}(r.t, \dfastart^{\terminal}) \in \live(Q_{\terminal}) \implies \dmatch(r.t, \dfastart^{\terminal}, \sequence)\]

    If condition 2 is true,  $\exists w_1, w_2$ such that $w_1.w_2 = t$, $\compute_{\terminal}(w_1, q) \in F \text{ and } \sequence= \{\}$. Therefore,  
\[\compute_{\terminal}(r.w_1, q) \in F \implies \dmatch(r.t, \dfastart^{\terminal}, \sequence)\]

    If condition 3 is true, $\exists w_1, w_2$ such that $w_1.w_2 = t$, $\compute_{\terminal}(w_1, q) \in F$ \\ and $\dmatch(w_2, q_{0}^{\terminal_1}, \{\terminal_2 \dots \terminal_d\}) = \true$. Therefore, \[\compute_{\terminal}(r.w_1, q) \in F \implies \dmatch(r.t, \dfastart^{\terminal}, \sequence)\]

Therefore, in all cases, $\dmatch(rt, \dfastart^{\terminal}, \sequence)$ must hold. 

    \item Now, we prove $\dmatch(rt, \dfastart^{\terminal}, \sequence)  \implies \dmatch(t, q, \sequence)$.

     From Definition~\ref{def:dmatch}, either of the 3 conditions hold true for $\dmatch(r.t, \dfastart^{\terminal}, \sequence)$. \\

     If condition 1 is true then 
    \[
    \compute_{\terminal_1}(r.t, \dfastart^{\terminal}) \in \live(Q_{\terminal}) \implies \compute_{\terminal}(t, q) \in \live(Q_{\terminal}) \implies \dmatch(t, q, \sequence)
    \]

    If condition 2 is true,  $\exists w_1, w_2$ such that $w_1.w_2 = r.t$, $\compute_{\terminal}(w_1, \dfastart^{\terminal}) \in F \text{ and } \sequence= \{\}$. Since no prefix of $r$ is accepted by $\lang(\terminal)$, $\exists w_3 \text{ s.t. } w_3 w_4 = t$ and
    \[
    \compute_{\terminal}(w_3, q) \in F \implies \dmatch(t, q, \sequence)
    \]

    If condition 3 is true, $\exists w_1, w_2$ such that $w_1.w_2 = r.t$, $\compute_{\terminal}(w_1, \dfastart^{\terminal}) \in F$ \\ 
    and $\dmatch(w_2, q_{0}^{\terminal_1}, \{\terminal_2 \dots \terminal_d\}) = \true$. Since no prefix of $r$ is accepted by $\lang(\terminal)$, $\exists w_3 \text{ s.t. } w_3 w_4 = t$ and  
    \[\compute_{\terminal}(w_3, q) \in F \implies \dmatch(t, q, \sequence)\]

    Therefore, in all cases, $\dmatch(t, q, \sequence)$ must hold. 
    
\end{enumerate}

\end{proof}

\Sound*
\begin{proof}
% Since $\partialcode.t \in \lang_p(G)$, $\exists w \text{ s.t. } \partialcode.t.w \in \lang(G)$. \\
Let $r, \sequence^{\square} = \partialparse(\partialcode)$ where $\sequence^{\square} = \terminal_1, \terminal_2 \dots \terminal_{f}$ and let $r_1, \sequence_1 = \partialparse(\partialcode.t)$ where $\sequence_1 = \terminal_1, \terminal_2 \dots \terminal_{f} \dots \terminal_{f+g}$ \\
Hence, we can split $r.t$ such that for $w \in \alphabets^*$, $r.t = w.r_1$ and $w \in \lang(\terminal_{f+1} \dots \terminal_{f+g})$ \\
There are two possible cases: \\
\noindent\textbf{Case 1:} $g < d$ \\
\[
    w \in \lang(\terminal_{f+1} \dots \terminal_{f+g})
\]
\[
    \implies w \in \lang_p(\terminal_{f+1} \dots \terminal_{f+g})
\]
By our assumption on $\accepts_d$ there must exist $\sequence_2 = \terminal_{f+1} \dots \terminal_{f+d}$ s.t. $\terminal_{f+1} \dots \terminal_{f+g}$ is prefix of $\sequence_2$. Hence, 
\[
    \implies w \in \lang_p(\sequence_2)
\]
\[
    \implies \pmatch(r.t, \sequence_2)
\]

\noindent\textbf{Case 2:} $g \geq d$ \\
Since we assume that $\accepts_d$ contains all possible accept sequence of length $d$, $\sequence_2 = \terminal_{f+1} \dots \terminal_{f+d}$ must be contained in $\accepts_d$ \\
Hence, $\exists w_1, w_2 \in \alphabets^*$ such that $w = w_1.w_2$ and
\[
    w_1 \in \lang(\sequence_2)
\]
\[
    \implies w \in \lang_p(\sequence_2)
\]
\[
    \implies \pmatch(r.t, \sequence_2)
\]
In both cases, $\pmatch(r.t, \sequence_2)$. Using Lemma~\ref{lemma:eq},
\[
    \implies \dmatch(r.t, \dfastart^{\terminal_{f+1}} , \{\terminal_{f+2} \dots \terminal_{f+d}\}) 
\]
Using Lemma~\ref{lemma:dmatch} if $q = \compute_{\terminal_{f+1}}(r, \dfastart^{\terminal_{f+1}})$
\[
    \dmatch(r.t, \dfastart^{\terminal_{f+1}} ,\{\terminal_{f+2} \dots \terminal_{f+d}\}) \implies \dmatch(t, q, \{\terminal_{f+2} \dots \terminal_{f+d}\})
\]
Here from Definition~\ref{def:lookup}, if $\dmap{d-1}(q, \{\terminal_{f+2} \dots \terminal_{f+d}\}) = m_2$ then $t \in \set(m_2)$. \\
Since $m_2 \subseteq m$, we have our result $t \in \set(m)$.

\end{proof}

\Porder*
\begin{proof}
Since $\forall \sequence_2 \in \accepts_2 \exists \sequence_1 \in \accepts_1 \exists \sequence_3 \in \allterminals^*  $  
s.t. $\sequence_2 = \sequence_1.\sequence_3$, Hence 
\[
\pmatch(w, \regex_{\sequence_2}) \implies \pmatch(w, \regex_{\sequence_1})
\]
Hence, for the mask $\set(m_2) \subseteq \set(m_1)$
\end{proof}

\Complete*
For the simplicity of presenting the proof, we assume that $d > 2$. 

Since $t \in \set(m)$ for some $\sequence_1 = \{\terminal_{f+1}, \terminal_{f+2} \dots \terminal_{f+d}\} \in \accepts$
\[
    \implies \dmatch(t, q , \{\terminal_{f+2} \dots \terminal_{f+d}\}) \implies \dmatch(r.t, \dfastart^{\terminal_{f+1}}, \{\terminal_{f+2} \dots \terminal_{f+d}\}) 
\]
\[
    \implies \pmatch(r.t, \{\regex_{\terminal_{f+1}}.\regex_{\terminal_{f+2}} \dots \regex_{\terminal_{f+d}}\})
\]
By Definition~\ref{def:pmatch}, there are two possible cases:

\begin{enumerate}
    \item $\exists w_1 \in \alphabets^*, w_2 \in \alphabets^+$ such that $r.t = w_1.w_2 $ and $w_1 \in \lang(\regex_{\terminal_{f+1}}.\regex_{\terminal_{f+2}} \dots \regex_{\terminal_{f+d}})$ \\
    We show that this case is not possible since our terminal sequence $\sequence_1$ is long enough that no prefix of $r.t$ cannot be in $\lang(\regex_{\terminal_{f+1}}.\regex_{\terminal_{f+2}} \dots \regex_{\terminal_{f+d}})$ \\
    We can infer that $\len(w_1) < \len(r.t) \implies \len(w_1) < \len(r) + \len(t)$ \\
    Further, from the assumption $ d > \textit{len}(t)$, we have
    \[
        \len(w_1) < d + \len(r) 
    \]    
    Firstly, note that $r \not\in \lang(\regex_{\terminal_{f+1}}.\regex_{\terminal_{f+2}})$ by the definition of remainder $r$ \\
    Note that we assume no terminal contains empty string i.e. $\epsilon \not \in \lang(\regex_{\terminal_i})$ \\
    Hence, every string in $\lang(\regex_{\terminal_{f+2}} \dots \regex_{\terminal_{f+d}})$ should have length at least $d-1$  \\

    % it's not that clear 
    Clearly, $r$ is prefix of $w_1$. Let $w_3 \in \alphabets^*$, $r.w_3 = w_1$ and hence $\len(w_3) > d-1$ \\
    Hence, 
    \[
        \len(r) + d - 1 < \len(w_1)
    \]
    \[ 
    \len(r) + d - 1 < \len(w_1) < d + \len(r) 
    \]
    This is not possible and hence such $w_1$ cannot exist. 
    
    \item $\exists w_1 \in \alphabets^*$ such that $r.t.w_1 \in \lang(\regex_{\terminal_{f+1}}.\regex_{\terminal_{f+2}} \dots \regex_{\terminal_{f+d}})$

    By Definition~\ref{def:pparse}, we have $\sequence^{\square}, r = \partialparse(\partialcode)$ s.t $\partialcode = \fixpartialcode.r$, $\sequence^{\square} = \terminal_{1}, \terminal_{2} \dots \terminal_{f}$ $\fixpartialcode \in \lang(\regex_{\terminal_{1}}.\regex_{\terminal_{2}} \dots \regex_{\terminal_{f}})$. \\
    Let $\sequence_1 = \terminal_{f+1}, \terminal_{f+2} \dots \terminal_{f+d}$ \\
    Since, $\partialcode. t = \fixpartialcode.r.t$, $\fixpartialcode \in \lang(\sequence^{\square})$ and $r.t.w_1 \in \lang(\sequence_1)$, we have 
    \[
        \fixpartialcode.r.t.w_1 \in \lang(\sequence^{\square}.\sequence_1)
    \]
    \[
        \partialcode.t.w_1 \in \lang(\sequence^{\square}.\sequence_1)
    \]
     By Definition~\ref{def:acc} of accept sequence, $\sequence^{\square} . \sequence_1 \in \lang_p^\allterminals(G)$, Hence
     \[
        \partialcode.t.w_1 \in \lang_p(G) \implies \partialcode.t \in \lang_p(G)
     \]     
\end{enumerate}
Thus, our proof is complete and $\partialcode.t \in \lang_p(G)$

\newpage
\subsection{Incremental Parsing Algorithm}
\label{sec:incparse}
\begin{wrapfigure}{R}{0.53\textwidth}
\vspace{-.2in}
\begin{minipage}{0.55\textwidth}

\begin{algorithm}[H]
\caption{Incremental Parsing}
\label{alg:parse}
\textbf{Inputs:} $\partialcode$: partial output, $\psmap$: state map
\begin{algorithmic}[1]
\Function{Parse}{$\partialcode$}
\State $l_1, l_2 \dots l_f \gets \text{\lex}(\partialcode)$
% \State $j \gets \text{Len}(L)$
\State $\gamma, S_{\gamma} \gets \text{RestoreState}(\psmap, L)$
\State $\parser \gets \text{Initialize}(S_\gamma)$
\State $\textit{parsed} \gets l_1.l_2\dots l_{\gamma-1}$
\For{$l_i \in l_\gamma, l_{\gamma+1} \dots l_f$}
\State $\text{\next}(\parser, l_i)$
\If{$\parser.state = Error$}
\State break
\EndIf
\State $\textit{parsed} \gets \textit{parsed} + l_i$
\State $\curaccepts \gets \nextaccepts$
\State $\nextaccepts \gets \text{\follow}(\parser)$
\State $S_i \gets \parser.state$
\State $\text{Store}(\psmap, \textit{parsed}, S_i)$
\EndFor
% \State $\text{output} \gets \text{decode}(T, \curtokens)$
% \State $r \gets \partialcode[\textit{parsed}:]$
\If{$\partialcode = \textit{parsed}$}
\State $r = l_f$
\State $\accepts \gets \{\terminal_f, \nextaccepts[0]\}, \{\terminal_f, \nextaccepts[1]\} \dots \}$
\State \qquad $\cup \{\curaccepts[0]\}, \{\curaccepts[1]\} \dots \}$
\Else
\State $r = \partialcode - \textit{parsed}$
\State $\accepts \gets \{\nextaccepts[0]\}, \{\nextaccepts[1]\} \dots $
\EndIf

\State \Return $\accepts, r$
\EndFunction
%\item[]
\end{algorithmic}
\end{algorithm}

\end{minipage}
\vspace{-0.2in}
\end{wrapfigure}
Our parsing algorithm achieves incrementality in LLM generation by utilizing a map $\psmap$ to store the parser state. 
This map associates a list of lexical tokens with the corresponding parser state after parsing those tokens. 
Frequently, in subsequent LLM generation iterations, the count of lexical tokens remains the same—either the next vocabulary token is appended to the final lexical token, or it increases. 
Although uncommon, there are cases where the number of parsed lexical tokens may decrease during iterations.
For example, in Python, an empty pair of double quotes, "", is recognized as a complete lexical token representing an empty string. 
On the other hand, """ serves as a prefix to a docstring, constituting an incomplete parser token. 
Consequently, the addition of a single double quote " reduces the overall count of lexical tokens in these iterations.
We observe that while the total count of lexer tokens at the end may undergo slight changes during these iterations, the majority of prefixes of the parsed lexical tokens remain consistent. 
Hence, we establish a mapping between lists of prefixes of lexical tokens and the corresponding parser state after parsing those tokens. 
Subsequently, when parsing a new list of lexer tokens, we efficiently determine the maximum length prefix of the lexer token list that is already present in $\psmap$. 
This incremental approach significantly reduces the complexity of our parsing algorithm. 
% The pseudocode for our parsing algorithm is presented in Appendix~\ref{sec:incparsealgo}.

% Lexing not incremental
While it could be feasible to introduce incrementality in the lexing operation, our experiments revealed that lexing consumes insignificant time in comparison to parsing. 
As a result, we opted to focus only on performing parsing incrementally.

% base parser
Our incremental parsing algorithm uses a standard non-incremental base parser $\parser$ that maintains a parser state and supports two functions \next and \follow. 
The \next function accepts the next lexer token and then updates the parser state. 
The \follow function returns a list of acceptable terminals at the current parser state. 
These functions are present in common parser generator tools \cite{lark,antlr}.

% Algorithm description
The Algorithm~\ref{alg:parse} presents our incremental parsing algorithm. 
The algorithm utilizes a lexer to tokenize the partial output.
The function RestoreState is used to restore the state of the parser to the maximal matching prefix of lexical tokens that exist in $\psmap$.
The main loop iterates through the tokens and maintains a parser state map. 
For each token, it updates the parser state, stores the mapping in $\psmap$, and retrieves the next set of acceptable terminals. 
The process continues until the end of the partial output. 
The algorithm returns accept sequences $\accepts$ and remainder $r$.

\subsection{Ablation Studies}
In this section, we perform an ablation study for incremental parsing and max new tokens. 
% In Appendix~\ref{sec:lalr_ablation} we perform ablations comparing LALR and LR parsers as a base parser.

\label{sec:ablation}
\noindent{\bf Incremental Parsing.}
We compare the runtime efficiency of utilizing incremental parsing over re-running parsing from scratch in \Tool{}. 
We run inference on \codegen{} with \Tool{} using incremental parsing and parsing from scratch on Python prompts from the HumanEval dataset. 
We generate $n = 1$ samples and control the max new tokens in the code completion. 
Our results are presented in Figure~\ref{fig:inc_parser}, where the x-axis represents the max new tokens and the y-axis represents the average generation time, in seconds, with and without incremental parsing. 
As shown in the figure, the average generation time when re-parsing from scratch increases significantly as the maximum length of code that the LLM can generate increases. 
On the other hand, the average generation time increases slowly with incremental parsing. 
For max new tokens = 300, \Tool{} with incremental parsing achieves 9x speedup over running parsing from scratch. 
Our results collectively demonstrate that augmenting \Tool{} with incremental parsing dramatically improves generation time, especially when generating longer completions.
% \TBD{Increase the font of the labels in Fig 10! }
%---------------------------------------------------------------%
\begin{figure}[!htbp]
\centering
\vspace{-.1in}
\begin{subfigure}[b]{0.48\textwidth}
    \includegraphics[width=1.10\textwidth]{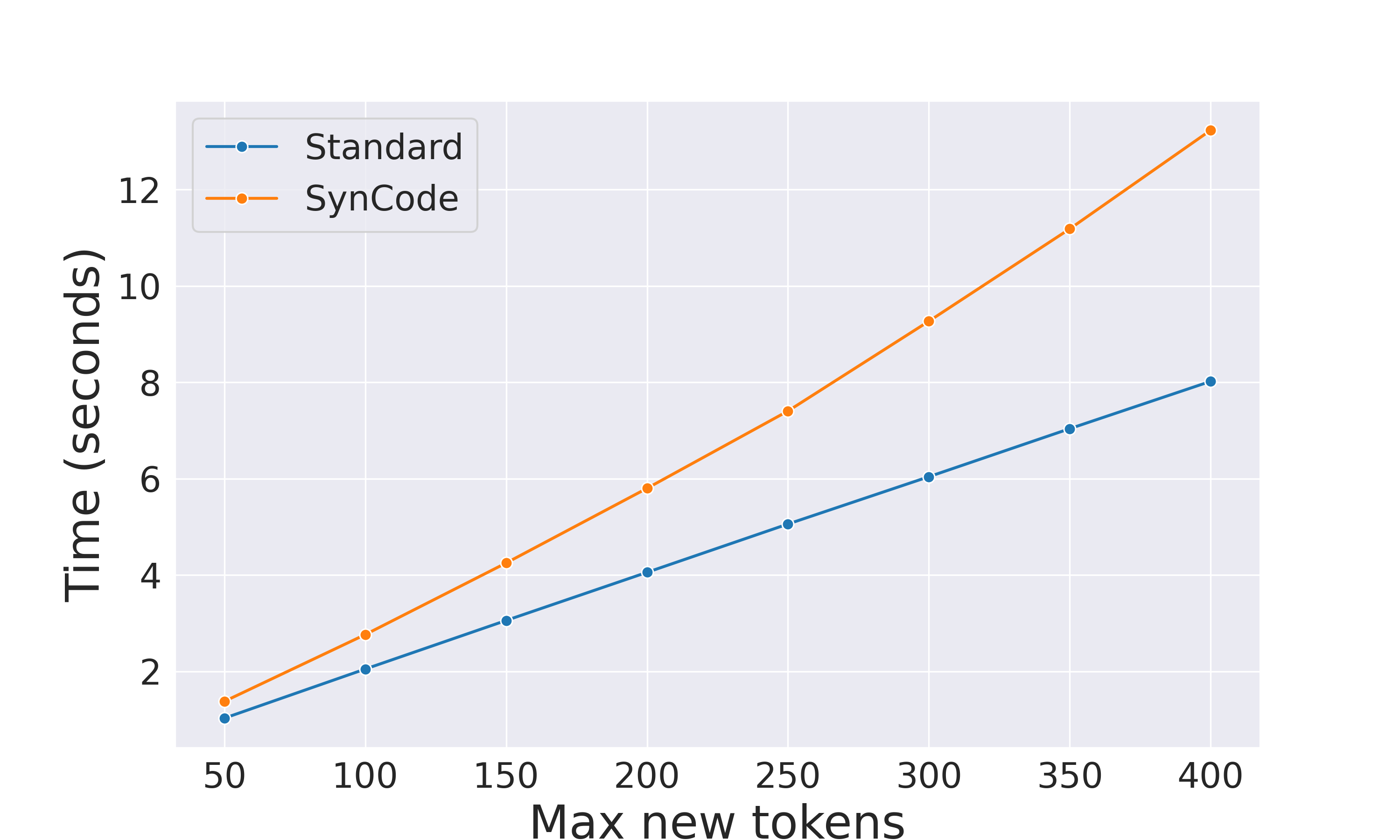}
    \caption{Average generation time for different max new tokens}
    \label{fig:SynCode_vs_Standard}
\end{subfigure}
\hspace{3mm}
\begin{subfigure}[b]{0.48\textwidth}
    \includegraphics[width=1.10\textwidth]{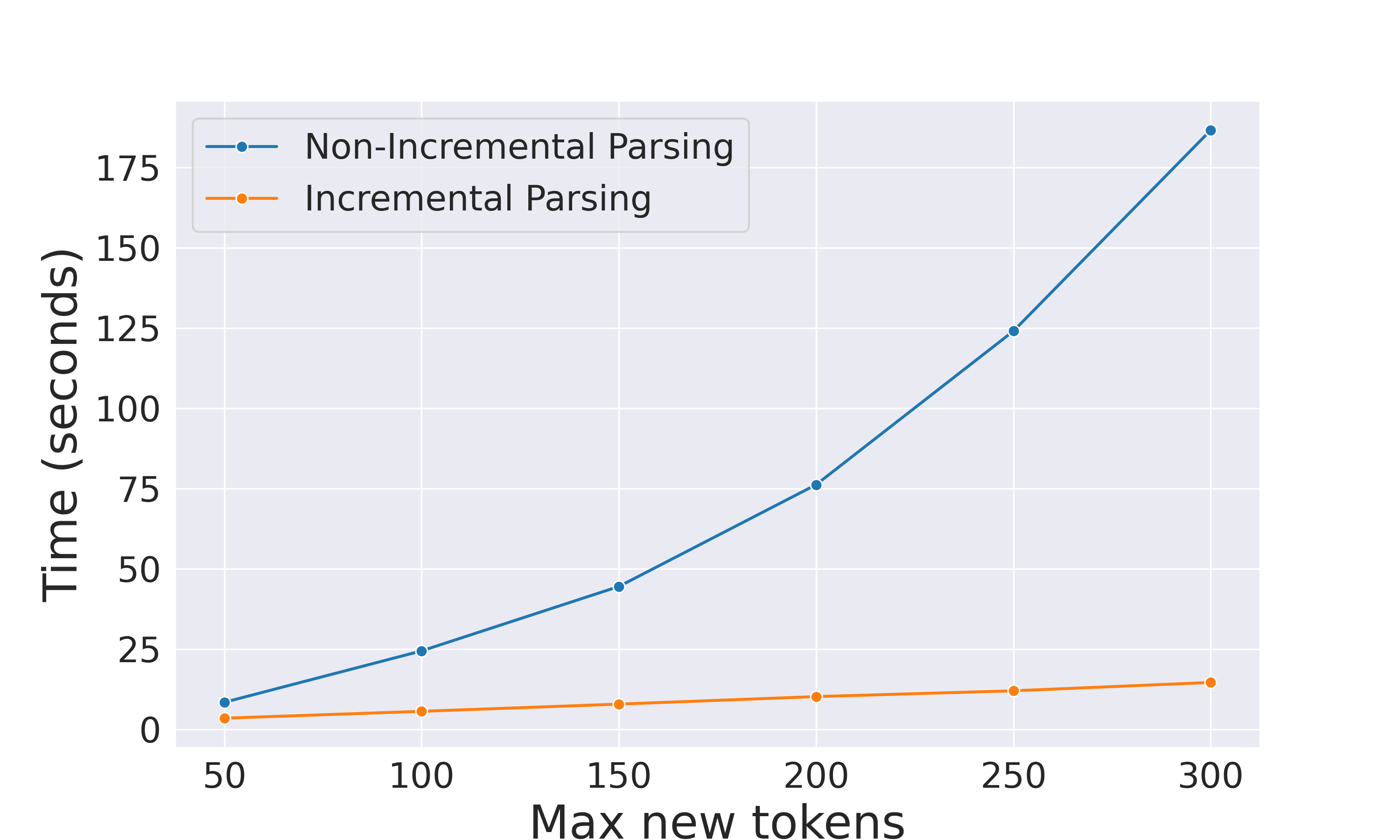}
    \caption{Average generation time with and without incremental parser}
    \label{fig:inc_parser}
\end{subfigure}
 \vspace{-.1in}
\hfill
\caption{Ablation studies on \codegen{} model.}
\vspace{-0.05in}
\end{figure} 

%---------------------------------------------------------------%
\noindent{\bf Max New Tokens.} 
We conduct an ablation study into the relationship between the maximum length of code that the LLMs can generate and generation times. 
We used Python prompts from the HumanEval dataset and leveraged \codegen{} to generate the code completions, both with and without the augmentation of the \Tool{}.
%We compute the average generation time, which is measured by dividing the total generation time for the dataset by the total number of problems. 
As shown in Figure~\ref{fig:SynCode_vs_Standard}, as we increase the max new tokens, we observe a corresponding increase in generation time. 

% \input{tables/parsers_runtime}
% \noindent{\bf LR(1) and LALR(1).} 
% % \subsection{LR(1) and LALR(1)}
% \label{sec:lalr_ablation}
% We compare the runtime efficiency of utilizing LR(1) and LALR(1) parsing in \Tool{}.
% We run inference on \codegen{}, \wizard{}, and \llama{} with \Tool{} with LALR(1) parser and with LR(1) parser for Python and Go on the HumanEval dataset. 
% We generate a single sample $(n = 1)$ per prompt with the max new tokens parameter set to 200. 
% Table~\ref{table:parsers_runtime} reports the average time taken to generate each prompt from the datasets. 
% As shown in Table~\ref{table:parsers_runtime}, we observe that \Tool{} with LR(1) parser outperforms the LALR(1) parser with respective overheads of 1.22x on average and 1.76x on average compared to the \baseline{} generation result from~\ref{table:runtime}.

\subsection{Few-Shot Prompting}
\label{sec:few_shot}
\begin{table}
    \tablesize
    \centering
    \vspace{-0.1in}
    \caption{\Tool{} on few-shot prompting}
    \begin{tabular}{@{}ll ccc@{}}
        \toprule
        Architecture & Error Type & \Baseline{} & \Tool{} & $\downarrow$ \\
        \hline
         CodeGen-350M  & Syntax & 53 & 0 & 100\% \\
          & Indentation & 15 & 3 & 80\% \\
        WizardCoder-1B  & Syntax & 40 & 2 & 95\% \\
        & Indentation & 22 & 1 & 95\% \\
        Llama-7B & Syntax & 110 & 0 & 100\% \\
        & Indentation & 40 & 5 & 88\% \\
        
        \bottomrule
    \end{tabular}
    \vspace{-0.1in}
    \label{tab:few_shot}
\end{table}
Few-shot prompting \cite{ren2018metalearning} refers to the idea that language models do not need to be specifically
trained for a downstream task such as classification or question answering. Rather, it is sufficient to train them on broad text-sequence prediction datasets and to provide context in the form of examples when invoking them. We study the performance of utilizing \Tool{} on few-shot prompting code generation tasks. We selected Python few-shot examples from the MBXP dataset and generated code completions with \codegen{}, \llama{}, and \wizard{} with \Tool{} and the \baseline{} no-masking generation. We present our results in Table \ref{tab:few_shot}. The columns standard and SynCode represent the total number of errors of a particular Error Type of LLM-generated code completions to problems in a particular dataset when utilizing that respective generation approach. The column ↓ represents the percentage reduction from the standard column to the SynCode column. As shown in the table, \Tool{} exhibits effectiveness not only in zero-shot but also in the context of few-shot prompting tasks. This signifies the versatility of \Tool{} in enhancing code generation across different prompt engineering techniques.

\subsection{\Tool{} Syntax Errors}
\label{sec:syncode_error}
\begin{figure}[tbh]
    \centering
    \includegraphics[width=\textwidth]{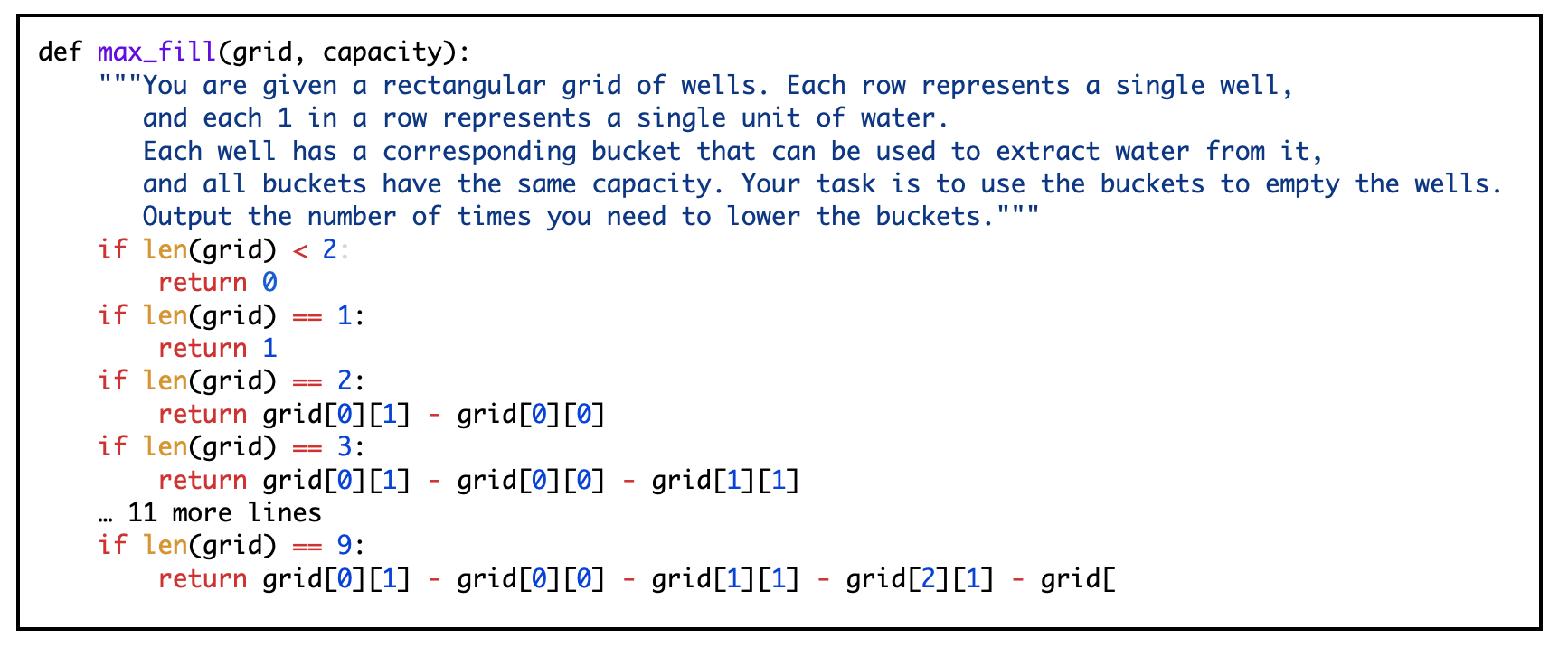}
    \caption{Syntactically Incorrect \Tool{} Program}
    \label{fig:syncode_error}
\end{figure}

Figure \ref{fig:syncode_error} presents an example of when the \Tool{} augmented LLM fails to generate a complete program within the maximum token limit for a problem from the HumanEval dataset. 
While the code is a syntactically correct partial program, it is not a syntactically correct complete program.  
Recall, that \Tool{} guarantees completeness for syntactically correct partial programs but does not guarantee termination with a syntactically correct complete program.

\newpage
\subsection{Prompts Used in the Evaluation}
\label{sec:prompts}
\lstdefinestyle{myGrammarStyle}{
    basicstyle=\scriptsize\ttfamily, % Reduce font size
    commentstyle=\color{green},
    keywordstyle=\color{blue},
    stringstyle=\color{orange},
    numbers=left, % Line numbers on left
    numberstyle=\tiny\color{gray}, % Line numbers styling
    breaklines=true, % Wrap long lines
    frame=single, % Frame around the code
    framesep=3pt, % Adjust frame separation
    xleftmargin=5pt, % Adjust left margin
    xrightmargin=5pt, % Adjust right margin
    backgroundcolor=\color{yellow!0}, % Background color
    tabsize=2, % Tab size
    captionpos=b, % Caption position bottom
    aboveskip=5pt, % Reduce space above the listing
    belowskip=5pt, % Reduce space below the listing
    linewidth=0.9\linewidth, % Set custom line width to less than text width
    escapeinside={(*@}{@*)}, % for escaping to LaTeX
}

\begin{lstlisting}[style=myGrammarStyle, caption= Example original JSON Prompt from the JSON-Mode-Eval dataset~\cite{jsoneval}. The prompt consists of a system message that specifies a schema and a user message requesting JSON output given certain parameters.]
<s>[INST] <<SYS>>
You are a helpful assistant that answers in JSON. Here's the json schema you must adhere to:
<schema>
{'title': 'Person', 'type': 'object', 'properties': {'firstName': {'type': 'string', 'description': "The person's first name."}, 'lastName': {'type': 'string', 'description': "The person's last name."}, 'age': {'description': 'Age in years which must be equal to or greater than zero.', 'type': 'integer', 'minimum': 0}}, 'required': ['firstName', 'lastName', 'age']}
</schema>

<</SYS>>

Please generate a JSON output for a person's profile that includes their first name, last name, and age. The first name should be 'Alice', the last name 'Johnson', and the age 35. [/INST]

\end{lstlisting}
\label{prompt:json_prompt}
\lstdefinestyle{myGrammarStyle}{
    basicstyle=\scriptsize\ttfamily, % Reduce font size
    commentstyle=\color{green},
    keywordstyle=\color{blue},
    stringstyle=\color{orange},
    numbers=left, % Line numbers on left
    numberstyle=\tiny\color{gray}, % Line numbers styling
    breaklines=true, % Wrap long lines
    frame=single, % Frame around the code
    framesep=3pt, % Adjust frame separation
    xleftmargin=5pt, % Adjust left margin
    xrightmargin=5pt, % Adjust right margin
    backgroundcolor=\color{yellow!0}, % Background color
    tabsize=2, % Tab size
    captionpos=b, % Caption position bottom
    aboveskip=5pt, % Reduce space above the listing
    belowskip=5pt, % Reduce space below the listing
    linewidth=0.9\linewidth, % Set custom line width to less than text width
    escapeinside={(*@}{@*)}, % for escaping to LaTeX
    morekeywords={Output only JSON}, % Add specific words to be highlighted
    keywordstyle=\color{blue}\bfseries, % Style for additional keywords
}

\begin{lstlisting}[style=myGrammarStyle, caption= Example JSON prompt from the JSON-Mode-Eval dataset ~\cite{jsoneval} after augmentation with an explicit request to only output JSON. ]
<s>[INST] <<SYS>>
You are a helpful assistant that answers in JSON. Here's the json schema you must adhere to:
<schema>
{'title': 'Person', 'type': 'object', 'properties': {'firstName': {'type': 'string', 'description': "The person's first name."}, 'lastName': {'type': 'string', 'description': "The person's last name."}, 'age': {'description': 'Age in years which must be equal to or greater than zero.', 'type': 'integer', 'minimum': 0}}, 'required': ['firstName', 'lastName', 'age']}
</schema>

<</SYS>>

Please generate a JSON output for a person's profile that includes their first name, last name, and age. The first name should be 'Alice', the last name 'Johnson', and the age 35. Output only JSON. [/INST]
\end{lstlisting}
\label{prompt:json_prompt_explicit}

% (*@\hl{Output only JSON.}@*)
\lstdefinestyle{myGrammarStyle}{
    basicstyle=\scriptsize\ttfamily, % Reduce font size
    commentstyle=\color{green},
    keywordstyle=\color{blue},
    stringstyle=\color{orange},
    numbers=left, % Line numbers on left
    numberstyle=\tiny\color{gray}, % Line numbers styling
    breaklines=true, % Wrap long lines
    frame=single, % Frame around the code
    framesep=3pt, % Adjust frame separation
    xleftmargin=5pt, % Adjust left margin
    xrightmargin=5pt, % Adjust right margin
    backgroundcolor=\color{yellow!0}, % Background color
    tabsize=2, % Tab size
    captionpos=b, % Caption position bottom
    aboveskip=5pt, % Reduce space above the listing
    belowskip=5pt, % Reduce space below the listing
    linewidth=0.9\linewidth, % Set custom line width to less than text width
    escapeinside={(*@}{@*)}, % for escaping to LaTeX
}

\begin{lstlisting}[style=myGrammarStyle, caption= text-2-SQL prompt.]

db_id: concert_singer
db_info: # stadium ( stadium_id , location , name , capacity , highest , lowest , average )
# singer ( singer_id , name , country , song_name , song_release_year , age , is_male )
# concert ( concert_id , concert_name , theme , stadium_id , year )
# singer_in_concert ( concert_id , singer_id )
# concert.stadium_id = stadium.stadium_id
# singer_in_concert.singer_id = singer.singer_id
# singer_in_concert.concert_id = concert.concert_id

question: How many singers do we have? Only output the SQL query. 
SQL:

\end{lstlisting}
\label{prompt:sql_prompt}
\lstdefinestyle{myGrammarStyle}{
    basicstyle=\scriptsize\ttfamily, % Reduce font size
    commentstyle=\color{green},
    keywordstyle=\color{blue},
    stringstyle=\color{orange},
    numbers=left, % Line numbers on left
    numberstyle=\tiny\color{gray}, % Line numbers styling
    breaklines=true, % Wrap long lines
    frame=single, % Frame around the code
    framesep=3pt, % Adjust frame separation
    xleftmargin=5pt, % Adjust left margin
    xrightmargin=5pt, % Adjust right margin
    backgroundcolor=\color{yellow!0}, % Background color
    tabsize=2, % Tab size
    captionpos=b, % Caption position bottom
    aboveskip=5pt, % Reduce space above the listing
    belowskip=5pt, % Reduce space below the listing
    linewidth=0.9\linewidth, % Set custom line width to less than text width
    escapeinside={(*@}{@*)}, % for escaping to LaTeX
    % morekeywords={Output, JSON}, % Add specific words to be highlighted
    % keywordstyle=\color{red}\bfseries, % Style for additional keywords
}

\begin{lstlisting}[style=myGrammarStyle, caption= Example Python prompt from the HumanEval dataset ~\cite{athiwaratkun2023multilingual}]
def has_close_elements(numbers: List[float], threshold: float) -> bool:
        """ Check if in given list of numbers, are any two numbers closer to each other than
        given threshold.
        >>> has_close_elements([1.0, 2.0, 3.0], 0.5)
        False
        >>> has_close_elements([1.0, 2.8, 3.0, 4.0, 5.0, 2.0], 0.3)
        True
        """

\end{lstlisting}
\label{prompt:python_prompt}
\lstdefinestyle{myGrammarStyle}{
    basicstyle=\scriptsize\ttfamily, % Reduce font size
    commentstyle=\color{green},
    keywordstyle=\color{blue},
    stringstyle=\color{orange},
    numbers=left, % Line numbers on left
    numberstyle=\tiny\color{gray}, % Line numbers styling
    breaklines=true, % Wrap long lines
    frame=single, % Frame around the code
    framesep=3pt, % Adjust frame separation
    xleftmargin=5pt, % Adjust left margin
    xrightmargin=5pt, % Adjust right margin
    backgroundcolor=\color{yellow!0}, % Background color
    tabsize=2, % Tab size
    captionpos=b, % Caption position bottom
    aboveskip=5pt, % Reduce space above the listing
    belowskip=5pt, % Reduce space below the listing
    linewidth=0.9\linewidth, % Set custom line width to less than text width
    escapeinside={(*@}{@*)}, % for escaping to LaTeX
}

\begin{lstlisting}[style=myGrammarStyle, caption= Example Go prompt from the HumanEval dataset ~\cite{athiwaratkun2023multilingual}]
package main

import (
	"encoding/json"
	"reflect"
)
// You're an expert Golang programmer
// Check if in given list of numbers, are any two numbers closer to each other than
// given threshold.
// >>> has_close_elements([1.0, 2.0, 3.0], 0.5)
// False
// >>> has_close_elements([1.0, 2.8, 3.0, 4.0, 5.0, 2.0], 0.3)
// True
// 
func has_close_elements (numbers []float64, threshold float64) bool {

\end{lstlisting}
\label{prompt:go_prompt}

\subsection{Grammars Used in the Evaluation}
\label{sec:grammar}

\subsubsection{JSON Grammar}
\  

\lstdefinestyle{myGrammarStyle}{
    basicstyle=\scriptsize\ttfamily, % Reduce font size
    commentstyle=\color{gray},
    keywordstyle=\color{blue},
    stringstyle=\color{orange},
    numbers=left, % Line numbers on left
    numberstyle=\tiny\color{gray}, % Line numbers styling
    breaklines=true, % Wrap long lines
    frame=single, % Frame around the code
    framesep=3pt, % Adjust frame separation
    xleftmargin=5pt, % Adjust left margin
    xrightmargin=5pt, % Adjust right margin
    backgroundcolor=\color{yellow!4}, % Background color
    tabsize=2, % Tab size
    captionpos=b, % Caption position bottom
    aboveskip=5pt, % Reduce space above the listing
    belowskip=5pt, % Reduce space below the listing
    linewidth=0.9\linewidth, % Set custom line width to less than text width
    escapeinside={(*@}{@*)}, % for escaping to LaTeX
}

\begin{lstlisting}[style=myGrammarStyle, caption=JSON Grammar]
?start: value

?value: object
| array
| UNESCAPED_STRING
| SIGNED_NUMBER      -> number
| "true"             -> true
| "false"            -> false
| "null"             -> null

array  : "[" [value ("," value)*] "]"
object : "{" [pair ("," pair)*] "}"
pair   : UNESCAPED_STRING ":" value

UNESCAPED_STRING: /\"[^"]*\"/

DIGIT: "0".."9"
HEXDIGIT: "a".."f"|"A".."F"|DIGIT
INT: DIGIT+
SIGNED_INT: ["+"|"-"] INT
DECIMAL: INT "." INT? | "." INT


_EXP: ("e"|"E") SIGNED_INT
FLOAT: INT _EXP | DECIMAL _EXP?
NUMBER: FLOAT | INT
SIGNED_NUMBER: ["+"|"-"] NUMBER
WS: /[ \t\f\r\n]/+

%ignore WS
\end{lstlisting}
\label{gram:json_grammar}

\subsubsection{SQL Grammar}
\  

\lstdefinestyle{myGrammarStyle}{
    basicstyle=\scriptsize\ttfamily, % Reduce font size
    commentstyle=\color{gray},
    keywordstyle=\color{blue},
    stringstyle=\color{orange},
    numbers=left, % Line numbers on left
    numberstyle=\tiny\color{gray}, % Line numbers styling
    breaklines=true, % Wrap long lines
    frame=single, % Frame around the code
    framesep=3pt, % Adjust frame separation
    xleftmargin=5pt, % Adjust left margin
    xrightmargin=5pt, % Adjust right margin
    backgroundcolor=\color{yellow!4}, % Background color
    tabsize=2, % Tab size
    captionpos=b, % Caption position bottom
    aboveskip=5pt, % Reduce space above the listing
    belowskip=5pt, % Reduce space below the listing
    linewidth=0.9\linewidth, % Set custom line width to less than text width
    escapeinside={(*@}{@*)}, % for escaping to LaTeX
}

\begin{lstlisting}[style=myGrammarStyle, caption=SQL Grammar]

start: set_expr ";"? -> final

set_expr: query_expr
        | set_expr "UNION"i ["DISTINCT"i] set_expr -> union_distinct
        | set_expr "UNION"i "ALL"i set_expr -> union_all
        | set_expr "INTERSECT"i ["DISTINCT"i] set_expr -> intersect_distinct
        | set_expr "EXCEPT"i ["DISTINCT"i] set_expr -> except_distinct
        | set_expr "EXCEPT"i "ALL"i set_expr -> except_all

query_expr: select [ "ORDER"i "BY"i (order_by_expr ",")*  order_by_expr] [ "LIMIT"i limit_count [ "OFFSET"i skip_rows ] ]

select: "SELECT"i [SELECT_CONSTRAINT] [(select_expr ",")*] select_expr "FROM"i [(from_expr ",")*] from_expr [ "WHERE"i where_expr ] [ "GROUP"i "BY"i [(groupby_expr ",")*] groupby_expr ] [ "HAVING"i having_expr] [ "WINDOW"i window_expr ]

where_expr: bool_expression

select_expr.0: expression_math [ "AS"i alias ] -> select_expression

?from_expr: from_item -> from_expression

order_by_expr: order -> order_by_expression

having_expr: bool_expression

groupby_expr: expression -> group_by

window_expr: [window_expr ","] _window_name "AS"i ( window_definition )

from_item: table_name [ "AS"i alias ] -> table
            | join -> join
            | cross_join -> cross_join_expression
            | subquery
table_name: name

subquery: ( "(" (query_expr | join | cross_join) ")" ) [ "AS"i alias ]

cross_join: from_item "CROSS"i "JOIN"i from_item
join: from_item JOIN_EXPR from_item [ "ON"i bool_expression ] -> join_expression

JOIN_EXPR.5: (JOIN_TYPE WS)? "JOIN"i
JOIN_TYPE: "INNER"i | "OUTER"i? | JOIN_DIRECTION (WS "OUTER"i)? | JOIN_DIRECTION
JOIN_DIRECTION: "FULL"i | "LEFT"i | "RIGHT"i

?expression_math: expression_product
               | expression_math "+" expression_product -> expression_add
               | expression_math "-" expression_product -> expression_sub
               | "CASE"i (when_then)+ "ELSE"i expression_math "END"i -> case_expression
               | "CAST"i "(" expression_math "AS"i TYPENAME ")" -> as_type
               | "CAST"i "(" literal "AS"i TYPENAME ")" -> literal_cast
               | AGGREGATION expression_math ")" [window_form] -> sql_aggregation
               | "RANK"i "(" ")" window_form -> rank_expression
               | "DENSE_RANK"i "(" ")" window_form -> dense_rank_expression
               | "COALESCE"i "(" [(expression_math ",")*] expression_math ")" -> coalesce_expression
               | subquery -> subquery_expression

window_form: "OVER"i "(" ["PARTITION"i "BY"i (partition_by ",")* partition_by] ["ORDER"i "BY"i (order ",")* order [ row_range_clause ] ] ")"

partition_by: expression_math

row_range_clause: ( ROWS | RANGE ) frame_extent
frame_extent: frame_between | frame_preceding
frame_between: "BETWEEN"i frame_bound "AND"i frame_bound
frame_bound: frame_preceding | frame_following | "CURRENT"i "ROW"i
frame_preceding: UNBOUNDED PRECEDING | INT_NUMBER PRECEDING
frame_following: UNBOUNDED FOLLOWING | INT_NUMBER FOLLOWING
RANGE: "RANGE"i
ROWS: "ROWS"i
UNBOUNDED: "UNBOUNDED"i
PRECEDING: "PRECEDING"i
FOLLOWING: "FOLLOWING"i

when_then: "WHEN"i bool_expression "THEN"i expression_math
order: expression_math ["ASC"i] -> order_asc
          | expression_math "DESC"i -> order_desc


?expression_product: expression_parens
                  | expression_product "*" expression_parens -> expression_mul
                  | expression_product "/" expression_parens -> expression_div

?expression_parens: expression
                  | "(" expression_parens "*" expression ")" -> expression_mul
                  | "(" expression_parens "/" expression ")" -> expression_div
                  | "(" expression_parens "+" expression ")" -> expression_add
                  | "(" expression_parens "-" expression ")" -> expression_sub

column_name: [name "."] (name | STAR)
?expression: column_name -> column_name
            | literal


SELECT_CONSTRAINT.9: "ALL"i | "DISTINCT"i
TYPENAME:  "object"i
         | "varchar"i
         | "integer"i
         | "int16"i
         | "smallint"i
         | "int32"i
         | "int64"i
         | "int"i
         | "bigint"i
         | "float16"i
         | "float32"i
         | "float64"i
         | "float"i
         | "bool"i
         | "datetime64"i
         | "timestamp"i
         | "time"i
         | "date"i
         | "cateSQLry"i
         | "string"i
AGGREGATION.8: ("SUM("i | "AVG("i | "MIN("i | "MAX("i | "COUNT("i "DISTINCT"i | "COUNT("i)
alias: name -> alias_string
_window_name: name
limit_count: INT_NUMBER -> limit_count
skip_rows: INT_NUMBER
bool_expression: bool_parentheses
                 | bool_expression "AND"i bool_parentheses -> bool_and
                 | bool_expression "OR"i bool_parentheses -> bool_or
bool_parentheses: comparison_type
                 | "(" bool_expression "AND"i comparison_type ")" -> bool_and
                 | "(" bool_expression "OR"i comparison_type ")" -> bool_or
                 | "EXISTS"i subquery -> exists
comparison_type: equals | not_equals | greater_than | less_than | greater_than_or_equal
| less_than_or_equal | between | in_expr | not_in_expr | subquery_in | subquery_not_in | is_null | is_not_null | like_expr | not_like_expr

equals: expression_math "=" expression_math
is_null: expression_math "IS"i "NULL"i
is_not_null: expression_math "IS"i "NOT"i "NULL"i
not_equals: expression_math ("<>" | "!=") expression_math
greater_than: expression_math ">" expression_math
less_than: expression_math "<" expression_math
greater_than_or_equal: expression_math ">=" expression_math
less_than_or_equal: expression_math "<=" expression_math
between: expression_math "BETWEEN"i expression_math "AND"i expression_math

// `LIKE` and `NOT LIKE`
like_expr: expression_math "LIKE"i expression_math
not_like_expr: expression_math "NOT"i "LIKE"i expression_math

// `IN` and `NOT IN`
in_expr: expression_math "IN"i "(" [expression_math ","]* expression_math ")"
subquery_in: expression_math "IN"i subquery
not_in_expr: expression_math "NOT"i "IN"i "(" [expression_math ","]* expression_math ")"
subquery_not_in: expression_math "NOT"i "IN"i subquery

?literal: boolean -> bool
       | number_expr -> number
       | /'([^'])+'|''/ -> string
       | timestamp_expression -> timestamp_expression
boolean: "TRUE"i -> true
       | "FALSE"i -> false
?number_expr: product

?product: INT_NUMBER -> integer
       | FLOAT -> float

INT_NUMBER: /[1-9][0-9]*/

STAR: "*"
window_definition:
timestamp_expression: "NOW"i "(" ")" -> datetime_now
                    | "TODAY"i "(" ")" -> date_today

date: YEAR "-" MONTH "-" DAY
YEAR: /[0-9]{4}/
MONTH: /[0-9]{2}/
DAY: /[0-9]{2}/
time: HOURS ":" MINUTES ":" SECONDS
HOURS: /[0-9]{2}/
MINUTES: /[0-9]{2}/
SECONDS: /[0-9]{2}/
name: CNAME | ESCAPED_STRING

_STRING_INNER: /(?:[^"\\]|\\.)*?/
ESCAPED_STRING: "\"" _STRING_INNER "\""

%import common.CNAME
%import common.WS
%import common.SQL_COMMENT
%import common.WS_INLINE
%import common.FLOAT

%ignore WS
%ignore SQL_COMMENT


\end{lstlisting}
\label{gram:sql_grammar}

\subsubsection{Python Grammar}
\  

\lstdefinestyle{myGrammarStyle}{
    basicstyle=\scriptsize\ttfamily, % Reduce font size
    commentstyle=\color{gray},
    keywordstyle=\color{blue},
    stringstyle=\color{orange},
    numbers=left, % Line numbers on left
    numberstyle=\tiny\color{gray}, % Line numbers styling
    breaklines=true, % Wrap long lines
    frame=single, % Frame around the code
    framesep=3pt, % Adjust frame separation
    xleftmargin=5pt, % Adjust left margin
    xrightmargin=5pt, % Adjust right margin
    backgroundcolor=\color{yellow!4}, % Background color
    tabsize=2, % Tab size
    captionpos=b, % Caption position bottom
    aboveskip=5pt, % Reduce space above the listing
    belowskip=5pt, % Reduce space below the listing
    linewidth=0.9\linewidth, % Set custom line width to less than text width
    escapeinside={(*@}{@*)}, % for escaping to LaTeX
}

\begin{lstlisting}[style=myGrammarStyle, caption=Python Grammar]
single_input: _NL | simple_stmt | compound_stmt _NL
start: (_NL | stmt)*
eval_input: testlist _NL*

!decorator: "@" dotted_name [ "(" [arguments] ")" ] _NL
decorators: decorator+
decorated: decorators (classdef | funcdef | async_funcdef)

async_funcdef: "async" funcdef
funcdef: "def" NAME "(" parameters? ")" ["->" test] ":" ( suite | _NL)

!parameters: paramvalue ("," paramvalue)* ["," [ starparams | kwparams]]
          | starparams
          | kwparams
starparams: "*" typedparam? ("," paramvalue)* ["," kwparams]
kwparams: "**" typedparam

?paramvalue: typedparam ["=" test]
?typedparam: NAME [":" test]

!varargslist: (vfpdef ["=" test] ("," vfpdef ["=" test])* ["," [ "*" [vfpdef] ("," vfpdef ["=" test])* ["," ["**" vfpdef [","]]] | "**" vfpdef [","]]]
  | "*" [vfpdef] ("," vfpdef ["=" test])* ["," ["**" vfpdef [","]]]
  | "**" vfpdef [","])

vfpdef: NAME

?stmt: (simple_stmt | compound_stmt ) ["eof"]
!?simple_stmt: small_stmt (";" small_stmt)* [";"] _NL
?small_stmt: (expr_stmt | del_stmt | pass_stmt | flow_stmt | import_stmt | global_stmt | nonlocal_stmt | assert_stmt)
?expr_stmt: testlist_star_expr (annassign | augassign (yield_expr|testlist)
         | ("=" (yield_expr|testlist_star_expr))*)
annassign: ":" test ["=" test]
!?testlist_star_expr: (test|star_expr) ("," (test|star_expr))* [","]
!augassign: ("+=" | "-=" | "*=" | "@=" | "/=" | "%=" | "&=" | "|=" | "^=" | "<<=" | ">>=" | "**=" | "//=")
// For normal and annotated assignments, additional restrictions enforced by the interpreter
del_stmt: "del" exprlist
pass_stmt: "pass"
flow_stmt: break_stmt | continue_stmt | return_stmt | raise_stmt | yield_stmt
break_stmt: "break"
continue_stmt: "continue"
return_stmt: "return" [testlist]
yield_stmt: yield_expr
raise_stmt: "raise" [test ["from" test]]
import_stmt: import_name | import_from
import_name: "import" dotted_as_names
// note below: the ("." | "...") is necessary because "..." is tokenized as ELLIPSIS
import_from: "from" (dots? dotted_name | dots) "import" ("*" | "(" import_as_names ")" | import_as_names)
!dots: "."+
import_as_name: NAME ["as" NAME]
dotted_as_name: dotted_name ["as" NAME]
!import_as_names: import_as_name ("," import_as_name)* [","]
dotted_as_names: dotted_as_name ("," dotted_as_name)*
dotted_name: NAME ("." NAME)*
global_stmt: "global" NAME ("," NAME)*
nonlocal_stmt: "nonlocal" NAME ("," NAME)*
assert_stmt: "assert" test ["," test]

compound_stmt: if_stmt | while_stmt | for_stmt | try_stmt | with_stmt | funcdef | classdef | decorated | async_stmt
async_stmt: "async" (funcdef | with_stmt | for_stmt)
if_stmt: "if" test ":" suite ("elif" test ":" suite)* ["else" ":" suite]
while_stmt: "while" test ":" suite ["else" ":" suite]
for_stmt: "for" exprlist "in" testlist ":" suite ["else" ":" suite]
try_stmt: ("try" ":" suite ((except_clause ":" suite)+ ["else" ":" suite] ["finally" ":" suite] | "finally" ":" suite))
with_stmt: "with" with_item ("," with_item)*  ":" suite
with_item: test ["as" expr]
// NB compile.c makes sure that the default except clause is last
except_clause: "except" [test ["as" NAME]]
suite: simple_stmt | _NL _INDENT stmt+ _DEDENT

?test: or_test ["if" or_test "else" test] | lambdef
?test_nocond: or_test | lambdef_nocond
lambdef: "lambda" [varargslist] ":" test
lambdef_nocond: "lambda" [varargslist] ":" test_nocond
?or_test: and_test ("or" and_test)*
?and_test: not_test ("and" not_test)*

?not_test: "not" not_test -> not
         | comparison
?comparison: expr (_comp_op expr)*
star_expr: "*" expr
?expr: xor_expr ("|" xor_expr)*
?xor_expr: and_expr ("^" and_expr)*
?and_expr: shift_expr ("&" shift_expr)*
?shift_expr: arith_expr (_shift_op arith_expr)*
?arith_expr: term (_add_op term)*
?term: factor (_mul_op factor)*
?factor: _factor_op factor | power

!_factor_op: "+"|"-"|"~"
!_add_op: "+"|"-"
!_shift_op: "<<"|">>"
!_mul_op: "*"|"@"|"/"|"%"|"//"
// <> isn't actually a valid comparison operator in Python. It's here for the
// sake of a __future__ import described in PEP 401 (which really works :-)
!_comp_op: "<"|">"|"=="|">="|"<="|"<>"|"!="|"in"|"not" "in"|"is"|"is" "not"

?power: await_expr ["**" factor]
!await_expr: ["await"] atom_expr

?atom_expr: atom_expr "(" [arguments] ")"      -> funccall
          | atom_expr "[" subscriptlist "]"  -> getitem
          | atom_expr "." NAME               -> getattr
          | atom

?atom: "(" [yield_expr|testlist_comp] ")" -> tuple
     | "[" [testlist_comp] "]"  -> list
     | "{" [dictorsetmaker] "}" -> dict
     | NAME -> var
     | number | string+
     | "(" test ")"
     | "..." -> ellipsis
     | "None"    -> const_none
     | "True"    -> const_true
     | "False"   -> const_false

!?testlist_comp: (test|star_expr) [comp_for | ("," (test|star_expr))+ [","] | ","]
!subscriptlist: subscript ("," subscript)* [","]
subscript: test | [test] ":" [test] [sliceop]
sliceop: ":" [test]
!exprlist: (expr|star_expr) ("," (expr|star_expr))* [","]
!testlist: test ("," test)* [","]
!dictorsetmaker: ( ((test ":" test | "**" expr) (comp_for | ("," (test ":" test | "**" expr))* [","])) | ((test | star_expr) (comp_for | ("," (test | star_expr))* [","])) )

classdef: "class" NAME ["(" [arguments] ")"] ":" suite
!arguments: argvalue ("," argvalue)*  ["," [ starargs | kwargs]]
         | starargs
         | kwargs
         | test comp_for

!starargs: "*" test ("," "*" test)* ("," argvalue)* ["," kwargs]
kwargs: "**" test

?argvalue: test ["=" test]

comp_iter: comp_for | comp_if | async_for
async_for: "async" "for" exprlist "in" or_test [comp_iter]
comp_for: "for" exprlist "in" or_test [comp_iter]
comp_if: "if" test_nocond [comp_iter]

// not used in grammar, but may appear in "node" passed from Parser to Compiler
encoding_decl: NAME

yield_expr: "yield" [yield_arg]
yield_arg: "from" test | testlist


number: DEC_NUMBER | HEX_NUMBER | OCT_NUMBER | FLOAT_NUMBER 

string: STRING | LONG_STRING 

// Tokens
NAME: /[a-zA-Z_]\w*/
COMMENT: /#.*(\n[\t ]*)+/ | LONG_STRING
_NL: ( /(\r?\n[\t ]*)+/ | COMMENT)+

LONG_STRING: /[ubf]?r?("""(?<!\\).*?"""|'''(?<!\\).*?''')/is

DEC_NUMBER: /0|[1-9]\d*/i
HEX_NUMBER.2: /0x[\da-f]*/i
OCT_NUMBER.2: /0o[0-7]*/i
BIN_NUMBER.2 : /0b[0-1]*/i
FLOAT_NUMBER.2: /((\d+\.\d*|\.\d+)(e[-+]?\d+)?|\d+(e[-+]?\d+))/i

%import common.WS_INLINE

%declare _INDENT _DEDENT
%ignore WS_INLINE
%ignore /\\[\t \f]*\r?\n/   // LINE_CONT
%ignore COMMENT

\end{lstlisting}
\label{gram:python_grammar}

\newpage
\subsubsection{Go Grammar}
\  

\lstdefinestyle{myGrammarStyle}{
    basicstyle=\scriptsize\ttfamily, % Reduce font size
    commentstyle=\color{gray},
    keywordstyle=\color{blue},
    stringstyle=\color{orange},
    numbers=left, % Line numbers on left
    numberstyle=\tiny\color{gray}, % Line numbers styling
    breaklines=true, % Wrap long lines
    frame=single, % Frame around the code
    framesep=3pt, % Adjust frame separation
    xleftmargin=5pt, % Adjust left margin
    xrightmargin=5pt, % Adjust right margin
    backgroundcolor=\color{yellow!4}, % Background color
    tabsize=2, % Tab size
    captionpos=b, % Caption position bottom
    aboveskip=5pt, % Reduce space above the listing
    belowskip=5pt, % Reduce space below the listing
    linewidth=0.9\linewidth, % Set custom line width to less than text width
    escapeinside={(*@}{@*)}, % for escaping to LaTeX
}

\begin{lstlisting}[style=myGrammarStyle, caption=Go Grammar]

start: package_clause eos (import_decl eos)* ((function_decl | method_decl | declaration) eos "eoc"?)*

package_clause: "package" NAME

import_decl: "import"  (import_spec | "(" (import_spec eos)* ")")

import_spec: ("." | NAME)? import_path

import_path: string_

declaration: const_decl | type_decl | var_decl

const_decl: "const"  (const_spec | "(" (const_spec eos)* ")")

const_spec: identifier_list (type_? "=" expression_list)?

identifier_list: NAME ("," NAME)*

expression_list: expression ("," expression)*

type_decl: "type" (type_spec | "(" (type_spec eos)* ")")

type_spec: alias_decl | type_def

alias_decl : NAME "=" type_

type_def : NAME type_parameters? type_

type_parameters : "[" type_parameter_decl ("," type_parameter_decl)* "]"

type_parameter_decl : identifier_list type_element

type_element : type_term ("|" type_term)*

type_term : "~"? type_

// Function declarations

function_decl: "func" NAME type_parameters? signature ("{" statement_list? ("}" | "eof"))? 

method_decl: "func" receiver NAME signature block?

receiver: parameters

var_decl: "var" (var_spec | "(" (var_spec eos)* ")")

var_spec: identifier_list (type_ ("=" expression_list)? | "=" expression_list)

block: "{" statement_list? "}"

statement_list: ((";"? | EOS?) statement eos)+

statement: declaration | labeled_stmt | simple_stmt | go_stmt | return_stmt | break_stmt | continue_stmt | goto_stmt | fallthrough_stmt | block | if_stmt | switch_stmt | select_stmt | for_stmt | defer_stmt

simple_stmt: send_stmt | inc_dec_stmt | assignment | expression | short_var_decl

send_stmt: expression  "<-" expression

inc_dec_stmt: expression ("++" | "--")

assignment: expression assign_op expression | expression_list "=" expression_list

assign_op: "+=" | "-=" | "|=" | "^=" | "*=" | "/=" | "%=" | "<<=" | ">>=" | "&=" | "&^="

short_var_decl: expression_list ":=" expression_list

labeled_stmt: NAME ":" statement?

return_stmt: "return" expression_list?

break_stmt: "break" NAME?

continue_stmt: "continue" NAME?

goto_stmt: "goto"  NAME

fallthrough_stmt: "fallthrough" 

defer_stmt: "defer" expression

if_stmt: "if"  ( expression | eos expression | simple_stmt eos expression) block ("else" (if_stmt | block))?

switch_stmt: expr_switch_stmt | type_switch_stmt

expr_switch_stmt: "switch"  (expression? | simple_stmt? eos expression?) "{" expr_case_clause* "}"

expr_case_clause: expr_switch_case ":" statement_list?

expr_switch_case: "case" expression_list | "default"

type_switch_stmt: "switch"  ( type_switch_guard | eos type_switch_guard | simple_stmt eos type_switch_guard) "{" type_case_clause* "}"

type_switch_guard: (NAME ":=")? NAME "." "(" "type"  ")"

type_case_clause: type_switch_case ":" statement_list?

type_switch_case: "case" type_list | "default"

type_list: (type_ | "nil" ) ("," (type_ | "nil"  ))*

select_stmt: "select" "{" comm_clause* "}"

comm_clause: comm_case ":" statement_list?

comm_case: "case" (send_stmt | recv_stmt) | "default"

recv_stmt: (expression_list "=" | identifier_list ":=")? expression

for_stmt: "for" [for_clause] block

for_clause: simple_stmt (eos expression eos simple_stmt)? | range_clause

range_clause: (expression_list "=" | expression_list ":=") "range"  expression

go_stmt: "go"expression

type_: literal_type | var_or_type_name type_args? | "(" type_ ")" 

channel_type

type_args : "--"

var_or_type_name: NAME "." NAME | NAME | NAME "." "(" type_ ")"

array_type: "[" array_length "]" element_type

array_length: expression

element_type: type_

pointer_type: "*" type_

interface_type: "interface" "{" ((method_spec | type_element ) eos)* "}"

slice_type: "[" "]" element_type

// It's possible to replace `type` with more restricted type_lit list and also pay attention to nil maps
map_type: "map" "[" type_ "]" element_type

channel_type: ("'chan"  | "chan"   "<-" |  "<-" "chan" ) element_type

method_spec: NAME parameters result | NAME parameters

function_type: "func" signature

signature: parameters result?

result: parameters | type_

parameters: "(" parameter_decl ("," parameter_decl)* ","? ")" | "(" ")" 

// a comma-separated list of either (a) name, (b) type, or (c) name and type 
// parameter_decl: identifier_list? "..."? type_


// Although following is overapproximate it's an easy way to avoid reduce/reduce conflicts
parameter_decl: (type_ | "..."? type_ | NAME type_)


expression: primary_expr 
            | ("+" | "-" | "!" | "^" | "*" | "&" | "<-") expression 
            | expression ("*" | "/" | "%" | "<<" | ">>" | "&" | "&^") expression 
            | expression ("+" | "-" | "|" | "^") expression 
            | expression ("==" | "!=" | "<" | "<=" | ">" | ">=") expression 
            | expression "&&" expression 
            | expression "||" expression

primary_expr: operand | primary_expr ("." (NAME | "(" type_ ")") | index | slice_ | arguments) | type_

// Giving operand higher precedence than type_ is a hack to avoid reduce/reduce conflicts
operand.3: literal | NAME | "(" expression ")" // removed NAME type_args?

literal: basic_lit | composite_lit | function_lit

basic_lit: "nil" | integer | string_ | FLOAT_LIT | CHAR_LIT

integer: DECIMAL_LIT | BINARY_LIT | OCTAL_LIT | HEX_LIT

DECIMAL_LIT: /0|[1-9]\d*/i
HEX_LIT.2: /0x[\da-f]*/i
OCTAL_LIT.2: /0o[0-7]*/i
BINARY_LIT.2 : /0b[0-1]*/i
FLOAT_LIT.2: /((\d+\.\d*|\.\d+)(e[-+]?\d+)?|\d+(e[-+]?\d+))/i
CHAR_LIT: /'.'/i

composite_lit: literal_type literal_value

literal_type: struct_type | array_type | "[" "..." "]" element_type | slice_type | map_type  | "interface" "{" "}"

literal_value: "{" (element_list ","?)? "}"

element_list: keyed_element ("," keyed_element)*

keyed_element: (key ":")? element

key: expression | literal_value

element: expression | literal_value

struct_type: "struct" "{" (field_decl eos)* "}"

field_decl: (identifier_list type_ | embedded_field) string_?

string_: RAW_STRING_LIT | INTERPRETED_STRING_LIT

RAW_STRING_LIT: /`.*?`/
INTERPRETED_STRING_LIT: /".*?"/i

embedded_field: "*"? (NAME "." NAME | NAME)  type_args?

function_lit: "func" signature block // function

index: "[" expression "]"

slice_: "[" ( expression? ":" expression? | expression? ":" expression ":" expression) "]"

type_assertion: "." "(" type_ ")"

arguments: "(" ( expression_list? "..."? ","?)? ")"

eos: ";" | EOS // | {this.closingBracket()}?
	
NAME : /[a-zA-Z_]\w*/
EOS: _NL | ";" | "/*' .*? '*/"       

COMMENT : /\/\/[^\n]*/ 
_NL: ( /(\r?\n[\t ]*)+/ | COMMENT)+

%ignore /[\t ]/
%ignore /\\[\t \f]*\r?\n/   // LINE_CONT

\end{lstlisting}
\label{gram:go_grammar}

\end{document}